\theoremstyle{plain}
\newtheorem{lemma}{Lemma}[section]
\newtheorem{proposition}{Proposition}[section]
\theoremstyle{definition}
\newtheorem{definition}{Definition}[section]
\newtheorem{example}{Example}[section]
\newtheorem{remark}{Remark}[section]
\newif\if@borderstar
\def\bordermatrix{\@ifnextchar*{%
		\@borderstartrue\@bordermatrix@i}{\@borderstarfalse\@bordermatrix@i*}%
}
\def\@bordermatrix@i*{\@ifnextchar[{\@bordermatrix@ii}{\@bordermatrix@ii[()]}}
\def\@bordermatrix@ii[#1]#2{%
	\begingroup
	\m@th\@tempdima8.75\p@\setbox\z@\vbox{%
		\def\cr{\crcr\noalign{\kern 2\p@\global\let\cr\endline }}%
		\ialign {$##$\hfil\kern 2\p@\kern\@tempdima &\thinspace %
			\hfil $##$\hfil &&\quad\hfil $##$\hfil\crcr\omit\strut %
			\hfil\crcr\noalign{\kern -\baselineskip}#2\crcr\omit %
			\strut\cr}}%
	\setbox\tw@\vbox{\unvcopy\z@\global\setbox\@ne\lastbox}%
	\setbox\tw@\hbox{\unhbox\@ne\unskip\global\setbox\@ne\lastbox}%
	\setbox\tw@\hbox{%
		$\kern\wd\@ne\kern -\@tempdima\left\@firstoftwo#1%
		\if@borderstar\kern2pt\else\kern -\wd\@ne\fi%
		\global\setbox\@ne\vbox{\box\@ne\if@borderstar\else\kern 2\p@\fi}%
		\vcenter{\if@borderstar\else\kern -\ht\@ne\fi%
			\unvbox\z@\kern-\if@borderstar2\fi\baselineskip}%
		\if@borderstar\kern-2\@tempdima\kern2\p@\else\,\fi\right\@secondoftwo#1$%
	}\null \;\vbox{\kern\ht\@ne\box\tw@}%
	\endgroup
}
\begin{document}
\begin{frontmatter}	
	\title{$(O,G)$-granular variable precision fuzzy rough sets based on overlap and grouping functions}
	\author{Wei Li\fnref{label1}}
	\ead{weili\_1998@163.com}
	\author{Bin Yang\fnref{label1}\corref{cor1}}
	\ead{binyang0906@whu.edu.cn,binyang0906@nwsuaf.edu.cn}
	\author{Junsheng Qiao\fnref{label2}}
	\ead{jsqiao@nwnu.edu.cn}
	\address[label1]{College of Science, Northwest A \& F University, Yangling 712100, PR China}
	\address[label2]{College of Mathematics and Statistics, Northwest Normal University, Lanzhou 730070, PR China}
	\cortext[cor1]{Corresponding author.}
	
	\begin{abstract}
		Since Bustince et al. introduced the concepts of overlap and grouping functions, these two types of aggregation functions have attracted a lot of interest in both theory and applications.
		In this paper, the depiction of $(O,G)$-granular variable precision fuzzy rough sets ($(O,G)$-GVPFRSs for short) is first given based on overlap and grouping functions.
		Meanwhile, to work out the approximation operators efficiently, we give another expression of upper and lower approximation operators by means of fuzzy implications and co-implications.
	    Furthermore, starting from the perspective of construction methods, $(O,G)$-GVPFRSs are represented under diverse fuzzy relations.
	     Finally, some conclusions on the granular variable precision fuzzy rough sets (GVPFRSs for short) are extended to $(O,G)$-GVPFRSs under some additional conditions.
	\end{abstract}
	
	\begin{keyword}
		Grouping functions;
		Overlap functions;
		Granular variable precision fuzzy rough sets;	
		Fuzzy rough sets;
	\end{keyword}	
\end{frontmatter}

\section{Introduction}\label{section0}
\subsection{Brief review of fuzzy rough sets}
Rough set, as a way to portray uncertainty problems, was originally proposed by Polish mathematician Pawlak in 1982 \cite{Pawlak1982Roughset,Pawlak1991RoughSet}, and it has been extensively developed in the fields of knowledge discovery \cite{Skowron1998RoughSet} and  data mining.
Rough set theory uses indistinguishable relations to divide the knowledge of research domain, thus forming a system of knowledge representation that approximates an arbitrary subset of the universe by defining upper and lower approximation operators \cite{Chen2006Roughapproximations}.
As a generalization of the classical theory, Zadeh introduced the fuzzy set theory  \cite{Zadeh1965Fuzzysets} in 1965, where objects can be owned by different sets with different membership functions.
 Since rough sets are defined based on equivalence relations, they are mainly used to process qualitative (discrete) data \cite{Jensen2004Fuzzy-roughattributes}, and there are greater restrictions on the processing of real-valued data sets in the database. In particular, fuzzy sets can solve this problem by dealing with fuzzy concepts. Therefore, complementing the features of rough sets and fuzzy sets with each other constitutes a new research hotspot.

 In 1990, Dubois and Prade \cite{Dubois1990Roughfuzzy} described fuzzy rough sets, which is the combination of two uncertainty models, and then extended the fundamental properties to fuzzy rough sets.
 As another innovation of rough set,
 Ziarko presented the variable precision rough set \cite{Ziarko1993Variableprecision}, which mainly solved the classification problem of uncertain and inaccurate information with an effective error-tolerance competence.
 More details about variable precision rough sets can refer to  \cite{Mieszkowicz-Rolka2004Remarkson,Mieszkowicz-Rolka2004Variable precision,Zhang2008Variableprecision}.
 In addition, since the upper and lower approximation operators of fuzzy rough sets are defined according to membership functions, while rough sets are described based on the union of some sets, there exists significant difference in the granular structure of the two.
 To overcome this limitation, Chen et al. \cite{Chen2011Granularcomputing} explored the concept and related properties of granular fuzzy sets based on fuzzy similarity relations. Furthermore, from the perspective of granular computing, the granular fuzzy set is used to characterize the granular structure of upper and lower approximations.
 However, the above model cannot tolerate even small errors and is not suited to handle uncertain information well.
 Some extended fuzzy rough sets are applied to solve these problem, but some studies still have problems in dealing with mislabeled samples (see, e.g., \cite{Hu2010Softfuzzy,Hu2012Onrobust,Zhao2009Themodel}), and others have only considered the relative error cases. \cite{FernandezSalido2003Roughset,Mieszkowicz-Rolka2004Remarkson}).

To fill these loopholes, the model of variable precision $(\theta, \sigma)$-fuzzy rough sets over fuzzy granules were presented by Yao et al. \cite{Yao2014Anovelvariable}. However,
the above model is based on fuzzy $*$-similarity relation, satisfying reflexivity, symmetry and $*$-transitivity, which is too strict to facilitate generalized conclusions.
Thus, Wang and Hu \cite{Wang2015Granularvariable} studied the GVPFRSs and then the equivalent expressions of the approximation operators are given with fuzzy implications and co-implications over arbitrary fuzzy relations.
Subsequently, they gave the properties of GVPFRSs on different fuzzy relations.
 In addition, compared with unit interval, the complete lattice has a wider structure, so Qiao and Hu expanded the content of \cite{Wang2015Granularvariable} and \cite{Yao2014Anovelvariable}, and further discussed the concept of granular variable precision $L$-fuzzy rough sets based on residuated lattices.

 In fact, both \cite{Qiao2016Granularvariable} and \cite{Wang2015Granularvariable} are based on $t$-norm ($t$-conorm), which satisfying associative, commutative, increasing in each argument and has a identity element 1 (resp. 0). However, there are various applications \cite{Fodor1995Nonstandardconjunctions,Bustince2010Overlap,Bustince2012Grouping} in which the associativity property of the $t$-norm (resp. $t$-conorm) is not necessary, such as classification problems, face recognition and image processing.

\subsection{Brief analysis of overlap and grouping functions}
Bustince et al. described the axiomatic definitions of overlap and grouping functions \cite{Bustince2009Overlapindex,Bustince2012Grouping}, which stem from some practical problems in image processing and classification.
In fact, in some situations, the associativity of $t$-norm and $t$-conorm usually does not work. Therefore, as two types of noncombining fuzzy logic connectives, overlap and grouping functions have made rapid development in theoretical research and practical applications.

In theory, there exists many studies involving overlap and grouping functions, such as crucial properties \cite{Bedregal2013Newresults,Dimuro2014Archimedeanoverlap,Wang2019Themodularity}, corresponding implications \cite{Dimuro2014OnGNimplications,Dimuro2015Onresidual,Ti2018OnONcoimplications}, additive generator pairs \cite{Dimuro2016Onadditive}, interval overlap functions and grouping functions \cite{Bedregal2017Generalized interval-valued,Qiao2017Oninterval}, distributive equations \cite{Liu2020New results,Zhang2020ondistributive,Zhang2021onthedistributivity} and concept extensions \cite{DeMiguel2019Generaloverlap,Zhou2021migrativityproperties}.
From an application point of view, overlap and grouping functions can find interesting applications in  classifications \cite{Lucca2017CCintegrals,Paternain2016Capacitiesand}, image processing \cite{Bustince2007Imagethresholding,Bustince2010Overlap,Jurio2013Someproperties}, fuzzy community detection problems \cite{Bustince2007Imagethresholding} and decision making \cite{Bustince2012Grouping,Elkano2018Consensusvia}.

\subsection{The motivation of this paper}
\begin{figure}[h]
	\centering
	\includegraphics[width=12cm]{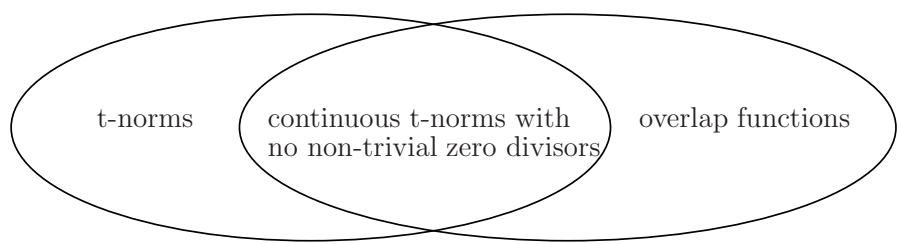}\\
	\caption{The relationship between $t$-norms and overlap functions \cite{Qiao2019Ondistributive}}\label{f1}
\end{figure}

In~\cite{Adamatti2014InterdisciplinaryApplications}, the authors have pointed out
that $O: [0, 1]^{2}\longrightarrow[0, 1]$ is an associative overlap function (resp. grouping
function) if and only if $O$ is a continuous and positive $t$-norm (resp. $t$-conorm).
 On the other side, we note that overlap and grouping functions can be considered as another extension of  classical logical connective $\land$  and $\vee$ on the unit interval, which differ from $t$-norms and $t$-conorms.
 Hence, we can use them to replace the classical logical operators and then define the granular variable precision approximation operators. Meanwhile, from the application aspect, the study of fuzzy rough sets based on overlap and grouping functions has a pivotal role in practical problems.
Therefore, based on aforementioned consideration, and as a supplement of the GVPFRSs \cite{Wang2015Granularvariable},
this paper continues the studies in $(O,G)$-GVPFRSs based on overlap and grouping functions instead of $t$-norm and $t$-conorm. It should be pointed out that the present paper further enriches the application of overlap and grouping functions. In addition, it makes the research on fuzzy rough sets more complete.

The rest of this paper is arranged as follows. Section 2 enumerates some fundamental concepts that are necessary to understand this paper.
Section 3 proposes the $(O,G)$-GVPFRSs with general fuzzy relations and gives an alternative expression for efficient computation of the approximation operators. Furthermore, we study the $(O,G)$-GVPFRSs under the conditions of crisp relations and crisp sets and draw the corresponding conclusions.
Section 4 represents the $(O,G)$-GVPFRSs on diverse fuzzy relations. In particular, some special conclusions are given under some additional conditions. Section 5, conclusions on our research are given.

\section{Preliminaries}\label{section2}
In this section, we recapitulate some fundamental notions which shall be used in the sequel.

\begin{definition}\label{d:overlap}(\cite{Bustince2010Overlap})
	An overlap function is a binary function $O: [0, 1]^{2}\longrightarrow[0, 1]$ which satisfies the following conditions for all $x,y \in [0,1]$:
	\begin{flushleft}
		(O1) $O(x, y) = O(y, x)$;\\
		(O2) $O(x, y)=0$ iff $xy=0$;\\
		(O3) $O(x, y)=1$ iff $xy=1$;\\
		(O4) $O$ is non-decreasing;\\
		(O5) $O$ is continuous.
	\end{flushleft}

	\begin{flushleft}
		\qquad Furthermore, an overlap function $O$ fulfills the exchange principle (\cite{Dimuro2015Onresidual}) if
		
		(O6) $\forall x,y,u\in [0,1]: O(x,O(y,u))=O(y,O(x,u)).$
	\end{flushleft}
\end{definition}

\begin{definition}\label{d:grouping}(\cite{Bustince2012Grouping})
	A grouping function is a binary function $G: [0, 1]^{2}\longrightarrow[0, 1]$ which satisfies the following conditions for all $x,y \in [0,1]$:
	\begin{flushleft}
		(G1) $G(x, y) = G(y, x)$;\\
		(G2) $G(x, y)=0$ iff $x=y=0$;\\
		(G3) $G(x, y)=1$ iff $x=1$ or $y=1$;\\
		(G4) $G$ is non-decreasing;\\
		(G5) $G$ is continuous.
	\end{flushleft}
	
	\begin{flushleft}
		\qquad Furthermore, a grouping function $G$ fulfills the exchange principle (\cite{Dimuro2015Onresidual}) if
		
		(G6) $\forall x,y,u\in [0,1]: G(x,G(y,u))=G(y,G(x,u)).$
	\end{flushleft}
\end{definition}

\begin{remark}\label{exchange property &associative}(\cite{Dimuro2015Onresidual})
	Notice that a commutative function $H:[0,1]^2\longrightarrow [0,1]$ is associative if and only if $H$ satisfies the exchange principle.
	It is obvious that an overlap function $O$ (resp. a grouping function $G$) is associative if and only if it satisfies (O6) (resp. (G6)).
\end{remark}

\begin{remark}\label{exchange property &identity}(\cite{Dimuro2015Onresidual,Dimuro2016Onadditive})
	Suppose overlap function $O$ satisfies (O6), then 1 is the identity element of $O$, similarly, when a grouping function $G$ satisfies (G6), then 0 is the identity element of $G$.
\end{remark}

 Next, some common overlap and grouping functions are listed in  \cite{Bedregal2013Newresults,Dimuro2016Onadditive}.
\begin{example}
	
	\begin{enumerate}[(1)]
		\item Any continuous $t$-norm with no non-trivial zero divisors is an overlap function.
		\item The function $O_{p}: [0, 1]^{2}\longrightarrow[0, 1]$ given by $$O_{p}(x, y)=x^{p}y^{p}$$ is an overlap function for any $p>0$ and $p\neq 1$. Since it neither satisfies the associative law nor takes $1$ as identity element, it is not a $t$-norm.
		\item The function $O_{DB}: [0, 1]^{2}\longrightarrow[0, 1]$ given by
		$$O_{DB}=\left\{
		\begin{array}{ll}
		\frac{2xy}{x+y}, & \hbox{if $x+y\neq 0$,} \\
		0, & \hbox{if $x+y=0$}
		\end{array}
		\right.$$
		is an overlap function.
		\item Any continuous $t$-conorm with no non-trivial one divisors is a grouping function.
		\item The function $G_{p}: [0, 1]^{2}\longrightarrow[0, 1]$ given by $$G_{p}(x, y)=1-(1-x)^{p}(1-y)^{p}$$ is a grouping function for $p>1$.
		Since it neither satisfies the associative law nor takes $0$ as identity element, it is not a $t$-conorm.
	\end{enumerate}
\end{example}

In the following, we give the definitions of fuzzy implication and fuzzy co-implication on the basis of overlap and grouping function.

A fuzzy implication $I_{O}: [0, 1]^{2}\longrightarrow[0, 1]$ given by
$$I_{O}(x, y)=\max\{z\in[0, 1]: O(x, z)\leq y\}$$
for all $x, y\in[0, 1]$. In~\cite{Dimuro2015Onresidual},
Dimuro et al. have proved $O$ and $I_{O}$
form an adjoint pair, if they satisfy the residuation property:
$$\forall x, y, u\in [0, 1]: O(x, u)\leq y\Leftrightarrow I_{O}(x, y)\geq u.$$

Furthermore, $I_O$ satisfies the exchange principle \cite{Dimuro2015Onresidual} if and only if
$$\forall x,y,z\in [0,1],\ I_O(x,I_O(y,z))=I_O(y,I_O(x,z)).$$

Fuzzy implication $I_{O}$ was introduced in \cite{Dimuro2015Onresidual} and fuzzy co-implication $I^{G}$ was discussed in \cite{Baets1997Coimplicators}. Furthermore, since $O$ and $G$ are dual w.r.t. $N$, we can deduce the properties of fuzzy co-implication $I^{G}$ easily.

A fuzzy co-implication $I^{G}: [0, 1]^{2}\longrightarrow[0, 1]$ given by
$$I^{G}(x, y)=\min\{z\in[0, 1]: y\leq G(x, z)\}$$
for all $x, y\in[0, 1]$. Similarly, the following hold:
$$\forall x, y, u\in[0, 1]: y\leq G(x, u)\Leftrightarrow I^{G}(x, y)\leq u.$$

Furthermore, $I^G$ satisfies the exchange principle if and only if
$$\forall x,y,z\in [0,1],\ I^G(x,I^G(y,z))=I^G(y,I^G(x,z)).$$

If $O$ and $G$ are dual w.r.t. $N$,
then for all $x, y\in[0, 1]$,
\begin{align*}
I_{O}(x, y)&=N(I^{G}(N(x), N(y))), \\
I^{G}(x, y)&=N(I_{O}(N(x), N(y))).
\end{align*}
According to the definition of $I_O$ that for all $x, y\in[0, 1]$,
\begin{align*}
I_{O}(x, y)&=\max\{z\in[0, 1]: O(x, z)\leq y\}\\
&=\max\{z\in[0, 1]: N(G(N(x), N(z)))\leq y\}\\
&=\max\{z\in[0, 1]: G(N(x), N(z))\geq N(y)\}\\
&=\max\{N(z)\in [0, 1]: G(N(x), z)\geq N(y)\}\\
&=N(\min\{z\in[0, 1]: G(N(x), z)\geq N(y)\})\\
&=N(I^{G}(N(x), N(y))).
\end{align*}
Similarly, the following equation can be obtained.
\begin{align*}
I^{G}(x, y)&=\min\{z\in[0, 1]: y\leq G(x, z)\}\\
&=\min\{z\in[0, 1]: y\leq N(O(N(x), N(z)))\}\\
&=\min\{z\in[0, 1]: N(y)\geq O(N(x), N(z))\}\\
&=\min\{N(z)\in [0, 1]: N(y)\geq O(N(x), z)\}\\
&=N(\max\{z\in[0, 1]: N(y)\geq O(N(x), z)\})\\
&=N(I_{O}(N(x), N(y))).
\end{align*}

\begin{remark}(\cite{Dimuro2015Onresidual})\label{r:2.3}
	$I_O$ satisfies the exchange property if and only if $O$ satisfies \textnormal{(O6)}, similarly, $I^G$ satisfies the exchange property if and only if $G$ satisfies \textnormal{(G6)}.
\end{remark}

\begin{lemma}\label{l:nocondition}(\cite{Qiao2021Onfuzzy})
	Let $x,y,z\in [0,1]$ and $\{x_i\}_{i\in I}\subseteq [0,1]$. Then
	\begin{enumerate}[(1)]
		\item 
		$O(x,I_O(x,y))\le y\ and\ y\le G((I^G(x,y),x))$;
		\item 
		$I_O(y,(\bigwedge_{i\in I}x_i))=\bigwedge_{i\in I}(I_O(y,x_i))\ and\ I^G(y,(\bigvee_{i\in I}x_i))=\bigvee_{i\in I}I^G(y,x_i)$;
		\item 
		$I_O(y,\bigvee_{i\in I}x_i)=\bigvee_{i\in I}I_O(y,x_i)\ and\ I^G(y,\bigwedge_{i\in I}x_i)=\bigwedge_{i\in I}I^G(y,x_i)$;
		\item 
		$I_O(\bigvee_{i\in \Lambda}x_i,y)=\bigwedge_{i\in \Lambda}I_O(x_i,y)\ and\ I^G(\bigwedge_{i\in \Lambda}x_i,y)=\bigvee_{i\in \Lambda}I^G(x_i,y)$;
		\item 
		$I_O(x,I_O(y,z))=I_O(O(x,y),z)$ iff $O$ satisfies \textnormal{(O6)} and $I^G(x,I^G(y,z))=I^G(G(x,y),z)$ iff $G$ satisfies $\textnormal{(G6)}$.
	\end{enumerate}
\end{lemma}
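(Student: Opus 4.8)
The backbone of the whole lemma is the \emph{adjunction} (residuation) between $O$ and $I_O$, together with its $N$-dual for $G$ and $I^G$. The first thing I would record is that continuity (O5) forces the maximum in $I_O(x,y)=\max\{z:O(x,z)\le y\}$ to be attained: by (O4) the set $\{z:O(x,z)\le y\}$ is a down-set, by (O5) it is closed, and by (O2) it contains $0$, so it is a nonempty closed subinterval $[0,m]$ of $[0,1]$ and $I_O(x,y)=m$ lies in it. From this the full adjunction $O(x,u)\le y \iff u\le I_O(x,y)$ follows with no extra hypothesis (``$\Leftarrow$'' from monotonicity (O4) plus $O(x,I_O(x,y))\le y$; ``$\Rightarrow$'' is the definition of the maximum). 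The co-implication side is then free: the duality $I^G(x,y)=N(I_O(N(x),N(y)))$ recorded in the excerpt, with $N$ an order-reversing involution swapping $\bigwedge\leftrightarrow\bigvee$ and $\le\leftrightarrow\ge$, transfers every $I_O$-identity verbatim to $I^G$. So I would prove only the $I_O$-halves and invoke duality for the rest.

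Part (1) is then immediate: $I_O(x,y)$ lies in its own defining set, so $O(x,I_O(x,y))\le y$; dually $y\le G(x,I^G(x,y))=G(I^G(x,y),x)$ by (G1). For part (2), monotonicity in the second slot gives ``$\le\bigwedge$''; conversely $w:=\bigwedge_i I_O(y,x_i)$ satisfies $O(y,w)\le O(y,I_O(y,x_j))\le x_j$ for every $j$, hence $O(y,w)\le\bigwedge_j x_j$, and the adjunction yields $w\le I_O(y,\bigwedge_j x_j)$. Part (4) is similar but exploits continuity in the \emph{first} argument: $I_O$ is antitone there, giving ``$\le\bigwedge$'', and for the reverse I put $w:=\bigwedge_i I_O(x_i,y)$ and compute $O(\bigvee_i x_i,w)=\bigvee_i O(x_i,w)\le y$, commuting the supremum through $O(\cdot,w)$ by (O5) and bounding every term by the \emph{same} $y$; then $w\le I_O(\bigvee_i x_i,y)$.

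For part (5) I would apply the adjunction twice to each side:
$$u\le I_O(x,I_O(y,z)) \iff O(y,O(x,u))\le z, \qquad u\le I_O(O(x,y),z)\iff O(O(x,y),u)\le z.$$
Comparing, the two members of (5) agree for all $z$ precisely when $O(O(x,y),u)=O(y,O(x,u))$ for all $x,y,u$, which, since a commutative (O1) function is associative exactly when it obeys the exchange principle, is equivalent to (O6); the $I^G$/(G6) statement is the $N$-dual.

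The step I expect to be the genuine obstacle is part (3), preservation of \emph{suprema} in the second argument, $I_O(y,\bigvee_i x_i)=\bigvee_i I_O(y,x_i)$. Here ``$\ge$'' is again pure monotonicity, but ``$\le$'' \emph{cannot} come from the adjunction, since a residuated map preserves meets rather than joins in that variable, so the argument of (2) collapses; and the continuity trick that rescued (4) also fails, because the varying bounds $x_i$ now sit on the output side. Concretely, with $m:=I_O(y,\bigvee_i x_i)$ I only know $O(y,m)\le\bigvee_i x_i$, and to force $m\le\bigvee_i I_O(y,x_i)$ I must transfer this bound to an \emph{individual} $x_i$; this succeeds at once when the supremum is attained, so the real work is the non-attained case. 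There I would try to upgrade (O5) to \emph{left}-continuity of $t\mapsto I_O(y,t)$ and commute the supremum through it. This is the delicate point I anticipate: a plateau of $O(y,\cdot)$ creates a jump of $I_O(y,\cdot)$ that obstructs join-preservation, so (O5) must be used in an essential, non-formal way (and, I suspect, may need reinforcing to exclude such plateaus). Establishing this left-continuity is where I expect the bulk of the effort to lie.
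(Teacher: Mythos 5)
The paper never actually proves this lemma---it is quoted from \cite{Qiao2021Onfuzzy}---so there is no in-paper argument to match against; judged on its own terms, your treatment of items (1), (2), (4) and (5) is correct and is exactly the standard machinery one is expected to supply: attainment of the maximum from (O2), (O4), (O5), the resulting adjunction $O(x,u)\le y \Leftrightarrow u\le I_O(x,y)$, and routine computations on top of it (your double-adjunction characterization in (5), reducing the identity to $O(O(x,y),u)=O(y,O(x,u))$ and hence to (O6) via commutativity, is clean). One small repair to your duality shortcut: in the lemma $G$ is an \emph{arbitrary} grouping function, not assumed to be the $N$-dual of the given $O$, so you cannot literally invoke the identity $I^G(x,y)=N(I_O(N(x),N(y)))$ from the preliminaries; instead, fix the standard negation $N$, define the overlap function $O'(x,y):=N(G(N(x),N(y)))$, and apply your $I_{O'}$-results. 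This is cosmetic.

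The genuine problem is item (3), and there your suspicion should be upgraded from a worry to a refutation: for infinite index sets the identity $I_O(y,\bigvee_{i\in I}x_i)=\bigvee_{i\in I}I_O(y,x_i)$ is \emph{false} for general overlap functions, so the left-continuity of $t\mapsto I_O(y,t)$ that you propose to extract from (O5) simply does not exist. Take $O=\min$ (an overlap function: $\min(x,y)=0$ iff $xy=0$ and $\min(x,y)=1$ iff $xy=1$), whose residual is the G\"{o}del implication, $I_O(y,t)=1$ if $y\le t$ and $I_O(y,t)=t$ if $y>t$. With $y=\tfrac12$ and $x_i=\tfrac12-\tfrac1i$ one gets $I_O\bigl(\tfrac12,\bigvee_i x_i\bigr)=I_O\bigl(\tfrac12,\tfrac12\bigr)=1$, while $\bigvee_i I_O\bigl(\tfrac12,x_i\bigr)=\bigvee_i x_i=\tfrac12$. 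The plateau of $\min\bigl(\tfrac12,\cdot\bigr)$ on $\bigl[\tfrac12,1\bigr]$ is exactly the obstruction you anticipated, and no reinforcement of (O5) short of forbidding plateaus (strict monotonicity, which overlap functions such as $\min$ need not satisfy) can remove it. The correct resolution is the one you already hold: the identity is only needed, and only true, when the supremum is attained---in particular for finite families---and that is how this paper uses it, since the universe $X$ is finite and, e.g., the proof of Lemma 4.3 explicitly writes ``Since $\mathscr{F}_{\beta}(X)$ is finite, by Lemma 2.1(3)\ldots''. In that case your one-line argument ($\ge$ by monotonicity; $\le$ because $\bigvee_i x_i=x_{i_0}$ for some $i_0$, so $I_O(y,\bigvee_i x_i)=I_O(y,x_{i_0})\le\bigvee_i I_O(y,x_i)$) is the entire proof, and the dual $I^G$ statement is handled identically (it fails for $G=\max$ in the infinite case for the same reason). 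So: restrict (3) to attained suprema, prove it by attainment, and abandon the left-continuity program.
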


\begin{lemma}\label{l:neutal}(\cite{Dimuro2015Onresidual})
	Let overlap function $O$ have identity element 1, and grouping function $G$ have identity element 0. For any $x,y,z\in [0,1]$, the following statements hold.
	\begin{enumerate}[(1)]
		\item 
		$I_O(1,x)=x\ and\ I^G(0,x)=x$;
		\item 
		$x\le y\ i\!f\!f\ I_O(x,y)=1\ i\!f\!f\ I^G(y,x)=0$;
		\item 
		$x\le I_O(y,x)\ and\ x\ge I^G(y,x)$.
	\end{enumerate}
\end{lemma}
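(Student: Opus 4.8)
The plan is to reduce every claim to a single application of the residuation property $O(x,u)\le y\Leftrightarrow I_O(x,y)\ge u$ (resp.\ the co-residuation property $y\le G(x,u)\Leftrightarrow I^G(x,y)\le u$) together with the standing hypothesis that $1$ is the identity of $O$ and $0$ is the identity of $G$. Throughout I would exploit commutativity (O1)/(G1), monotonicity (O4)/(G4), and the fact that $I_O$ and $I^G$ take values in $[0,1]$, so that $I_O(x,y)\ge 1$ forces $I_O(x,y)=1$ and $I^G(y,x)\le 0$ forces $I^G(y,x)=0$. Each $I^G$-assertion mirrors its $I_O$-counterpart and could alternatively be recovered from the duality formula $I^G(x,y)=N(I_O(N(x),N(y)))$ stated above.

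For part (1) I would compute directly from the defining extremum. Since $1$ is the identity, $O(1,z)=z$, whence $I_O(1,x)=\max\{z:O(1,z)\le x\}=\max\{z:z\le x\}=x$; symmetrically $G(0,z)=z$ gives $I^G(0,x)=\min\{z:x\le z\}=x$. For part (2) the key move is to instantiate the residuation equivalence at the endpoints. Setting $u=1$ yields $O(x,1)\le y\Leftrightarrow I_O(x,y)\ge 1$; since $O(x,1)=O(1,x)=x$ and $I_O(x,y)\le 1$, this reads $x\le y\Leftrightarrow I_O(x,y)=1$. Dually, setting $u=0$ in the co-residuation equivalence gives $x\le G(y,0)\Leftrightarrow I^G(y,x)\le 0$, and $G(y,0)=y$ together with $I^G(y,x)\ge 0$ turns this into $x\le y\Leftrightarrow I^G(y,x)=0$. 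Chaining the two equivalences delivers the triple characterisation.

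For part (3) I would first establish the auxiliary bounds $O(y,x)\le x$ and $x\le G(y,x)$ from monotonicity and the identities: since $y\le 1$ we get $O(y,x)\le O(1,x)=x$, and since $0\le y$ we get $G(y,x)\ge G(0,x)=x$. Feeding $O(y,x)\le x$ into residuation (with the roles $x\mapsto y$, $u\mapsto x$, $y\mapsto x$) gives $I_O(y,x)\ge x$, and feeding $x\le G(y,x)$ into co-residuation gives $I^G(y,x)\le x$.

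I do not expect a genuine obstacle here: the lemma is essentially bookkeeping once the residuation/co-residuation adjunctions and the identity-element hypothesis are in hand. The only point requiring care is choosing the correct instantiation of the adjunction in each line — the endpoint values $u=1$ and $u=0$ for part (2), and the monotone bounds $O(y,x)\le x$, $x\le G(y,x)$ for part (3) — so that each equivalence collapses cleanly to the stated (in)equality; an error in tracking which argument plays the role of $x$, $u$, or $y$ is the main thing that could derail an otherwise routine verification.
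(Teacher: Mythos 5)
Your proof is correct. Note that the paper itself gives no proof of this lemma --- it is imported verbatim from the cited reference (Dimuro and Bedregal, 2015) --- and your argument is the standard one: every item reduces to the defining extremum or to one instantiation of the residuation/co-residuation adjunction, combined with commutativity, monotonicity, and the identity-element hypothesis, so it matches the expected proof in the source. One small caveat: the alternative route you mention via $I^{G}(x,y)=N(I_{O}(N(x),N(y)))$ is only available when $O$ and $G$ are dual w.r.t.\ $N$, which is \emph{not} among the hypotheses of this lemma, so it is right that you use it only as a side remark and carry out the $I^{G}$ statements directly from co-residuation.
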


\begin{lemma}\label{l:associative law}
	Let overlap function $O:[0,1]^2\rightarrow\ [0,1]$(resp. grouping function $G:[0,1]^2\rightarrow\ [0,1]$) satisfies $\textnormal{(O6)}$ (resp. $\textnormal{(G6)}$). For any $x,y,z\in [0,1]$, the following statements hold.
	\begin{enumerate}[(1)]
		\item 
		$O(x,I_O(y,z))\le I_O(y,O(x,z))\ and\ I^G(y,G(x,z))\le G(x,I^G(y,z))$;
		\item 
		$I_O(y,z)\le I_O(I_O(x,y),I_O(x,z))\ and\ I^G(I^G(x,y),I^G(x,z))\le I^G(y,z)$.
	\end{enumerate}
\end{lemma}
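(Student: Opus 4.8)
The plan is to reduce every inequality to a manifestly true statement by applying the residuation properties of the adjoint pairs $(O,I_O)$ and $(G,I^G)$, and then to close each reduced inequality using only the cancellation bounds of Lemma~\ref{l:nocondition}(1) together with the associativity supplied by (O6)/(G6) (recall from Remark~\ref{exchange property &associative} that (O6) makes $O$ associative and (G6) makes $G$ associative). I would prove the two $O$-statements in full and then obtain the two $G$-statements by the verbatim order-dual argument, since the residuation for $G$, namely $y\le G(a,u)\Leftrightarrow I^G(a,y)\le u$, is exactly the order-reversal of the residuation for $O$, and the two halves of Lemma~\ref{l:nocondition}(1) are order-dual to each other.

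For the first inequality in (1), I would start from the goal $O(x,I_O(y,z))\le I_O(y,O(x,z))$ and apply residuation with left argument $y$, which turns it into the equivalent claim $O\bigl(y,O(x,I_O(y,z))\bigr)\le O(x,z)$. Now I invoke (O6) to exchange the outer arguments, writing $O\bigl(y,O(x,I_O(y,z))\bigr)=O\bigl(x,O(y,I_O(y,z))\bigr)$, and then apply $O(y,I_O(y,z))\le z$ from Lemma~\ref{l:nocondition}(1). Monotonicity (O4) in the second argument gives $O\bigl(x,O(y,I_O(y,z))\bigr)\le O(x,z)$, which is exactly what is required. The companion inequality $I^G(y,G(x,z))\le G(x,I^G(y,z))$ follows by the order-dual steps, using the $G$-residuation, the exchange principle (G6), and the dual bound $z\le G(y,I^G(y,z))$ coming from the second half of Lemma~\ref{l:nocondition}(1).

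For the first inequality in (2), I would apply residuation twice. Applying it to $I_O(y,z)\le I_O(I_O(x,y),I_O(x,z))$ with left argument $I_O(x,y)$ yields the equivalent statement $O\bigl(I_O(x,y),I_O(y,z)\bigr)\le I_O(x,z)$, and applying it once more with left argument $x$ reduces this to $O\bigl(x,O(I_O(x,y),I_O(y,z))\bigr)\le z$. Here I use associativity from (O6) to regroup as $O\bigl(O(x,I_O(x,y)),I_O(y,z)\bigr)$, bound $O(x,I_O(x,y))\le y$ by Lemma~\ref{l:nocondition}(1) and monotonicity to get $\le O(y,I_O(y,z))$, and finish with $O(y,I_O(y,z))\le z$. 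The inequality $I^G(I^G(x,y),I^G(x,z))\le I^G(y,z)$ is handled by the mirror-image argument with $G$ and $I^G$.

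I expect the only real difficulty to be bookkeeping: keeping the direction of each residuation equivalence correct and identifying which variable plays the role of the left argument at each application, since a misplaced argument would spoil the subsequent use of the exchange principle. The substantive ingredients are few, namely residuation, the cancellation inequalities of Lemma~\ref{l:nocondition}(1), and associativity, so once the reductions are set up no further estimation is needed.
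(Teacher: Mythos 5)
Your proof is correct, but it takes a different route from the paper. The paper's entire proof is a one-line reduction: since (O6) together with commutativity makes $O$ associative, and by Remark~\ref{exchange property &identity} it then has $1$ as identity element, $O$ ``becomes a $t$-norm'' (continuous and positive), and the stated inequalities are simply quoted as known properties of $t$-norms and their residual implications; the $G$-statements are dismissed as ``similar.'' You instead prove everything from scratch: each inequality is peeled back through the residuation equivalence $O(a,u)\le b \Leftrightarrow u\le I_O(a,b)$ (once in part (1), twice in part (2)), rearranged with the exchange principle/associativity, and closed with the cancellation bounds $O(x,I_O(x,y))\le y$ and $y\le G(I^G(x,y),x)$ of Lemma~\ref{l:nocondition}(1) plus monotonicity; the $G$-halves follow by the order-dual computation, which you correctly identify as verbatim. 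Each application of residuation in your argument has the arguments in the right slots, and the regroupings you perform are exactly what (O6)/(G6) licenses, so there is no gap. What your approach buys is self-containedness and slightly greater transparency about hypotheses: your argument never uses the identity element of $O$ or $G$ at all, only continuity (which guarantees residuation for any overlap or grouping function), monotonicity, the cancellation lemma, and the exchange principle, whereas the paper's shortcut implicitly routes through the full $t$-norm structure and leaves the reader to look up the corresponding facts in the $t$-norm literature. What the paper's approach buys is brevity.
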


\begin{proof}
	It is obvious that $O$ becomes a $t$-norm when it satisfies (O6), we can immediately obtain that $O(x,I_O(y,z))\le I_O(y,O(x,z))$ and $I_O(y,z)\le I_O(I_O(x,y),I_O(x,z))$. The equations about $G$ can be derived similarly.
\end{proof}

In the following, some basics about fuzzy sets are given.

Let finite set $X$ be universe, and the family of all fuzzy sets on $X$ is denoted $\mathscr{F}(X)$.
The fuzzy set $A$ defined as $A(x)=\alpha$ for any $A\in\mathscr{F}(X)$ and $x\in X$, is a constant and further called $\alpha_{X}$.
In addition, a fuzzy point $A$ is tagged with $y_{\alpha}$, if for all $x\in X$,
$$A(x)=\left\{
\begin{array}{ll}
\alpha, & \hbox{$x=y$;} \\
0, & \hbox{$x\neq y$;}
\end{array}
\right.$$
Furthermore, $|A|$ notes the cardinality of the set $A$ for all crisp sets $A$.

\begin{definition}\label{fuzzy negation}
	A function $N: [0, 1]\longrightarrow[0, 1]$ is a fuzzy negation, if it satisfies the following conditions:
	\begin{enumerate}[(1)]
		\item
		If $x<y$, then $N(x)>N(y)$, for all $x,y\in [0,1].$
		\item
		$N(0)=1$ and $N(1)=0$.
	\end{enumerate}
	Further, $N$ is called an
	involutive negation, if
	$N(N(x))=x$ holds for all $x\in[0, 1]$ and  the standard negation, $N(x)=1-x$ for all $x\in[0, 1]$,
	is a special case of involutive negation $N$.
\end{definition}

The operations on fuzzy sets are defined as follows: for all $A, B\in\mathscr{F}(X)$ and $x\in X$,
\begin{flushleft}
	(1)~$A^{N}(x)=N(A(x))$,\\
	(2)~$O(A, B)(x)=O(A(x), B(x))$,\\
	(3)~$G(A, B)(x)=G(A(x), B(x))$,\\
	(4)~$I_{O}(A, B)(x)=I_{O}(A(x), B(x))$,\\
	(5)~$I^{G}(A, B)(x)=I^{G}(A(x), B(x))$.
\end{flushleft}

 If for all $x, y\in[0, 1], N(x\oplus y)=N(x)\odot N(y)$, then the two binary operations $\oplus$ and $\odot$ are said to be dual with respect to (w.r.t., for short) $N$. Especially, $(A^{c})(x)=1-A(x)$ and $A\subseteq B$ defined as $A(x)\leq B(x)$ for all $x\in X$. In addition,
a fuzzy relation on $X$ is a fuzzy set $R\in \mathscr{F}(X\times X)$ and $R^{-1}$ is defined as $R^{-1}(x, y)=R(y, x)$ for all $x, y\in X$.

\begin{definition}\label{d:fuzzy relation}
	Let $R$ be a fuzzy relation on $X$ and for all $x, y, z\in X$, $R$ satisfies
	\begin{flushleft}
		(1)~seriality: $\bigvee_{y\in X}R(x, y)=1$;\\
		(2)~reflexivity: $R(x, x)=1$;\\
		(3)~symmetry: $R(x, y)=R(y, x)$;\\
		(4)~$O$-transitivity: $O(R(x, y), R(y, z))\leq R(x, z)$.
	\end{flushleft}
\end{definition}

For sake of simplicity, $\wedge$-transitive is called transitive.
$R$ ia a fuzzy $O$-preorder relation when it satisfies reflexivity and $O$-transitivity and a fuzzy $O$-similarity relation when it satisfies reflexivity, symmetry and $O$-transitivity.

Next, the model of GVPFRSs which proposed by Wang and Hu \cite{Wang2015Granularvariable} will be given below.

\begin{definition}\label{d:general model}(\cite{Wang2015Granularvariable})
	Let $R$ be a fuzzy relation on $X$, $\beta\in[0, 1]$ and
	$\mathscr{F}_{\beta}(X)=\{X_{i}\subseteq X: |X_{i}|\geq\beta|X|\}$.
	Then for all $A\in\mathscr{F}(X)$, two fuzzy
	operators $\underline{R}^{\beta}$ and 	$\overline{R}^{\beta}$ are defined as follows.
	\begin{align*}
	\underline{R}^{\beta}(A)&=\bigcup\{[x_{\gamma}]_{R}^{\bigtriangleup}: x\in X, \gamma\in[0, 1],
	\{y\in X: [x_{\gamma}]_{R}^{\bigtriangleup}(y)\leq A(y)\}\in\mathscr{F}_{\beta}(X)\},\\
	\overline{R}^{\beta}(A)&=\bigcap\{[x_{\gamma}]_{R}^{\bigtriangledown}: x\in X, \gamma\in[0, 1],
	\{y\in X: A(y)\leq [x_{\gamma}]_{R}^{\bigtriangledown}(y)\}\in\mathscr{F}_{\beta}(X)\},
	\end{align*}
	Then $\underline{R}^{\beta}$ (resp. $\overline{R}^{\beta}$)	 is the generalized granular variable precision lower (resp. upper) approximation operator and the pair $(\underline{R}^{\beta}(A), \overline{R}^{\beta}(A))$ is GVPFRSs of fuzzy set $A$.
\end{definition}

\section{$(O,G)$-granular variable precision fuzzy rough sets based on overlap and grouping functions}

In the following, we give the model of $(O,G)$-GVPFRSs and then utilize fuzzy implication and co-implication to compute the approximation operators more efficiently. In addition, we continue to study the related properties of degenerated $(O,G)$-GVPFRSs under the condition of crisp relations and crisp sets, respectively.

\begin{definition}\label{d:fuzzy granules}
	Let $R$ be a fuzzy relation on $X$. Then define the fuzzy granules $[x_{\lambda}]_{R}^{O}$ and $[x_{\lambda}]_{R}^{G}$ by
	\begin{center}
		$[x_{\lambda}]_{R}^{O}(y)=O(R(x, y), \lambda)$ and
		$[x_{\lambda}]_{R}^{G}(y)=G(R^{N}(x, y), \lambda)$,
	\end{center}
where $x, y\in X$, $\lambda\in[0, 1]$ and $N$ is an involutive negation.
\end{definition}

In \cite{Dimuro2014OnGNimplications}, Dimuro et al. have defined the class of fuzzy implications called $(G, N)$-implications, where $G$ and $N$ are grouping
functions and fuzzy negations respectively.
Detailed definition is introduced as follows:

For grouping function $G: [0, 1]^{2}\longrightarrow [0, 1]$ and fuzzy negation
$N: [0, 1]\longrightarrow [0, 1]$, the function $I_{G, N}$, denoted by
$$I_{G, N}(a, b)=G(N(a), b),$$
is a $(G, N)$-implications, where $a, b\in [0, 1]$.

Then, from the definition of $I_{G, N}$ and
Definition~\ref{d:fuzzy granules}, one concludes that
$$[x_{\lambda}]_{R}^{G}(y)=G(R^{N}(x, y), \lambda)=I_{G, N}(R(x, y), \lambda).$$

\subsection{$(O,G)$-granular variable precision fuzzy rough sets based on overlap and grouping functions}
\begin{definition}\label{d:model}
	Let $R$ be a fuzzy relation on $X$, $\beta\in[0, 1]$ and
	$\mathscr{F}_{\beta}(X)=\{X_{i}\subseteq X: |X_{i}|\geq\beta|X|\}$ such that for all $A\in\mathscr{F}(X)$,
	\begin{align*}
	\underline{R}_{O}^{\beta}(A)&=\bigcup\{[x_{\lambda}]_{R}^{O}: x\in X, \lambda\in[0, 1],
	\{y\in X: [x_{\lambda}]_{R}^{O}(y)\leq A(y)\}\in\mathscr{F}_{\beta}(X)\},\\
	\overline{R}_{G}^{\beta}(A)&=\bigcap\{[x_{\lambda}]_{R}^{G}: x\in X, \lambda\in[0, 1],
	\{y\in X: A(y)\leq [x_{\lambda}]_{R}^{G}(y)\}\in\mathscr{F}_{\beta}(X)\},
	\end{align*}
	then $\underline{R}_{O}^{\beta}$ (resp. $\overline{R}_{G}^{\beta}$) is denoted the
	$O$-granular (resp. $O$-granular) variable precision lower (resp. upper) approximation operator and the pair $(\underline{R}_{O}^{\beta}(A), \overline{R}_{G}^{\beta}(A))$ is denoted the
	$(O, G)$-granular variable precision fuzzy rough set of fuzzy set $A$.
\end{definition}
\begin{remark}
	If $t$-norm (resp. $t$-conorm) is continuous and positive,
	then Definition~\ref{d:model} in~\cite{Wang2015Granularvariable}
	is equal to $O$-granular (resp. $G$-granular
	) variable precision lower (resp. upper) approximation operator defined above.
	In this paper, $(O, G)$-GVPFRSs are defined on arbitrary fuzzy relations, where $O$ and $G$ do not need to be dual w.r.t. the standard negation $N$.
\end{remark}

In the next propositions, the equivalent statements of the
$O$-granular (resp. $G$-granular) variable precision lower (resp. upper) approximation operator will be given.

\begin{proposition}\label{p:3.1}	
	Let $R$ be a fuzzy relation on $X$. For all $A\in\mathscr{F}(X)$,
	$x\in X$ and $X_{i}\in \mathscr{F}_{\beta}(X)$, define
	\begin{align*}
	g_{A}^{(i)}(x)&=\underset{y\in X_{i}}{\bigwedge}I_{O}(R(x, y), A(y))
	\end{align*}
	\begin{align*}
	g_{A}(x)&=\underset{X_{i}\in \mathscr{F}_{\beta}(X)}{\bigvee}g_{A}^{(i)}(x).
	\end{align*}
	Then, it always holds
	$$\underline{R}_{O}^{\beta}(A)=\bigcup\{[x_{g_{A}(x)}]_{R}^{O}: x\in X\}
	\mbox{ and }
	\{y: [x_{g_{A}(x)}]_{R}^{O}(y)\leq A(y)\}\in \mathscr{F}_{\beta}(X),$$
	$where\ x\in X\ and\ A\in\mathscr{F}(X)$.
\end{proposition}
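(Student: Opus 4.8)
The plan is to carry out the analysis one point $x\in X$ at a time and to translate the defining inequality of the granule into a scalar inequality via the residuation property of the adjoint pair $(O,I_O)$. Throughout I would write $S_x(\lambda)=\{y\in X:[x_\lambda]_R^O(y)\le A(y)\}$, so that a pair $(x,\lambda)$ contributes to the union defining $\underline{R}_O^\beta(A)$ precisely when $S_x(\lambda)\in\mathscr{F}_\beta(X)$.

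First I would record the elementary translation. For fixed $x$ and any $y\in X$, the residuation property gives
$$[x_\lambda]_R^O(y)=O(R(x,y),\lambda)\le A(y)\ \Leftrightarrow\ \lambda\le I_O(R(x,y),A(y)).$$
Conjoining over $y\in X_i$ shows that, for $X_i\in\mathscr{F}_\beta(X)$, the inclusion $X_i\subseteq S_x(\lambda)$ holds if and only if $\lambda\le\bigwedge_{y\in X_i}I_O(R(x,y),A(y))=g_A^{(i)}(x)$.

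Second, and this is the \emph{crux}, I would establish that $\lambda$ is admissible for $x$ (that is, $S_x(\lambda)\in\mathscr{F}_\beta(X)$) if and only if $\lambda\le g_A(x)$. For the forward direction, admissibility means $S_x(\lambda)\in\mathscr{F}_\beta(X)$, so taking $X_i=S_x(\lambda)$ in the previous step gives $\lambda\le g_A^{(i)}(x)\le g_A(x)$. For the converse I would use that $X$ is finite, hence $\mathscr{F}_\beta(X)$ is a finite family and the supremum defining $g_A(x)$ is attained, say $g_A(x)=g_A^{(i^*)}(x)$ for some $X_{i^*}\in\mathscr{F}_\beta(X)$; then $\lambda\le g_A(x)$ yields $X_{i^*}\subseteq S_x(\lambda)$ by the first step, whence $|S_x(\lambda)|\ge|X_{i^*}|\ge\beta|X|$, i.e. $S_x(\lambda)\in\mathscr{F}_\beta(X)$. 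Applying this equivalence with $\lambda=g_A(x)$ immediately yields the asserted side statement $\{y:[x_{g_A(x)}]_R^O(y)\le A(y)\}\in\mathscr{F}_\beta(X)$.

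Finally I would assemble the union. For fixed $x$ the set of admissible $\lambda$ is exactly $[0,g_A(x)]$, and since $O$ is non-decreasing in its second argument by (O4), the map $\lambda\mapsto[x_\lambda]_R^O(y)$ is non-decreasing, so the pointwise supremum over admissible $\lambda$ is attained at $\lambda=g_A(x)$; that is, $\bigcup_{\lambda\le g_A(x)}[x_\lambda]_R^O=[x_{g_A(x)}]_R^O$. Taking the union over all $x\in X$ then gives $\underline{R}_O^\beta(A)=\bigcup\{[x_{g_A(x)}]_R^O:x\in X\}$, as required. I expect the only delicate point to be the second step: the equivalence between the membership condition $S_x(\lambda)\in\mathscr{F}_\beta(X)$ and the scalar bound $\lambda\le g_A(x)$ hinges on the upward-closedness of $\mathscr{F}_\beta(X)$ under supersets together with the finiteness of $X$, the latter guaranteeing that the supremum in $g_A(x)$ is genuinely achieved and hence that $g_A(x)$ is itself admissible.
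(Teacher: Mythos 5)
Your proposal is correct and follows essentially the same route as the paper's proof: the residuation equivalence $O(R(x,y),\lambda)\le A(y)\Leftrightarrow\lambda\le I_O(R(x,y),A(y))$, attainment of the supremum $g_A(x)=g_A^{(i^*)}(x)$ by finiteness of $\mathscr{F}_\beta(X)$, upward-closedness of $\mathscr{F}_\beta(X)$, and monotonicity (O4) to collapse the union over admissible $\lambda$. The only difference is organizational: you package the two inclusions the paper proves separately into a single equivalence (``$\lambda$ admissible iff $\lambda\le g_A(x)$''), which is a clean but equivalent presentation.
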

\begin{proof}
	Let $x\in X$, $\lambda\in[0, 1]$, and $\{y\in X: [x_{\lambda}]_{R}^{O}(y)\leq A(y)\}$ be written as $Y$, while
	$\{y\in X: [x_{\lambda}]_{R}^{O}(y)\leq A(y)\}\in \mathscr{F}_{\beta}(X)$.
	Then for all $y\in Y$, consider the following equivalences,
	$$[x_{\lambda}]_{R}^{O}(y)\leq A(y)\Longleftrightarrow O(R(x, y), \lambda)\leq A(y)
	\Longleftrightarrow\lambda\leq I_{O}(R(x, y), A(y)), $$
    that is $\lambda\leq g_{A}(x)$.
	Hence, for all $A\in \mathscr{F}(X)$, it always holds $\underline{R}_{O}^{\beta}(A)\subseteq\bigcup\{[x_{g_{A}(x)}]_{R}^{O}: x\in X\}$ by Definition~\ref{d:model}.
	
	Another side, for all $x\in X$, there exists $X_{i}\in \mathscr{F}_{\beta}(X)$ such that $g_{A}(x)=g_{A}^{(i)}(x)$.
	For all $y\in X_{i}$, we get that
	\begin{align*}
	[x_{g_{A}(x)}]_{R}^{O}(y)&=O(R(x, y), g_{A}^{(i)}(x))\\
	&=O(R(x, y), \underset{z\in X_{i}}{\bigwedge}I_{O}(R(x, z), A(z)))\\
	&\leq O(R(x, y), I_{O}(R(x, y), A(y)))\\
	&\leq A(y).
	\end{align*}
	Thus,
	$X_{i}\subseteq \{y\in X: [x_{g_{A}(x)}]_{R}^{O}(y)\leq A(y)\}$ and
	$\underline{R}_{O}^{\beta}(A)\supseteq\bigcup\{[x_{g_{A}(x)}]_{R}^{O}: x\in X\}$ hold.
	
	In summary,
	$\underline{R}_{O}^{\beta}(A)=\bigcup\{[x_{g_{A}(x)}]_{R}^{O}: x\in X\}$
	and
	$\{y: [x_{g_{A}(x)}]_{R}^{O}(y)\leq A(y)\}\in \mathscr{F}_{\beta}(X)$
	hold for all $x\in X$ and $A\in\mathscr{F}(X)$.
\end{proof}

\begin{proposition}\label{p:3.2}
	Let $R$ be a fuzzy relation on $X$. For all $A\in\mathscr{F}(X)$, $x\in X$ and $X_{i}\in \mathscr{F}_{\beta}(X)$, define
	\begin{align*}
	h_{A}^{(i)}(x)&=\underset{y\in X_{i}}{\bigvee}I^{G}(R^{N}(x, y), A(y))
	\end{align*}
	\begin{align*}
	h_{A}(x)&=\underset{X_{i}\in \mathscr{F}_{\beta}(X)}{\bigwedge}h_{A}^{(i)}(x).
	\end{align*}
	Then, it always holds
	$$\overline{R}_{G}^{\beta}(A)=\bigcap\{[x_{h_{A}(x)}]_{R}^{G}: x\in X\}
	\mbox{ and }
	\{y: A(y)\leq [x_{h_{A}(x)}]_{R}^{G}(y)\}\in \mathscr{F}_{\beta}(X),\
	$$
	$where\ x\in X\ and\ A\in\mathscr{F}(X)$.
\end{proposition}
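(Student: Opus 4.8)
The plan is to mirror the argument of Proposition~\ref{p:3.1}, exploiting the duality between the lower/union and the upper/intersection constructions. Concretely, I would prove the two inclusions $\overline{R}_{G}^{\beta}(A)\supseteq\bigcap\{[x_{h_{A}(x)}]_{R}^{G}: x\in X\}$ and $\overline{R}_{G}^{\beta}(A)\subseteq\bigcap\{[x_{h_{A}(x)}]_{R}^{G}: x\in X\}$ separately, and along the way establish the membership claim $\{y: A(y)\leq [x_{h_{A}(x)}]_{R}^{G}(y)\}\in\mathscr{F}_{\beta}(X)$. The two tools doing all the work are the residuation property for the co-implication, $A(y)\leq G(R^{N}(x,y),\lambda)\Leftrightarrow I^{G}(R^{N}(x,y),A(y))\leq\lambda$, and the inequality $A(y)\leq G(R^{N}(x,y),I^{G}(R^{N}(x,y),A(y)))$ furnished by Lemma~\ref{l:nocondition}(1) together with commutativity (G1).

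First I would fix an arbitrary pair $(x,\lambda)$ that qualifies in Definition~\ref{d:model}, i.e. with $Y:=\{y\in X: A(y)\leq [x_{\lambda}]_{R}^{G}(y)\}\in\mathscr{F}_{\beta}(X)$. Applying the residuation equivalence to each $y\in Y$ gives $I^{G}(R^{N}(x,y),A(y))\leq\lambda$, whence $\bigvee_{y\in Y}I^{G}(R^{N}(x,y),A(y))\leq\lambda$. Since $Y$ itself belongs to $\mathscr{F}_{\beta}(X)$, it is one of the index sets $X_{i}$, so this supremum is exactly $h_{A}^{(i)}(x)$ and therefore $h_{A}(x)\leq h_{A}^{(i)}(x)\leq\lambda$. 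Monotonicity of $G$ in its second argument then yields $[x_{h_{A}(x)}]_{R}^{G}(y)=G(R^{N}(x,y),h_{A}(x))\leq G(R^{N}(x,y),\lambda)=[x_{\lambda}]_{R}^{G}(y)$, i.e. $[x_{h_{A}(x)}]_{R}^{G}\subseteq[x_{\lambda}]_{R}^{G}$. As this holds for every qualifying term of the defining intersection, each such term contains $\bigcap_{x\in X}[x_{h_{A}(x)}]_{R}^{G}$, and hence so does the whole intersection, giving $\overline{R}_{G}^{\beta}(A)\supseteq\bigcap\{[x_{h_{A}(x)}]_{R}^{G}: x\in X\}$.

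For the reverse inclusion I would use finiteness of $X$: since $\mathscr{F}_{\beta}(X)$ is finite, the meet $h_{A}(x)=\bigwedge_{X_{i}}h_{A}^{(i)}(x)$ is attained at some $X_{i}\in\mathscr{F}_{\beta}(X)$. For every $y\in X_{i}$, monotonicity of $G$ (the supremum dominates the $y$-indexed term) followed by Lemma~\ref{l:nocondition}(1) gives $[x_{h_{A}(x)}]_{R}^{G}(y)=G(R^{N}(x,y),\bigvee_{z\in X_{i}}I^{G}(R^{N}(x,z),A(z)))\geq G(R^{N}(x,y),I^{G}(R^{N}(x,y),A(y)))\geq A(y)$. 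Thus $X_{i}\subseteq\{y: A(y)\leq [x_{h_{A}(x)}]_{R}^{G}(y)\}$, so this set has cardinality at least $|X_{i}|\geq\beta|X|$ and therefore lies in $\mathscr{F}_{\beta}(X)$---this is precisely the membership claim. Consequently the granule $[x_{h_{A}(x)}]_{R}^{G}$ (with $\lambda=h_{A}(x)$) is itself one of the terms of the defining intersection, so $\overline{R}_{G}^{\beta}(A)\subseteq[x_{h_{A}(x)}]_{R}^{G}$ for each $x$, and intersecting over $x$ yields $\overline{R}_{G}^{\beta}(A)\subseteq\bigcap\{[x_{h_{A}(x)}]_{R}^{G}: x\in X\}$. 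Combining the two inclusions finishes the proof.

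The routine parts are the residuation rewriting and the monotonicity estimates; the only points demanding care are the systematic reversal of inequality directions relative to the lower-approximation case (supremum vs.\ infimum, $\subseteq$ vs.\ $\supseteq$) and the attainment of the infimum $h_{A}(x)$, which relies on $\mathscr{F}_{\beta}(X)$ being finite. Neither is a genuine obstacle, so I expect the proof to go through as a clean dualization of Proposition~\ref{p:3.1}.
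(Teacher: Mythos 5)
Your proposal is correct and takes essentially the same route as the paper's own proof: the residuation equivalence $A(y)\leq G(R^{N}(x,y),\lambda)\Leftrightarrow I^{G}(R^{N}(x,y),A(y))\leq\lambda$ yields $h_{A}(x)\leq\lambda$ for every qualifying pair and hence one inclusion, while attaining the infimum defining $h_{A}(x)$ at some $X_{i}$ and applying Lemma~\ref{l:nocondition}(1) gives $X_{i}\subseteq\{y:A(y)\leq[x_{h_{A}(x)}]_{R}^{G}(y)\}$, which is both the membership claim and the reverse inclusion. You merely make explicit a few steps (why $h_{A}(x)\leq\lambda$, the use of finiteness, the cardinality argument) that the paper leaves implicit.
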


\begin{proof}
	Let $x\in X$, $\lambda\in[0,1]$ and $\{y\in X: A(y)\leq[x_{\lambda}]_{R}^{G}(y)\}$ be written as $Y$,
	while $\{y\in X: A(y)\leq[x_{\lambda}]_{R}^{G}(y)\} \in \mathscr{F}_{\beta}(X)$.
	Then for all $y\in Y$, consider the following equivalences,
	\begin{align*}
	A(y)\leq[x_{\lambda}]_{R}^{G}(y)&\Longleftrightarrow A(y)\leq G(R^{N}(x, y), \lambda)
	\Longleftrightarrow I^{G}(R^{N}(x, y), A(y))\leq \lambda,
	\end{align*}
	that is $h_{A}(x)\leq \lambda$.
	Hence, for all $A\in \mathscr{F}(X)$, it always holds $\overline{R}_{G}^{\beta}(A)\supseteq \bigcap\{[x_{h_{A}(x)}]_{R}^{G}: x\in X\}$
	by Definition~\ref{d:model}.
	
	Another side, for all $x\in X$, there exists $X_{i}\in \mathscr{F}_{\beta}(X)$ such that
	$h_{A}(x)=h_{A}^{(i)}(x)$.
	For all $y\in X_{i}$, we get that
	\begin{align*}
	[x_{h_{A}(x)}]_{R}^{G}(y)&=G(R^{N}(x, y), h_{A}^{(i)}(x))\\
	&=G(R^{N}(x, y), \underset{z\in X_{i}}{\bigvee}I^{G}(R^{N}(x, z), A(z)))\\
	&\geq G(R^{N}(x, y), I^{G}(R^{N}(x, y), A(y)))\\
	&\geq A(y).
	\end{align*}
	Thus,
	$X_{i}\subseteq \{y\in X: A(y)\leq [x_{h_{A}(x)}]_{R}^{G}(y)\}$ and
	$\overline{R}_{G}^{\beta}(A)\subseteq \bigcap\{[x_{h_{A}(x)}]_{R}^{G}: x\in X\}$ hold.
	
	In summary,
	$\overline{R}_{G}^{\beta}(A)= \bigcap\{[x_{h_{A}(x)}]_{R}^{G}: x\in X\}$
	and
	$\{y: A(y)\leq [x_{h_{A}(x)}]_{R}^{G}(y)\}\in \mathscr{F}_{\beta}(X)$ hold for all $x\in X$ and $A\in\mathscr{F}(X)$.
\end{proof}

\begin{remark}\label{r:3.2}
	The above propositions provide the  equivalent expressions for $\underline{R}_{O}^{\beta}$ and $\overline{R}_{G}^{\beta}$ with $g_{A}$ and $h_{A}$ on arbitrary fuzzy relation. It is no longer need to consider  fuzzy granule
	$[x_{\lambda}]_{R}^{O}$ or $[x_{\lambda}]_{R}^{G}$ for all $x\in X$, which facilitates more efficient computation of the approximation operators.
	Note that the proofs of Proposition~\ref{p:3.2} and Proposition~\ref{p:3.1} are similar.
	Therefore, in the following we only give the proof of the $\underline{R}_{O}^{\beta}$, and the proof of the $\overline{R}_{G}^{\beta}$ can be derived in a similar way.
\end{remark}
\begin{proposition}\label{p:3.3}
	Let $R$ be a fuzzy relation on $X$. If overlap function $O$ and grouping function $G$ are dual w.r.t. $N$, then for all $A\in \mathscr{F}(X)$, we obtain that
	\begin{center}
		$(g_{A})^{N}=h_{A^{N}}$ and $(h_{A})^{N}=g_{A^{N}}$.
	\end{center}
	Further, we get
	\begin{center}
		$(\underline{R}_{O}^{\beta}(A))^{N}=\overline{R}_{G}^{\beta}(A^{N})$  and
		$(\overline{R}_{G}^{\beta}(A))^{N}=\underline{R}_{O}^{\beta}(A^{N})$.
	\end{center}
\end{proposition}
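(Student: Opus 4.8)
The plan is to first establish the two pointwise identities $(g_{A})^{N}=h_{A^{N}}$ and $(h_{A})^{N}=g_{A^{N}}$, and then lift them to the approximation operators using the closed-form expressions from Proposition~\ref{p:3.1} and Proposition~\ref{p:3.2}. Throughout I would use three facts: that the involutive negation $N$ is strictly decreasing, so that it converts joins into meets and meets into joins over any (here finite) index set; the implication duality $N(I_{O}(a,b))=I^{G}(N(a),N(b))$ derived in the preliminaries; and the operator duality $N(O(a,b))=G(N(a),N(b))$, which is precisely the hypothesis that $O$ and $G$ are dual w.r.t.\ $N$.

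For the first identity I would fix $x\in X$ and compute $N(g_{A}(x))$ directly from the definitions. Since $g_{A}(x)=\bigvee_{X_{i}}g_{A}^{(i)}(x)$ with $g_{A}^{(i)}(x)=\bigwedge_{y\in X_{i}}I_{O}(R(x,y),A(y))$, applying $N$ turns the outer join into a meet and each inner meet into a join, giving
\begin{align*}
N(g_{A}(x)) &= \bigwedge_{X_{i}\in \mathscr{F}_{\beta}(X)}\ \bigvee_{y\in X_{i}} N\big(I_{O}(R(x,y),A(y))\big).
\end{align*}
Now $N(I_{O}(R(x,y),A(y)))=I^{G}(N(R(x,y)),N(A(y)))=I^{G}(R^{N}(x,y),A^{N}(y))$ by the implication duality together with $R^{N}=N\circ R$ and $A^{N}=N\circ A$. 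Recognising the inner join as $h_{A^{N}}^{(i)}(x)$ and the outer meet as $h_{A^{N}}(x)$ then yields $(g_{A})^{N}=h_{A^{N}}$. The companion identity $(h_{A})^{N}=g_{A^{N}}$ needs no separate computation: substituting $A^{N}$ for $A$ in what was just proved gives $(g_{A^{N}})^{N}=h_{(A^{N})^{N}}=h_{A}$, and applying $N$ to both sides and using involutivity returns $g_{A^{N}}=(h_{A})^{N}$.

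To pass to the operators I would evaluate at an arbitrary $y\in X$. By Proposition~\ref{p:3.1}, $\underline{R}_{O}^{\beta}(A)(y)=\bigvee_{x\in X}O(R(x,y),g_{A}(x))$, so
\begin{align*}
(\underline{R}_{O}^{\beta}(A))^{N}(y) &= \bigwedge_{x\in X} N\big(O(R(x,y),g_{A}(x))\big) = \bigwedge_{x\in X} G\big(R^{N}(x,y),(g_{A})^{N}(x)\big),
\end{align*}
where the second equality is the operator duality. Replacing $(g_{A})^{N}$ by $h_{A^{N}}$ and reading the summand back as the fuzzy granule $[x_{h_{A^{N}}(x)}]_{R}^{G}(y)$, Proposition~\ref{p:3.2} applied to $A^{N}$ identifies the right-hand side with $\overline{R}_{G}^{\beta}(A^{N})(y)$. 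This proves $(\underline{R}_{O}^{\beta}(A))^{N}=\overline{R}_{G}^{\beta}(A^{N})$, and the remaining identity $(\overline{R}_{G}^{\beta}(A))^{N}=\underline{R}_{O}^{\beta}(A^{N})$ follows once more by the substitution $A\mapsto A^{N}$ and involutivity.

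There is no genuine obstacle here; the argument is a bookkeeping exercise in pushing $N$ through the nested lattice operations and the residual implications. The one point deserving care is the De Morgan step: I must be sure that $N$ exchanges $\bigvee$ and $\bigwedge$, which is guaranteed because $N$ is a strictly decreasing involution and all the joins and meets involved are finite (as $X$, and hence $\mathscr{F}_{\beta}(X)$, is finite). The only substantive hypotheses actually consumed are the two dualities, so the conclusion would fail in general if $O$ and $G$ were not dual w.r.t.\ the same involutive $N$.
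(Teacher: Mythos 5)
Your proposal is correct and follows essentially the same route as the paper's proof: push $N$ through the nested joins and meets, use the duality $N(I_{O}(a,b))=I^{G}(N(a),N(b))$ to get $(g_{A})^{N}=h_{A^{N}}$, then combine Propositions~\ref{p:3.1} and~\ref{p:3.2} with $N(O(a,b))=G(N(a),N(b))$ to lift this to the approximation operators. The only cosmetic difference is that you obtain the companion identities by the substitution $A\mapsto A^{N}$ plus involutivity, whereas the paper simply repeats the symmetric computation.
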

\begin{proof}
	If the operations $O$ and $G$ are dual w.r.t. $N$, then
	\begin{align*}
	(g_{A})^{N}(x)&=\underset{X_{i}\in\mathscr{F}_{\beta}(X)}{\bigwedge}N(g_{A}^{(i)}(x))\\
	&=\underset{X_{i}\in\mathscr{F}_{\beta}(X)}{\bigwedge}~~\underset{y\in X_{i}}{\bigvee}N(I_{O}(R(x, y), A(y)))\\
	&=\underset{X_{i}\in\mathscr{F}_{\beta}(X)}{\bigwedge}~~\underset{y\in X_{i}}{\bigvee}
	I^{G}(R^{N}(x, y), A^{N}(y))\\
	&=\underset{X_{i}\in\mathscr{F}_{\beta}(X)}{\bigwedge}h_{A^{N}}^{(i)}(x)\\
	&=h_{A^{N}}(x),
	\end{align*}
	where for all $A\in \mathscr{F}(X)$ and $x\in X$. Hence, it always holds $(g_{A})^{N}=h_{A^{N}}$.
	In a similar way, we obtain $(h_{A})^{N}=g_{A^{N}}$.
	
	For any $A\in\mathscr{F}(X)$ and $y\in X$, the following equations hold by Propositions~\ref{p:3.1} and~\ref{p:3.2}.
	
	\begin{align*}
	(\underline{R}_{O}^{\beta}(A))^{N}(y)&=\underset{x\in X}{\bigwedge}N([x_{g_{A}(x)}]_{R}^{O}(y))\\
	&=\underset{x\in X}{\bigwedge}N(O(R(x, y), g_{A}(x)))\\
	&=\underset{x\in X}{\bigwedge}G(R^{N}(x, y), (g_{A}(x))^{N})\\
	&=\underset{x\in X}{\bigwedge}G(R^{N}(x, y), h_{A^{N}}(x))\\
	&=\underset{x\in X}{\bigwedge}[x_{h_{A^{N}}(x)}]_{R}^{G}(y)\\
	&=\overline{R}_{G}^{\beta}(A^{N})(y).
	\end{align*}
	Therefore, we know that $(\underline{R}_{O}^{\beta}(A))^{N}=\overline{R}_{G}^{\beta}(A^{N})$.
	Similarly, $(\overline{R}_{G}^{\beta}(A))^{N}=\underline{R}_{O}^{\beta}(A^{N})$ holds.
\end{proof}

The comparable property, as a fundamental property between upper and lower rough approximation operator is discussed in literature \cite{Ciucci2009Approximationalgebra,Csajb2014Fromvagueness,Yao1996Twoviews}.
Next, we study several situations where $(O,G)$-GVPFRSs satisfy comparable property.
\begin{remark}\label{r:3.3}
	Based on the variable precision $\beta$, the comparable property of $O$-granular variable precision
	lower approximation operator and $G$-granular variable precision upper approximation operator are discussed below in three cases.
	
\end{remark}
\begin{itemize}
	\item (1)~Variable precision $\beta=1$\\
	In particular, when the value of $\beta$ is 1, we have $\mathscr{F}_{\beta}(X)=\{X\}$. Then for all $A\in\mathscr{F}(X)$ and $x\in X$,
	
	\begin{align*}
	g_{A}(x)&=\underset{y\in X}{\bigwedge}I_{O}(R(x, y), A(y)).
	\end{align*}
	According to Proposition~\ref{p:3.1}, we obtain that for all $z\in X$,
	\begin{align*}
	\underline{R}_{O}^{\beta}(A)(z)&=\underset{x\in X}{\bigvee}O(R(x, z), g_{A}(x))\\
	&=\underset{x\in X}{\bigvee}O(R(x, z), \underset{y\in X}{\bigwedge}I_{O}(R(x, y), A(y)))\\
	&\leq \underset{x\in X}{\bigvee}O(R(x, z), I_{O}(R(x, z), A(z)))\\
	&\leq A(z).
	\end{align*}
	Hence, if $\beta=1$, it always holds that $\underline{R}_{O}^{\beta}(A)\subseteq A$. In a similar way, $\overline{R}_{G}^{\beta}(A)\supseteq A$ can be proved.	
	
	Furthermore, let $R$ be a fuzzy $O$-similarity relation and $O$ (resp. $G$) satisfy (O6) (resp.(G6)), then by Theorem 4.1.3 in \cite{Chen2011Granularcomputing}, we can obtain that  	
	\begin{align*}
	\underline{R}_{O}^{\beta}(A)(x)&=\underset{y\in X}{\bigwedge}I_{O}(R(x, y),A(y))\ and\  \overline{R}_{G}^{\beta}(A)=\underset{y\in X}{\bigvee}I^{G}(R^{N}(x, y), A(y)),
	\end{align*}	
	for all $A\in\mathscr{F}(X)$ and $x\in X$.
	
	It follows from the reflexivity of $R$ and Lemma~\ref{l:neutal} (1) that for all $A\in\mathscr{F}(X)$ and $x\in X$,
	\begin{align*}
	\underline{R}_{O}^{\beta}(A)(x)&\le I_{O}(R(x, x),A(x)) = I_{O}(1,A(x)) = A(x),\\ \overline{R}_{G}^{\beta}(A)(x)&\ge I^{G}(R^{N}(x, x), A(x)) =I^{G}(0, A(x)) = A(x).
	\end{align*}	
	Hence, $\underline{R}_{O}^{\beta}(A)\subseteq
	A\subseteq \overline{R}_{G}^{\beta}(A)$ holds for all $A\in\mathscr{F}(X)$. As $X$ is finite, then $\underline{R}_{O}^{\beta}(A)\subseteq
	A\subseteq \overline{R}_{G}^{\beta}(A)$ holds for all $A\in\mathscr{F}(X)$ and   $\frac{|X|-1}{|X|}<\beta\le 1$.
	
	\item (2)~
	Arbitary variable precision $\beta$ and fuzzy $O$-similarity relation $R$
	
	Even if overlap function $O$ and grouping function $G$ are dual w.r.t the standard negation $N(x) = 1-x$ for all $x\in [0,1]$, $\underline{R}_{O}^{\beta}(A)$ and $\overline{R}_{G}^{\beta}(A)$ do not have comparable properties.
    A specific example is given below.
	
	Let $X=\{x_1,x_2,x_3\}$ and fuzzy relation $R$ on $X$ as
	$$
	\begin{gathered}
	R=\begin{bmatrix} 1 & 0.6 & 1 \\ 0.6 & 1 & 0.6 \\ 1 & 0.6 & 1 \end{bmatrix}
	\end{gathered}
	$$
	
	Here, we use overlap function $O$ and fuzzy implication $I_O$ defined as, respectively,
	
	$$O(x,y)=xy\ \textnormal{and} \ I_O(x,y)=
	\begin{cases}
	\frac{y}{x}\wedge 1 , & x \ne 0\\
	1, & x=0
	\end{cases}
	\ \textnormal{for all} \ x,y\in [0,1].$$
	
	 It is easy to see that fuzzy relation $R$ is a fuzzy $O$-similarity relation for overlap function $O$.
	Let  $A=\frac{0.8}{x_1} + \frac{0.1}{x_2} + \frac{0.6}{x_3}$ and $\beta=0.5$. By Theorem 2 in \cite{Yao2014Anovelvariable}, it holds that
	\begin{align*}
	\underline{R}_{O}^{\beta}(A)=g(A)=\frac{0.6}{x_1} + \frac{1}{x_2} + \frac{0.6}{x_3}.
	\end{align*}
	
	According to Theorem 3(1)in \cite{Yao2014Anovelvariable} or Proposition~\ref{p:3.3}, we can obtain
	\begin{align*}
	\overline{R}_{G}^{\beta}(A)=(\underline{R}_{O}^{\beta}(A^N))^N=\frac{0.4}{x_1} + \frac{0}{x_2} + \frac{0.4}{x_3},
	\end{align*}
	where $N$ is standard negation $N(x)=1-x$ for all $x\in [0,1]$ and grouping function $G$ takes $G(x,y)=1-(1-x)(1-y)$ for all $x,y\in [0,1]$.
	
	\item (3)~ Arbitary variable precision $\beta$ and fuzzy relation $R$
	
	Let $X=\{x_1,x_2,x_3\}$ and fuzzy relation $R$ on $X$ as
	\begin{align*}
	\begin{gathered}
	R=\begin{bmatrix} 0 & 0.2 & 0.8 \\ 1 & 0 & 1 \\ 0 & 0.1 & 0  \end{bmatrix}
	\end{gathered}.
	\end{align*}
	Since
	$\underset{y\in X}{\bigvee}\{O(R(x_1, y),R(y, x_1)\}=0.2>R(x_1,x_1)$, $R$ is not $O$-transitive. It is obvious that $R$ is not fuzzy $O$-similarity relation.
    Here, the overlap function $O$ and the fuzzy implication $I_O$ from Case(2) continue to be followed.
    Let $A=\frac{0.2}{x_1} +\frac{0}{x_2} +\frac{0.6}{x_3}$ and $\beta =0.5$, By Proposition~\ref{p:3.1}, it holds that
	\begin{align*}
	g_A = \frac{0.75}{x_1}+\frac{0.6}{x_2}+\frac{1}{x_3}.
	\end{align*}
	Furthermore, we conclude that
	\begin{align*}
	\underline{R}_{O}^{\beta}(A)=\frac{0.6}{x_1}+\frac{0.15}{x_2}+\frac{0.6}{x_3}.
	\end{align*}
	Next, we reckon the $G$-granular variable precision upper approximation operator with  $N(x)=1-x$, $G(x,y)= \textnormal{max}\{x,y\}$ and $I^G(x,y) =\left\{
	\begin{aligned}
	y, x<y \\
	0, x\ge y
	\end{aligned}
	\right.
	$ for all $x,y\in [0,1]$.
	It follows from Proposition~\ref{p:3.2} that
	\begin{align*}
	\overline{R}_{G}^{\beta}(A)=\frac{0.2}{x_1}+\frac{0.8}{x_2}+\frac{0.2}{x_3}.
	\end{align*}
	Hence, $\underline{R}_{O}^{\beta}$ and  $\overline{R}_{G}^{\beta}$ are not comparable, where  $\underline{R}_{O}^{\beta}(A)$ and $\overline{R}_{G}^{\beta}(A)$ are not dual w.r.t. the standard negation $N$.
\end{itemize}

\subsection {The degenerated $(O,G)$-granular variable precision fuzzy rough sets}
We define $[x]_R = \{y:R(x,y)=1\}$ when $R$ is a crisp relation on $X$. In particular, if fuzzy relations $R$ and fuzzy sets $A$ take crisp relations and crisp sets, we call the existing models as the degenerated $(O,G)$-GVPFRSs.

\begin{lemma}\label{l:3.1}
	Let $R$ be a crisp relation on $X$, then it holds that for all $A\in \mathscr{F}(X)$,	
	\begin{align*}
	(\underline{R}_{O}^{\beta}(A)) ^N =\overline{R}_{G}^{\beta}(A^N)\  and\  (\overline{R}_{G}^{\beta}(A))^N =\underline{R}_{O}^{\beta}(A^N).
	\end{align*}
\end{lemma}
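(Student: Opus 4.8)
The plan is to bypass Proposition~\ref{p:3.3}, whose conclusion rests on $O$ and $G$ being dual w.r.t.\ $N$, and instead to exploit only the crispness of $R$, arguing directly at the level of the approximation operators rather than at the level of $g_{A}$ and $h_{A}$. Using the closed forms of Propositions~\ref{p:3.1} and~\ref{p:3.2}, for every $z\in X$ I would write
$$(\underline{R}_{O}^{\beta}(A))^{N}(z)=\bigwedge_{x\in X}N\bigl(O(R(x,z),g_{A}(x))\bigr),\qquad \overline{R}_{G}^{\beta}(A^{N})(z)=\bigwedge_{x\in X}G\bigl(R^{N}(x,z),h_{A^{N}}(x)\bigr),$$
so that it suffices to match the two families termwise, i.e.\ to prove the master identity $N(O(R(x,z),g_{A}(x)))=G(R^{N}(x,z),h_{A^{N}}(x))$ for all $x,z\in X$. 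Since $R$ is crisp, $R(x,z)\in\{0,1\}$ and $R^{N}(x,z)=N(R(x,z))$ is its Boolean complement, which invites a case split on the value of $R(x,z)$.

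When $R(x,z)=0$ (so $R^{N}(x,z)=1$) both sides collapse to $1$: the left one because $N(O(0,\cdot))=N(0)=1$ by (O2), the right one because $G(1,\cdot)=1$ by (G3). The substance of the proof is therefore the granule-level identity
$$N\bigl(O(1,g_{A}(x))\bigr)=G\bigl(0,h_{A^{N}}(x)\bigr)\qquad(\star)$$
corresponding to $R(x,z)=1$. To establish $(\star)$ I would push the non-decreasing unary maps $O(1,\cdot)$ and $G(0,\cdot)$ through the finite suprema and infima defining $g_{A}(x)$ and $h_{A^{N}}(x)$ (a monotone map commutes with finite $\bigvee$ and $\bigwedge$), reducing the two sides to doubly-indexed lattice expressions in the single-variable quantities $O(1,I_{O}(R(x,y),A(y)))$ and $G(0,I^{G}(R^{N}(x,y),A^{N}(y)))$.

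The crux is that, for crisp $R(x,y)$, these two single-variable quantities are exact $N$-complements of each other. Indeed, if $R(x,y)=1$ then continuity of $O$ (O5) upgrades the inequality $O(1,I_{O}(1,b))\le b$ from Lemma~\ref{l:nocondition}(1) to the equality $O(1,I_{O}(1,A(y)))=A(y)$, via surjectivity of $O(1,\cdot)$ and the intermediate value theorem, and dually $G(0,I^{G}(0,A^{N}(y)))=A^{N}(y)=N(A(y))$; if instead $R(x,y)=0$ then $O(1,I_{O}(0,A(y)))=O(1,1)=1$ while $G(0,I^{G}(1,A^{N}(y)))=G(0,0)=0=N(1)$. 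In both cases the $G$-term equals $N$ applied to the $O$-term, so after the order-reversing involution $N$ (which swaps the outer $\bigvee$/$\bigwedge$ and replaces each term by its complement) the two sides of $(\star)$ coincide. This yields $(\underline{R}_{O}^{\beta}(A))^{N}=\overline{R}_{G}^{\beta}(A^{N})$ for every $A\in\mathscr{F}(X)$; replacing $A$ by $A^{N}$ and using $N\circ N=\mathrm{id}$ then gives the companion identity $(\overline{R}_{G}^{\beta}(A))^{N}=\underline{R}_{O}^{\beta}(A^{N})$ at no extra cost.

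The step I expect to be the main obstacle is exactly $(\star)$, and the conceptual reason is worth isolating: because $O$ and $G$ are \emph{not} assumed dual, the clean pointwise relation $N(I_{O}(a,b))=I^{G}(N(a),N(b))$ used in Proposition~\ref{p:3.3} is unavailable, and indeed one should not expect $(g_{A})^{N}=h_{A^{N}}$ to hold here. What rescues the argument is the crispness of $R$, since it forces every relevant evaluation into the boundary rows $O(1,\cdot)$ and $G(0,\cdot)$, where continuity alone, rather than duality, supplies the exact residuation equalities needed to pair the $O$- and $G$-terms as $N$-complements. Verifying carefully that these equalities genuinely hold (as opposed to mere inequalities) is where the continuity axioms (O5), (G5) must be invoked, and this is the delicate point to get right.
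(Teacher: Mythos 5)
Your proof is correct, but it takes a genuinely different route from the paper's. The paper's own argument is a short reduction: since $R$ is crisp, the granules $[x_{\lambda}]_{R}^{O}$ and $[x_{\lambda}]_{R}^{G}$ are identified with the $\wedge$-granules and $\vee$-granules, so $\underline{R}_{O}^{\beta}$ and $\overline{R}_{G}^{\beta}$ coincide with the operators built from minimum and maximum; because $\wedge$ and $\vee$ \emph{are} dual w.r.t.\ any involutive $N$, Proposition~\ref{p:3.3} applied to that pair gives both identities at once. You instead stay with the original pair $(O,G)$, use the closed forms of Propositions~\ref{p:3.1} and~\ref{p:3.2}, and prove the termwise identity $N(O(R(x,z),g_{A}(x)))=G(R^{N}(x,z),h_{A^{N}}(x))$ by a case split on $R(x,z)\in\{0,1\}$, the substance being the exact residuation equalities $O(1,I_{O}(1,b))=b$ and $G(0,I^{G}(0,c))=c$, which you correctly obtain from (O5)/(G5) via surjectivity of $O(1,\cdot)$ and $G(0,\cdot)$ together with Lemma~\ref{l:nocondition}(1); all the lattice manipulations (monotone unary maps commuting with finite joins/meets, $N$ reversing them) are sound since $X$ and $\mathscr{F}_{\beta}(X)$ are finite. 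What the paper's route buys is brevity and reuse of Proposition~\ref{p:3.3}; what yours buys is rigor at precisely the point where the paper is sloppy: the asserted pointwise identity $[x_{\lambda}]_{R}^{O}=[x_{\lambda}]_{R}^{\wedge}$ is false for general $O$ (e.g.\ $O(x,y)=x^{2}y^{2}$ gives $O(1,\lambda)=\lambda^{2}$) unless $1$ is an identity element of $O$; only the equality of the granule \emph{families} $\{[x_{\lambda}]_{R}^{O}:\lambda\in[0,1]\}=\{[x_{\mu}]_{R}^{\wedge}:\mu\in[0,1]\}$ holds, and it holds by the very same continuity-plus-IVT argument you make explicit. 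Your closing observation is also correct and worth keeping: without duality one should not expect $(g_{A})^{N}=h_{A^{N}}$ even for crisp $R$, yet $O(1,g_{A}(x))$ and $G(0,h_{A^{N}}(x))$ are exact $N$-complements, which is exactly what the lemma needs.
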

\begin{proof}
	According to the character of crisp relation, then $[x_\lambda]^O_R = [x_\lambda]^{\land}_R\ \textnormal{and}\  [x_\lambda]^G_R = [x_\lambda]^{\vee}_R$ hold for all $x\in X$ and $\lambda \in [0,1]$. Due to the duality of minimum and maximum w.r.t. $N$ and  Proposition~\ref{p:3.3}, for all $A\in\mathscr{F}(X)$, it holds that
	\begin{align*}
	\quad (\underline{R}_{O}^{\beta}(A))^N =\overline{R}_{G}^{\beta}(A^N)\ and\  (\overline{R}_{G}^{\beta}(A))^N =\underline{R}_{O}^{\beta}(A^N).
	\end{align*}
\end{proof}
\begin{proposition}\label{p:3.4}
	Let $R$ be a crisp relation on $X$ and $A\subseteq X$ be a crisp set, then
	\begin{align*}
	\underline{R}_{O}^{\beta}(A)& = \bigcup \{[x]_R:x\in X, |[x]_R\cap A^c|\le (1-\beta)|X|\},\\\overline{R}_{G}^{\beta}(A)& =\bigcap \{[x]_R^c:x\in X, |[x]_R\cap A|\le (1-\beta)|X|\}.
	\end{align*}	
\end{proposition}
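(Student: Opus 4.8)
The plan is to reduce everything to the closed-form expressions of Proposition~\ref{p:3.1} and Proposition~\ref{p:3.2}, evaluate the auxiliary functions $g_{A}$ and $h_{A}$ on the Boolean data produced by a crisp relation $R$ and a crisp set $A$, and then translate the resulting membership conditions into the two cardinality constraints. Throughout I use that for a crisp relation $R(x,y)\in\{0,1\}$, so $y\in[x]_R$ iff $R(x,y)=1$, and that for a crisp set $A(y)\in\{0,1\}$, so $A^{N}=A^{c}$.

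First I would treat the lower operator through Proposition~\ref{p:3.1}, which gives $\underline{R}_{O}^{\beta}(A)=\bigcup\{[x_{g_{A}(x)}]_{R}^{O}:x\in X\}$. The key local computation is the value of $I_{O}(R(x,y),A(y))$ on Boolean arguments: using (O2), (O3) and the definition of $I_{O}$ as the residuum of $O$, one checks that $I_{O}(R(x,y),A(y))=0$ exactly when $R(x,y)=1$ and $A(y)=0$, i.e.\ when $y\in[x]_R\cap A^{c}$, and $I_{O}(R(x,y),A(y))=1$ otherwise. Hence $g_{A}^{(i)}(x)=\bigwedge_{y\in X_i}I_{O}(R(x,y),A(y))$ equals $1$ iff $X_i\cap[x]_R\cap A^{c}=\emptyset$ and equals $0$ otherwise, so $g_{A}(x)\in\{0,1\}$, with $g_{A}(x)=1$ iff some $X_i\in\mathscr{F}_{\beta}(X)$ is disjoint from $[x]_R\cap A^{c}$.

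The main (and essentially only non-routine) obstacle is the combinatorial step hidden in the supremum defining $g_{A}$. Here I would observe that the largest set disjoint from $[x]_R\cap A^{c}$ is its complement $([x]_R\cap A^{c})^{c}$, of cardinality $|X|-|[x]_R\cap A^{c}|$, and that $\mathscr{F}_{\beta}(X)$ is upward closed; therefore a witness $X_i\in\mathscr{F}_{\beta}(X)$ exists iff $|X|-|[x]_R\cap A^{c}|\ge\beta|X|$, i.e.\ iff $|[x]_R\cap A^{c}|\le(1-\beta)|X|$. Thus $g_{A}(x)=1$ precisely on the index set appearing in the claim and $g_{A}(x)=0$ elsewhere. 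To finish, I evaluate the granule on a crisp relation: (O2) and (O3) give $[x_{1}]_{R}^{O}(y)=O(R(x,y),1)=R(x,y)$, i.e.\ $[x_{1}]_{R}^{O}=[x]_R$, while $[x_{0}]_{R}^{O}\equiv 0$ is the bottom element and drops out of the union. Substituting into $\bigcup\{[x_{g_{A}(x)}]_{R}^{O}:x\in X\}$ yields $\underline{R}_{O}^{\beta}(A)=\bigcup\{[x]_R:x\in X,\ |[x]_R\cap A^{c}|\le(1-\beta)|X|\}$.

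For the upper operator I would avoid repeating the argument by invoking the duality of Lemma~\ref{l:3.1}, namely $(\overline{R}_{G}^{\beta}(A))^{N}=\underline{R}_{O}^{\beta}(A^{N})$, which after applying the involutive $N$ reads $\overline{R}_{G}^{\beta}(A)=(\underline{R}_{O}^{\beta}(A^{c}))^{N}$. Applying the lower formula to $A^{c}$ (whose complement is $A$) produces the crisp set $\underline{R}_{O}^{\beta}(A^{c})=\bigcup\{[x]_R:|[x]_R\cap A|\le(1-\beta)|X|\}$; complementing it turns the union of the $[x]_R$ into the intersection of the $[x]_R^{c}$ over the same index set by De Morgan, giving exactly $\overline{R}_{G}^{\beta}(A)=\bigcap\{[x]_R^{c}:x\in X,\ |[x]_R\cap A|\le(1-\beta)|X|\}$. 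Alternatively, one can run the symmetric computation through Proposition~\ref{p:3.2}, using (G2), (G3) to obtain $h_{A}(x)\in\{0,1\}$ with $h_{A}(x)=0$ iff $|[x]_R\cap A|\le(1-\beta)|X|$, together with $[x_{0}]_{R}^{G}=[x]_R^{c}$ and $[x_{1}]_{R}^{G}\equiv 1$ (the latter being the top element, which drops out of the intersection).
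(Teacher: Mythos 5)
Your proof is correct, but it reaches the lower-approximation formula by a different route than the paper. You pass through the closed form of Proposition~\ref{p:3.1}: you compute the residual implication $I_{O}$ on Boolean arguments via (O2)--(O3), conclude that $g_{A}$ is $\{0,1\}$-valued, and identify $\{x : g_{A}(x)=1\}$ with the index set $\{x : |[x]_R\cap A^{c}|\le(1-\beta)|X|\}$ through the upward-closedness of $\mathscr{F}_{\beta}(X)$; the supremum over $\lambda$ is thereby absorbed into $g_{A}$, and only the granules $[x_{1}]_{R}^{O}=[x]_R$ and $[x_{0}]_{R}^{O}=\emptyset$ remain to evaluate. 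The paper instead argues straight from Definition~\ref{d:model}: it shows that for crisp $A$ and any $\lambda\in(0,1]$ the condition set $\{y:[x_{\lambda}]_{R}^{O}(y)\le A(y)\}$ coincides with $\{y:[x]_R(y)\le A(y)\}$ (so the admissible granules are indexed by $x$ alone), and then converts membership of $\{y:[x]_R(y)\le A(y)\}=A\cup(A^{c}\cap[x]_R^{c})$ in $\mathscr{F}_{\beta}(X)$ into the cardinality bound $|A^{c}\cap[x]_R|\le(1-\beta)|X|$. The two arguments rest on the same underlying facts, but your decomposition trades the paper's $\lambda$-independence observation for an explicit combinatorial witness argument, which makes the role of the threshold $\beta$ more transparent; the paper's version is more self-contained in that it never invokes the $g_{A}$ machinery. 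For the upper operator both you and the paper appeal to the duality of Lemma~\ref{l:3.1} applied to $A^{c}$ plus De Morgan; your additional sketch of a direct computation through Proposition~\ref{p:3.2} (with $h_{A}\in\{0,1\}$, $[x_{0}]_{R}^{G}=[x]_R^{c}$ and $[x_{1}]_{R}^{G}\equiv 1$) is a worthwhile bonus, since it makes the second formula independent of Lemma~\ref{l:3.1}, whose own proof in the paper is rather terse.
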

\begin{proof}
	For any $\lambda \in (0,1]$ and crisp sets $A\subseteq X$, we will prove the following holds.
	\begin{align*}
	\{y:[x]_R^O(y)\le A(y)\}=\{y:[x]_R(y)\le A(y)\}.
	\end{align*}
	
	Let $O(R(x,y),\lambda)\le A(y)$. If $A(y)=1$, it is clear that $R(x,y)\le A(y)$. If $A(y)=0,$ we can get that $R(x,y)=0$, otherwise, $O(R(x,y),\lambda)=\lambda \le 0 $,
	which contradicts with $\lambda \in (0,1]$.	Thus, $\{y:[x]_R^O(y)\le A(y)\}\subseteq \{y:[x]_R(y)\le A(y)\}$.
	
 On the other side, $ \{y:\{[x]_R^O(y)\le A(y)\}\supseteq \{y:[x]_R(y)\le A(y)\}$ can hold apparently. Hence, it always holds that $\{y:[x]_R^O(y)\le A(y)\}=\{y:[x]_R(y)\le A(y)\}$ for any crisp sets $A$. Then it follows Definition~\ref{p:3.2} that
	\begin{align*}
	\quad \underline{R}_{O}^{\beta}(A) &= \bigcup \{[x]_R^O:x\in X, \lambda \in [0,1],\{y:[x_\lambda]_R^O(y)\le A(y)\}\in \mathscr{F}_{\beta}(X)\}
	\\ &= \bigcup \{[x]_R:x\in X, \{y:[x]_R(y)\le A(y)\}\in \mathscr{F}_{\beta}(X)\}.
	\end{align*}
	
	Further, we have the following equivalences,
	\begin{align*}
	\quad \{y:[x]_R(y)\le A(y)\}\in\mathscr{F}_{\beta}(X)
	& \iff A\cup (A^c\cap [x]_R^c)\in\mathscr{F}_{\beta}(X) \\
	& \iff |A\cup (A^c\cap [x]_R^c)|\ge\beta |X| \\
	& \iff |A^c\cap [x]_R|\le(1-\beta)|X|,
	\end{align*}	
	then $\underline{R}_{O}^{\beta}(A)= \bigcup \{[x]_R:x\in X, |A^c\cap [x]_R|\le(1-\beta)|X|\}$  for all crisp sets $A$. In addition,
	$R^N=R^c$ and $A^N=A^c$ hold when $R$ and $A$ are crisp relation and crisp set.
	The other equation can be obtained by Lemma~\ref{p:3.1}.	
\end{proof}

\begin{proposition}\label{p:3.5}
	Let $R$ and $A$ be crisp relation and crisp subset on $X$, the following statements hold.
	\begin{enumerate}[(1)]
		\item
		Assuming that $R$ is reflexive, then
		\begin{align*}
		\{x:|[x]_R\cap A^c|\le (1-\beta)|X|\}\subseteq \underline{R}_{O}^{\beta}(A)\  and\
		\overline{R}_{G}^{\beta}(A)\subseteq \{x:|[x]_R\cap A|> (1-\beta)|X|\}.
		\end{align*}
		\item
		Assuming that $R$ is transitive, then
		\begin{align*}
		\underline{R}_{O}^{\beta}(A)\subseteq \{x:|[x]_R\cap A^c|\le (1-\beta)|X|\}\ and\
		\{x:|[x]_R\cap A|> (1-\beta)|X|\}\subseteq \overline{R}_{G}^{\beta}(A).
		\end{align*}
		\item
		Assuming that $R$ is a preorder relation, then
		\begin{align*}
		\underline{R}_{O}^{\beta}(A)= \{x:|[x]_R\cap A^c|\le (1-\beta)|X|\}\ and\
		\overline{R}_{G}^{\beta}(A)=\{x:|[x]_R\cap A|> (1-\beta)|X|\}.
		\end{align*}
	\end{enumerate}   	
\end{proposition}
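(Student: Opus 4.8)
The plan is to derive everything from the closed form established in Proposition~\ref{p:3.4}. Writing $S=\{x\in X:|[x]_R\cap A^c|\le(1-\beta)|X|\}$, that proposition gives $\underline{R}_{O}^{\beta}(A)=\bigcup_{x\in S}[x]_R$, so each half of parts (1) and (2) reduces to a containment between $\bigcup_{x\in S}[x]_R$ and $S$. The first fact I would record is that for a crisp relation $R$ the transitivity notions all collapse: since $R$ takes only the values $0$ and $1$, the condition $R(z,x)\wedge R(x,y)\le R(z,y)$ is exactly the classical statement that $R(z,x)=1$ and $R(x,y)=1$ force $R(z,y)=1$, i.e. $x\in[z]_R$ implies $[x]_R\subseteq[z]_R$. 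Reflexivity likewise just says $x\in[x]_R$ for every $x$.

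For part (1) under reflexivity I would prove the lower inclusion directly: if $x\in S$, then $x\in[x]_R\subseteq\bigcup_{z\in S}[z]_R=\underline{R}_{O}^{\beta}(A)$, whence $S\subseteq\underline{R}_{O}^{\beta}(A)$. For part (2) under transitivity I would prove the reverse inclusion by checking that every generator sits inside $S$: take $z\in S$ and $x\in[z]_R$; transitivity gives $[x]_R\subseteq[z]_R$, hence $[x]_R\cap A^c\subseteq[z]_R\cap A^c$ and $|[x]_R\cap A^c|\le|[z]_R\cap A^c|\le(1-\beta)|X|$, so $x\in S$. Therefore $\underline{R}_{O}^{\beta}(A)=\bigcup_{z\in S}[z]_R\subseteq S$, which is the lower statement of (2).

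The two upper-approximation halves I would obtain by duality rather than by repeating the argument. By Lemma~\ref{l:3.1}, for crisp data $(\overline{R}_{G}^{\beta}(A))^c=\underline{R}_{O}^{\beta}(A^c)$. Applying the reflexive inclusion of part (1) to $A^c$ gives $\{x:|[x]_R\cap A|\le(1-\beta)|X|\}\subseteq\underline{R}_{O}^{\beta}(A^c)=(\overline{R}_{G}^{\beta}(A))^c$; complementing yields $\overline{R}_{G}^{\beta}(A)\subseteq\{x:|[x]_R\cap A|>(1-\beta)|X|\}$, the upper statement of (1). Feeding the transitive inclusion of part (2) through the same identity gives the opposite containment, i.e. the upper statement of (2). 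Part (3) is then immediate: when $R$ is a preorder both (1) and (2) apply, and the two opposite inclusions combine into the asserted equalities.

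The set-theoretic core—reflexivity giving $x\in[x]_R$ and transitivity giving $[x]_R\subseteq[z]_R$—is routine, so I do not expect a substantive obstacle there. The one genuinely delicate point is the bookkeeping of complements: the index set $S$ is cut out by the non-strict inequality $\le(1-\beta)|X|$, whereas the upper-approximation statements carry the strict inequality $>(1-\beta)|X|$, and these must line up exactly when one passes to complements of index sets and invokes the duality in Lemma~\ref{l:3.1}. Getting that strict/non-strict switch and the correct $A$ versus $A^c$ to match is the step I would carry out most carefully.
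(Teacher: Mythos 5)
Your proposal is correct and follows essentially the same route as the paper: both derive everything from the closed form in Proposition~\ref{p:3.4}, use reflexivity ($x\in[x]_R$) for the lower inclusion of (1), use crisp transitivity ($x\in[z]_R$ implies $[x]_R\subseteq[z]_R$, hence $[x]_R\cap A^c\subseteq[z]_R\cap A^c$) for (2), obtain both upper-approximation statements by complementing through the duality $(\overline{R}_{G}^{\beta}(A))^c=\underline{R}_{O}^{\beta}(A^c)$ of Lemma~\ref{l:3.1}, and get (3) by combining (1) and (2). The strict/non-strict inequality switch you flag is handled in the paper exactly as you describe, by taking complements of the index sets.
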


\begin{proof}
	\begin{enumerate}[(1)]
		\item
		Since $R$ is reflexive, then it follows Proposition~\ref{p:3.4} that
		\begin{align*}
		\{x:|[x]_R\cap A^c|\le (1-\beta)|X|\}\subseteq \bigcup\{x:|[x]_R\cap A^c|\le (1-\beta)|X|\}=\underline{R}_{O}^{\beta}(A).
		\end{align*}
		
		Further according to Lemma~\ref{l:3.1}, one has that
		\begin{align*}
		\overline{R}_{G}^{\beta}(A)	=(\underline{R}_{O}^{\beta}(A^c))^c \subseteq
		\{x:|[x]_R\cap A|\le (1-\beta)|X|\}^c = \{x:|[x]_R\cap A|> (1-\beta)|X|\}.
		\end{align*}
		
		\item
		For any $w\in\underline{R}_{O}^{\beta}(A)$, there exits an $x\in X$ such that $ w\in [x]_R$ and $|[x]_R\cap A^c|\le (1-\beta)|X|$. Due to the transitivity of $R$, $R(w,y)\le R(x,y)$ holds for all $y\in X$. Therefore, we obtain $[w]_R\cap A^c \subseteq [x]_R\cap A^c$. Furthermore, it follows Proposition~\ref{p:3.4} that
		\begin{align*}
		w\in\{x:|[x]_R\cap A^c|\le (1-\beta)|X|\}.
		\end{align*}
		
		So $\underline{R}_{O}^{\beta}(A)\subseteq 	\{x:|[x]_R\cap A^c|\le (1-\beta)|X|\}$.  According to Lemma~\ref{l:3.1} that
		\begin{align*}
		\overline{R}_{G}^{\beta}(A)=(\underline{R}_{O}^{\beta}(A^c))^c\supseteq (\{x:|[x]_R\cap A|\le (1-\beta)|X|\})^c = \{x:|[x]_R\cap A|> (1-\beta)|X|\}.
		\end{align*}
		\item
		It can be proved by item (1) and item (2).
	\end{enumerate}
\end{proof}

 \section{Characterizations of the $(O,G)$-granular variable precision fuzzy rough sets}
By Remark~\ref{r:3.2}, we realise that two fuzzy sets $g_A$ and $h_A$ are vital to calculate the $\overline{R}_{G}^{\beta}$ and $\underline{R}_{O}^{\beta}$, respectively. Thus, we start this section with discussing their relevant properties. And then, some conclusions are drawn under diverse conditions.

\subsection{Some conclusions based on general fuzzy relations}

\begin{lemma}\label{l:4.1}
	Let $R$ be a fuzzy relation on $X$, then the following statements hold.
	\begin{enumerate}[(1)]
		\item 
		$g^{(i)}_{({{\cap}_{k\in I}A_k})} = \bigcap _{k\in I} g^{(i)}_{A_k} \ and \ h^{(i)}_{({{\cup}_{k\in I}A_k})} = \bigcup _{k\in I} h^{(i)}_{A_k} $ for all  $X_{i}\in \mathscr{F}_{\beta}(X)$  and  $\{A_k\}_{k \in I}\subseteq \mathscr{F}(X).$
		
		\item 
		$A\subseteq B \ implies \ g_A \subseteq g_B\ and \ h_A \subseteq h_B\ for\ all\ A,B\in \mathscr{F}(X).$
	\end{enumerate}
\end{lemma}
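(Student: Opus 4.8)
The plan is to treat the two displayed identities in part (1) as pointwise statements and to reduce each to the distributivity properties of the residual implication and co-implication already recorded in Lemma~\ref{l:nocondition}(2). First I would unfold the definition of $g^{(i)}$. Since the intersection of fuzzy sets is computed pointwise as an infimum, $(\bigcap_{k\in I}A_k)(y)=\bigwedge_{k\in I}A_k(y)$, so that
$$g^{(i)}_{(\bigcap_{k\in I}A_k)}(x)=\bigwedge_{y\in X_i}I_O\Bigl(R(x,y),\ \bigwedge_{k\in I}A_k(y)\Bigr).$$
The crucial step is to invoke $I_O(y,\bigwedge_{i}x_i)=\bigwedge_{i}I_O(y,x_i)$ from Lemma~\ref{l:nocondition}(2), applied with the fixed first argument $R(x,y)$, which converts the inner implication into $\bigwedge_{k\in I}I_O(R(x,y),A_k(y))$. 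It then remains only to interchange the two infima (valid in a complete lattice) to reach $\bigwedge_{k\in I}\bigwedge_{y\in X_i}I_O(R(x,y),A_k(y))=\bigwedge_{k\in I}g^{(i)}_{A_k}(x)$, i.e. $(\bigcap_{k\in I}g^{(i)}_{A_k})(x)$. The identity for $h^{(i)}$ is entirely dual: union is a pointwise supremum, and the companion law $I^G(y,\bigvee_{i}x_i)=\bigvee_{i}I^G(y,x_i)$ from Lemma~\ref{l:nocondition}(2) lets me pull the join out, after which I swap the two suprema.

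For part (2) I would argue directly from monotonicity. The first observation is that both $I_O$ and $I^G$ are non-decreasing in their second argument: from $I_O(a,b)=\max\{z:O(a,z)\le b\}$, enlarging $b$ enlarges the feasible set and hence cannot decrease the maximum, while from $I^G(a,b)=\min\{z:b\le G(a,z)\}$, enlarging $b$ shrinks the feasible set and hence cannot decrease the minimum. Granting $A\subseteq B$, i.e. $A(y)\le B(y)$ for every $y\in X$, this gives $I_O(R(x,y),A(y))\le I_O(R(x,y),B(y))$ for each $y\in X_i$; taking the infimum over $y\in X_i$ yields $g^{(i)}_A\subseteq g^{(i)}_B$, and then the supremum over $X_i\in\mathscr{F}_\beta(X)$ yields $g_A\subseteq g_B$. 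The argument for $h$ is the mirror image, with $I^G$ in place of $I_O$ and the roles of $\bigwedge$ and $\bigvee$ exchanged. Alternatively, part (2) follows from part (1), since $A\subseteq B$ is equivalent to $A\cap B=A$, whence $g^{(i)}_A=g^{(i)}_{A\cap B}=g^{(i)}_A\cap g^{(i)}_B\subseteq g^{(i)}_B$, and dually $A\cup B=B$ gives $h^{(i)}_A\subseteq h^{(i)}_B$.

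I do not anticipate a genuine obstacle: everything collapses onto the distributivity identities of Lemma~\ref{l:nocondition}(2) and the monotonicity of the implication and co-implication in their second slot. The only place that demands a little care is the bookkeeping in part (1) — making sure to pair $I_O$ with meets and $I^G$ with joins, and to interchange the two like quantifiers rather than the mismatched ones — but this is routine once the definitions are written out.
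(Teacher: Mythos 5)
Your proposal is correct and follows essentially the same route as the paper: your part (1) is exactly the paper's argument—unfold the definition, apply Lemma~\ref{l:nocondition}(2) to distribute $I_O$ (resp.\ $I^G$) over the meet (resp.\ join), and interchange the two like quantifiers. For part (2) the paper simply derives the claim from item (1), which matches your alternative argument ($A\subseteq B$ gives $A\cap B=A$, hence $g^{(i)}_A\subseteq g^{(i)}_B$, then pass to the supremum over $X_i\in\mathscr{F}_{\beta}(X)$); your primary second-argument-monotonicity argument is an equally routine variant, so nothing of substance differs.
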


\begin{proof}
	\begin{enumerate}[(1)]
		\item 
		By Lemma~\ref{l:nocondition}(2), it holds that
		\begin{align*}
		g^{(i)}_{(\bigcap_{k\in I} A_k)}(x)
		&=\underset{y\in X_{i}}{\bigwedge}I_O(R(x,y),(\underset{k\in I}{\bigwedge}A_k(x))) \\
		&=\underset{y\in X_{i}}{\bigwedge}\ \underset{k\in I}{\bigwedge}I_O(R(x,y),A_k(x)) \\
		&=\underset{k\in I}{\bigwedge}\ \underset{y\in X_{i}}{\bigwedge}I_O(R(x,y),A_k(x)) \\
		&=\underset{k\in I}{\bigwedge} g^{(i)}_{A_k}(x)\\
		&=\left(\underset{k\in I}{\bigcap} g^{(i)}_{A_k}\right)(x).
		\end{align*}
		where $x\in X$ and $X_i\in\mathscr{F}_{\beta}(X)$. Hence, we get $g^{(i)}_{({{\bigcap}_{k\in I }A_k})} = \bigcap _{k\in I } g^{(i)}_{A_k}$. In a similar way, $h^{(i)}_{({{\bigcup}_{k\in I }A_k})} = \bigcup _{k\in I } h^{(i)}_{A_k}$ can be obtained for all $X_i\in\mathscr{F}_{\beta}(X)$ and $\{A_k\}_{k\in I}\subseteq \mathscr{F}(X)$.	
		\item 
		According to item (1), it can be directly proved.
		
	\end{enumerate}
\end{proof}

\begin{lemma}\label{l:4.2}
	Let $R$ be a fuzzy relation on $X$, 1 and 0 be the identity element of overlap function $O$ and grouping function $G$, respectively. Then the following statements hold.
	\begin{enumerate}[(1)]
		\item 
		$g_X=X \ and \ h_\emptyset=\emptyset.$
		
		\item 
		$\alpha_{X}\subseteq g_{\alpha_{X}}\ and \ h_{\alpha_{X}}\subseteq \alpha_{X} \ for \ all \ \alpha\in [0,1].$
		
		\item 
		$I\!f\ A=I_O{(y_{\gamma},\alpha_{X})} \ and \ \gamma = 1,\ then $
		
		$\qquad g_A(x) =\left\{
		\begin{aligned}
		&1,& \quad 0\le \beta\le \frac{|X|-1}{|X|}, \\
		&I_O{(R(x,y),\alpha)}, &\quad \frac{|X|-1}{|X|}<\beta\le 1.
		\end{aligned}
		\right. $
		
		\item 
		$I\!f\ A=y_{\alpha},\ then$
		
		$\qquad h_A(x) =\left\{
		\begin{aligned}
		&0,& \quad 0\le \beta\le \frac{|X|-1}{|X|}, \\
		&I^G{(R^N(x,y),\alpha)}, &\quad \frac{|X|-1}{|X|}<\beta\le 1.
		\end{aligned}
		\right.$
	\end{enumerate}
\end{lemma}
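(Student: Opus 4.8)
The plan is to split the four items into two groups. Items (1) and (2) follow directly from the boundary behaviour of $I_O$ and $I^G$ together with Lemma~\ref{l:neutal}, whereas items (3) and (4) require first evaluating the fuzzy set $A$ pointwise and then carrying out a combinatorial case split on the variable precision $\beta$. Throughout I would use the elementary facts $I_O(a,1)=\max\{z:O(a,z)\le 1\}=1$, $I_O(0,a)=\max\{z:O(0,z)\le a\}=1$, and $I^G(a,0)=\min\{z:0\le G(a,z)\}=0$, each valid for all $a\in[0,1]$ since $O(a,z)\le 1$, $O(0,z)=0$, and $G(a,z)\ge 0$ always hold.

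For item (1), I would note that every term of $g_X^{(i)}(x)=\bigwedge_{y\in X_i}I_O(R(x,y),1)$ equals $1$ and every term of $h_\emptyset^{(i)}(x)=\bigvee_{y\in X_i}I^G(R^N(x,y),0)$ equals $0$; taking the outer $\bigvee$ and $\bigwedge$ gives $g_X=X$ and $h_\emptyset=\emptyset$. For item (2), I would invoke Lemma~\ref{l:neutal}(3), which yields $\alpha\le I_O(R(x,y),\alpha)$ and $\alpha\ge I^G(R^N(x,y),\alpha)$ for each $y$. Hence $g_{\alpha_X}^{(i)}(x)=\bigwedge_{y\in X_i}I_O(R(x,y),\alpha)\ge\alpha$ and $h_{\alpha_X}^{(i)}(x)=\bigvee_{y\in X_i}I^G(R^N(x,y),\alpha)\le\alpha$, and since $\bigvee$ of values $\ge\alpha$ stays $\ge\alpha$ while $\bigwedge$ of values $\le\alpha$ stays $\le\alpha$, the outer aggregations preserve these bounds and give $\alpha_X\subseteq g_{\alpha_X}$ and $h_{\alpha_X}\subseteq\alpha_X$.

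For item (3), the first step is to evaluate $A=I_O(y_\gamma,\alpha_X)$ with $\gamma=1$ pointwise. Since $y_\gamma(y)=1$ and $1$ is the identity of $O$, Lemma~\ref{l:neutal}(1) gives $A(y)=I_O(1,\alpha)=\alpha$, whereas for $z\ne y$ we have $y_\gamma(z)=0$ and thus $A(z)=I_O(0,\alpha)=1$. Substituting into $g_A^{(i)}(x)=\bigwedge_{z\in X_i}I_O(R(x,z),A(z))$, every index $z\ne y$ contributes $I_O(R(x,z),1)=1$, so $g_A^{(i)}(x)=1$ when $y\notin X_i$ and $g_A^{(i)}(x)=I_O(R(x,y),\alpha)$ when $y\in X_i$. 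Consequently $g_A(x)=\bigvee_{X_i\in\mathscr{F}_\beta(X)}g_A^{(i)}(x)$ equals $1$ exactly when some member of $\mathscr{F}_\beta(X)$ omits $y$, and equals $I_O(R(x,y),\alpha)$ otherwise.

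The decisive step, which I expect to be the only nontrivial part, is the combinatorial dichotomy that fixes the threshold $\frac{|X|-1}{|X|}$. A subset of $X$ omitting $y$ has cardinality at most $|X|-1$, and such a subset lies in $\mathscr{F}_\beta(X)$ iff $|X|-1\ge\beta|X|$, i.e. iff $\beta\le\frac{|X|-1}{|X|}$; in that range one may take $X_i=X\setminus\{y\}$ and conclude $g_A(x)=1$. Conversely, if $\frac{|X|-1}{|X|}<\beta\le 1$ then $\beta|X|>|X|-1$ forces $|X_i|=|X|$, hence $X_i=X\ni y$ for every $X_i\in\mathscr{F}_\beta(X)$, so $g_A(x)=I_O(R(x,y),\alpha)$. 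This yields the stated two-case formula. Item (4) is handled dually: with $A=y_\alpha$ one has $A(z)=0$ for $z\ne y$ and $I^G(a,0)=0$, so $h_A^{(i)}(x)=0$ when $y\notin X_i$ and $h_A^{(i)}(x)=I^G(R^N(x,y),\alpha)$ when $y\in X_i$; taking the outer $\bigwedge$ and reusing the same threshold argument gives the claimed expression for $h_A(x)$.
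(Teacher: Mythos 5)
Your proposal is correct and follows essentially the same route as the paper's own proof: items (1)--(2) from the boundary values of $I_O$, $I^G$ and Lemma~\ref{l:neutal}(3), and items (3)--(4) by pointwise evaluation of $A$ together with the observation that $X\setminus\{y\}\in\mathscr{F}_{\beta}(X)$ iff $\beta\le\frac{|X|-1}{|X|}$, while $\mathscr{F}_{\beta}(X)=\{X\}$ otherwise. If anything, your write-up is slightly more careful than the paper's, since you make explicit the step $A(z)=I_O(0,\alpha)=1$ for $z\ne y$ (which the paper uses silently) and you spell out the dual argument for item (4) rather than declaring it similar.
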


\begin{proof}
	\begin{enumerate}[(1)]
		\item 
		It follows Lemma~\ref{l:neutal}(1) that $I_O(\alpha,1)=1$ and $I^G(\alpha,0)=0$ for all $\alpha\in [0,1]$. Then for all $x\in X$,	
		\begin{align*}
		g_X(x) &=\underset{X_{i}\in \mathscr{F}_{\beta}(X)}{\bigvee}\ \underset{y\in X_{i}}{\bigwedge}(I_O(R(x,y),X(y)))=\underset{X_{i}\in \mathscr{F}_{\beta}(X)}{\bigvee}\ \underset{y\in X_{i}}{\bigwedge}(I_O(R(x,y),1)) =1,\\	
		h_\emptyset (x) &=\underset{X_{i}\in \mathscr{F}_{\beta}(X)}{\bigwedge}\ \underset{y\in X_{i}}{\bigvee}(I^G(R^N(x,y),\emptyset(y)))=\underset{X_{i}\in \mathscr{F}_{\beta}(X)}{\bigwedge}\ \underset{y\in X_{i}}{\bigvee}(I^G(R^N(x,y),0)) =0.
		\end{align*}
		Hence, we get that $g_X=X$ and $h_\emptyset = \emptyset.$
		
		\item 
		From Lemma~\ref{l:neutal}(3), it follows that
		\begin{align*}
		g_{\alpha_{X}}(x)
		&=\underset{X_{i}\in \mathscr{F}_{\beta}(X)}{\bigvee}\ \underset{y\in X_{i}}{\bigwedge}I_O(R(x,y),\alpha_X (x))\\
		&=\underset{X_{i}\in \mathscr{F}_{\beta}(X)}{\bigvee}\ \underset{y\in X_{i}}{\bigwedge}I_O(R(x,y),\alpha)\\
		&\ge\alpha_{X}(x).
		\end{align*}
		Hence, we get $\alpha_{X}\subseteq g_{\alpha_{X}}$. In a similar way,  $h_{\alpha_{X}}\subseteq \alpha_{X}$ holds.
		
		\item 
		It is easy to get $\mathscr{F}_{\beta}(X)=\{X\}$ when $\frac{|X|-1}{|X|} < \beta\le 1$.
		Let $A=I_O(y_\gamma,\alpha_{X})$ and $\gamma = 1$, for any $x\in X$,
		\begin{align*}
		g_A(x)
		&=\underset{z\in X}{\bigwedge}I_O(R(x,z),A(z))\\
		&=I_O(R(x,y),I_O(y_\gamma (y),\alpha_{X}(y)))\\
		&=I_O(R(x,y),I_O(1,\alpha))\\
		&=I_O(R(x,y),\alpha).
		\end{align*} 	
		Otherwise, if $0\le\beta\le\frac{|X|-1}{|X|}$, we obtain $X-\{y\}\in\mathscr{F}_{\beta}(X)$. Then for all $x\in X$,
		\begin{align*}
		g_A(x)&\ge \underset{z\in\, X-\{y\}}{\bigwedge} I_O(R(x,z),A(z))\\
		&= \underset{z\in \,X-\{y\}}{\bigwedge} I_O(R(x,z),I_O(y_\gamma(z),\alpha))\\
		&=1.
		\end{align*}
		\item
		The proof is similar as item (3).
	\end{enumerate}
\end{proof}

\begin{lemma}\label{l:4.3}
	Let $R$ be a fuzzy relation on $X$, overlap function $O$ and grouping function $G$ satisfy \textnormal{(O6)} and \textnormal{(G6)}, respectively. Then the following statements hold.
	\begin{enumerate}[(1)]
		\item 
		$g_{(I_O(\alpha_{X},\,A))}=I_O(\alpha_{X}, g_A)\ and \  h_{(I^G(\alpha_{X},\,A))}=I^G(\alpha_{X},h_A) \ for\ all \ \alpha\in [0,1] \ and \ A\in \mathscr{F}(X).$
		
		\item 
		$O(\alpha_{X},g_A)\subseteq g_{(O(\alpha_{X},A))}\ and \ G(\alpha_{X},h_A)\supseteq h_{(G(\alpha_{X},A))} \ for \ all \ \alpha\in [0,1] \ and \ A\in \mathscr{F}(X).$
	\end{enumerate}
\end{lemma}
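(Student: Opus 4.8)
The plan is to prove both equalities in (1) by rewriting the double-indexed expressions defining $g$ and $h$ with the exchange property and the distributivity laws for $I_O$ and $I^G$, and to prove the two inclusions in (2) by the same distributivity together with the single interaction inequality recorded in Lemma~\ref{l:associative law}(1). Throughout I would fix $x\in X$ and argue pointwise, using repeatedly that $X$ is finite, so every $\bigvee$ and $\bigwedge$ below is a supremum/infimum of finitely many reals on the chain $[0,1]$; consequently every non-decreasing unary map commutes with these joins and meets \emph{exactly}.

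For the first identity in (1), I would start from $g_{(I_O(\alpha_X,A))}(x)=\bigvee_{X_i}\bigwedge_{y\in X_i}I_O(R(x,y),I_O(\alpha,A(y)))$. Since $O$ satisfies (O6), the implication $I_O$ obeys the exchange principle (Remark~\ref{r:2.3}), so the inner term equals $I_O(\alpha,I_O(R(x,y),A(y)))$. I would then pull $I_O(\alpha,\cdot)$ out of the inner meet via Lemma~\ref{l:nocondition}(2) and out of the outer join via Lemma~\ref{l:nocondition}(3); what remains inside is precisely $g_A(x)$, giving $I_O(\alpha,g_A(x))=I_O(\alpha_X,g_A)(x)$. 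The second identity follows by the mirror computation on $h$: the exchange property of $I^G$ (again Remark~\ref{r:2.3}, now from (G6)) swaps the arguments, and Lemma~\ref{l:nocondition}(2)--(3) let $I^G(\alpha,\cdot)$ pass through the inner join and the outer meet.

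For (2), I would compute $O(\alpha,g_A(x))$. Because $O(\alpha,\cdot)$ is non-decreasing, it commutes with the finite outer join and the finite inner meet, so $O(\alpha,g_A(x))=\bigvee_{X_i}\bigwedge_{y\in X_i}O\bigl(\alpha,I_O(R(x,y),A(y))\bigr)$. Here the only non-trivial ingredient enters: Lemma~\ref{l:associative law}(1) gives $O(\alpha,I_O(R(x,y),A(y)))\le I_O(R(x,y),O(\alpha,A(y)))$, and bounding each term this way turns the right-hand side into $g_{(O(\alpha_X,A))}(x)$, yielding $O(\alpha_X,g_A)\subseteq g_{(O(\alpha_X,A))}$. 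The claim for $h$ follows from the companion bound $I^G(R^N(x,y),G(\alpha,A(y)))\le G(\alpha,I^G(R^N(x,y),A(y)))$ of Lemma~\ref{l:associative law}(1), which reverses the inclusion into $G(\alpha_X,h_A)\supseteq h_{(G(\alpha_X,A))}$.

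The subtlety to watch is that the $h$-statements cannot be deduced from the $g$-statements via the duality $(g_A)^N=h_{A^N}$ of Proposition~\ref{p:3.3}, since that proposition requires $O$ and $G$ to be dual w.r.t.\ $N$, whereas the present hypotheses only assume (O6) and (G6) separately. Each $h$-claim must therefore be re-derived from the grouping-function analogues of the distributivity and interaction laws. Beyond this bookkeeping, the genuine obstacle is orienting the inequality in (2) correctly: one must observe that $O(\alpha,\cdot)$ commutes with the joins and meets only up to \emph{equality} (by finiteness), so that the unique source of the one-sided inclusion is the bound from Lemma~\ref{l:associative law}(1), and dually for $h$ the reversed bound forces the opposite inclusion.
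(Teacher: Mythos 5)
Your proposal is correct and follows essentially the same route as the paper: part (1) via the exchange property of $I_O$ and $I^G$ (equivalently Lemma~\ref{l:nocondition}(5)) together with the distributivity laws of Lemma~\ref{l:nocondition}(2)--(3), and part (2) via the termwise interaction bound of Lemma~\ref{l:associative law}(1) combined with exact commutation of $O(\alpha,\cdot)$ and $G(\alpha,\cdot)$ with the finite joins and meets. Your explicit remark that the $h$-claims must be re-derived separately (rather than obtained by duality, which is unavailable without assuming $O$ and $G$ dual w.r.t.\ $N$) is exactly what the paper's ``in a similar way'' silently requires.
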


\begin{proof}
	\begin{enumerate}[(1)]
		\item 
		According to Lemma~\ref{l:nocondition}(2) and Lemma~\ref{l:nocondition}(2), one has that
		\begin{align*}
		g^{(i)}_{(I_O(\alpha_{X},\,A)}(x)
		&=\underset{y\in X_{i}}{\bigwedge}I_O(R(x,y),I_O(\alpha,A(y)))\\
		&=\underset{y\in X_{i}}{\bigwedge}I_O(O(\alpha,R(x,y)),A(y))\\
		&=\underset{y\in X_{i}}{\bigwedge}I_O(\alpha,I_O(R(x,y),A(y)))\\
		&=I_O(\alpha,\underset{y\in X_{i}}{\bigwedge}(I_O(R(x,y),A(y))))\\
		&=I_O(\alpha,g^{(i)}_A(x))\\	
		&=I_O(\alpha_X,g^{(i)}_A)(x).
		\end{align*}
		where $x\in X, \alpha\in [0,1]$ and $X_i\in\mathscr{F}_{\beta}(X)$. Since $\mathscr{F}_{\beta}(X)$ is finite, by Lemma~\ref{l:nocondition}(3), then for all $A\in \mathscr{F}(X)$,
		\begin{align*}
		g_{(I_O(\alpha_{X},\,A)}(x)
		&=\underset{X_{i}\in \mathscr{F}_{\beta}(X)}{\bigvee} g^{(i)}_{(I_O(\alpha_{X},\ A))}(x)\\
		&=\underset{X_{i}\in \mathscr{F}_{\beta}(X)}{\bigvee}I_O(\alpha,g^{(i)}_{A}(x))\\
		&=I_O(\alpha,\underset{X_{i}\in \mathscr{F}_{\beta}(X)}{\bigvee}g^{(i)}_{A}(x))\\
		&=I_O(\alpha,g_{A}(x))\\
		&=I_O(\alpha_{X},g_{A})(x).
		\end{align*}
		Hence ,we get $g_{I_O(\alpha_{X},\,A)} = I_O(\alpha_{X},g_A)$.
		In a similar way,   $h_{(I^G(\alpha_{X},\,A))}=I^G(\alpha_{X},h_A)$ holds .
		
		\item 
		According to Lemma~\ref{l:associative law}(1) that for any $x\in X$,
		\begin{align*}
		g_{(O(\alpha_{X},\,A))}(x)
		&= \underset{X_{i}\in \mathscr{F}_{\beta}(X)}{\bigvee}\ \underset{y\in X_{i}}{\bigwedge}I_O(R(x,y),O(\alpha,A(y)))\\
		&\ge\underset{X_{i}\in \mathscr{F}_{\beta}(X)}{\bigvee}\ \underset{y\in X_{i}}{\bigwedge}O(\alpha,I_O(R(x,y),A(y)))\\
		&= O(\alpha,\underset{X_{i}\in \mathscr{F}_{\beta}(X)}{\bigvee}\ \underset{y\in X_{i}}{\bigwedge}I_O(R(x,y),A(y)))\\
		&=O(\alpha_X,g_A)(x).
		\end{align*}
		Then we conclude $O(\alpha_{X},g_A)\subseteq g_{(O(\alpha_{X},\,A))}$.
		In a similar way,  $G(\alpha_{X},h_A)\supseteq h_{(G(\alpha_{X},A))}$ holds.
	\end{enumerate}
\end{proof}

\begin{proposition}\label{p:4.1}
	Let $R$ be a fuzzy relation on $X$. Then the following statements hold.
	\begin{enumerate}[(1)]
		\item 
		$A\subseteq B \ implies \ \underline{R}_{O}^{\beta}(A) \subseteq \underline{R}_{O}^{\beta}(B)\ and \ \overline{R}_{G}^{\beta}(A) \subseteq \overline{R}_{G}^{\beta}(B)\ for\ all\ A,B\in \mathscr{F}(X).$
		
		\item 
		$If\ \beta>0.5,\ then\ for\ all\ A, B\in\mathscr{F}(X),$	
		\begin{align*}
		\underline{R}_{O}^{\beta}(A)\cup\underline{R}_{O}^{\beta}(B)\subseteq\underline{R}_{O}^{2\beta -1}(A\cup B)\ and\ \overline{R}_{G}^{2\beta -1}(A\cap B)\subseteq\overline{R}_{G}^{\beta}(A)\cap\overline{R}_{G}^{\beta}(B).
		\end{align*}
	\end{enumerate}
\end{proposition}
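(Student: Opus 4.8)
The plan is to run everything through the computational representations of Proposition~\ref{p:3.1} and Proposition~\ref{p:3.2}, which express both operators as monotone images of the fuzzy sets $g_A$ and $h_A$: for every $z\in X$,
$$\underline{R}_{O}^{\beta}(A)(z)=\bigvee_{x\in X}O(R(x,z),g_{A}(x))\qquad\text{and}\qquad \overline{R}_{G}^{\beta}(A)(z)=\bigwedge_{x\in X}G(R^{N}(x,z),h_{A}(x)).$$
Part (1) is then immediate. By Lemma~\ref{l:4.1}(2), $A\subseteq B$ gives $g_{A}\subseteq g_{B}$ and $h_{A}\subseteq h_{B}$; since $O$ and $G$ are non-decreasing in their second argument by (O4) and (G4), applying them pointwise and taking the outer $\bigvee_{x}$ (resp. $\bigwedge_{x}$) preserves these inequalities, which is exactly $\underline{R}_{O}^{\beta}(A)\subseteq\underline{R}_{O}^{\beta}(B)$ and $\overline{R}_{G}^{\beta}(A)\subseteq\overline{R}_{G}^{\beta}(B)$.

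For part (2) I would first pull the set operations inside the operators. Because $O(a,\cdot)$ and $G(a,\cdot)$ are non-decreasing on the chain $[0,1]$, they commute with binary $\vee$ and $\wedge$, i.e. $O(a,b)\vee O(a,c)=O(a,b\vee c)$ and $G(a,b)\wedge G(a,c)=G(a,b\wedge c)$. Using this with the representation above, a short computation gives
$$\big(\underline{R}_{O}^{\beta}(A)\cup\underline{R}_{O}^{\beta}(B)\big)(z)=\bigvee_{x\in X}O\big(R(x,z),(g_{A}\vee g_{B})(x)\big),$$
and dually $\big(\overline{R}_{G}^{\beta}(A)\cap\overline{R}_{G}^{\beta}(B)\big)(z)=\bigwedge_{x\in X}G\big(R^{N}(x,z),(h_{A}\wedge h_{B})(x)\big)$, where $\vee,\wedge$ are taken pointwise. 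Hence, once more by monotonicity of $O$ and $G$, it suffices to prove the two pointwise estimates (writing $g_{C}^{\beta}$, $h_{C}^{\beta}$ to display the precision at which the functions are computed)
$$g_{A}^{\beta}\vee g_{B}^{\beta}\subseteq g_{A\cup B}^{\,2\beta-1}\qquad\text{and}\qquad h_{A\cap B}^{\,2\beta-1}\subseteq h_{A}^{\beta}\wedge h_{B}^{\beta}.$$

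The decisive input is a monotonicity in the precision parameter. Since $2\beta-1\le\beta$, the defining families satisfy $\mathscr{F}_{\beta}(X)\subseteq\mathscr{F}_{2\beta-1}(X)$; as $g_{C}^{\beta}$ is a supremum of the $g_{C}^{(i)}$ over this family and $h_{C}^{\beta}$ an infimum, enlarging the family yields $g_{C}^{\beta}\subseteq g_{C}^{\,2\beta-1}$ and $h_{C}^{\,2\beta-1}\subseteq h_{C}^{\beta}$ for every $C\in\mathscr{F}(X)$. Combining this with the set-monotonicity of Lemma~\ref{l:4.1}(2) (applied to $A\subseteq A\cup B$, $B\subseteq A\cup B$ on the one hand, and $A\cap B\subseteq A$, $A\cap B\subseteq B$ on the other) gives $g_{A}^{\beta}\subseteq g_{A\cup B}^{\beta}\subseteq g_{A\cup B}^{\,2\beta-1}$ together with the analogue for $B$, and $h_{A\cap B}^{\,2\beta-1}\subseteq h_{A\cap B}^{\beta}\subseteq h_{A}^{\beta}$ together with the analogue for $B$, which are precisely the required estimates. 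The hypothesis $\beta>0.5$ enters only to keep $2\beta-1\in(0,1]$ a legitimate precision; its conceptual source is the cardinality bound $|X_{i}\cap X_{j}|\ge(2\beta-1)|X|$ for $X_{i},X_{j}\in\mathscr{F}_{\beta}(X)$, which explains why $2\beta-1$ is the natural threshold should one argue directly with pairs of witness sets rather than through family inclusion.

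I do not expect a serious obstacle: the statement is essentially the composition of the set-monotonicity already recorded in part (1) with the precision-monotonicity of the families $\mathscr{F}_{\beta}(X)$. The only steps that need care are the interchange of $\vee$ and $\wedge$ with $O$ and $G$ (which relies on nothing beyond monotonicity on a totally ordered domain) and keeping the direction of the precision-monotonicity straight; the remainder is the same bookkeeping of suprema and infima already used in Propositions~\ref{p:3.1} and~\ref{p:3.2}.
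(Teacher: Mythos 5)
Your proof is correct. Part (1) coincides with the paper's argument, which likewise reads monotonicity off Propositions~\ref{p:3.1}, \ref{p:3.2} and Lemma~\ref{l:4.1}(2). In part (2), however, you take a genuinely different route. The paper argues granule by granule: for each $x\in X$ it picks witness sets $X_i,X_j\in\mathscr{F}_{\beta}(X)$ attaining $g_A(x)=g_A^{(i)}(x)$ and $g_B(x)=g_B^{(j)}(x)$, invokes the inclusion--exclusion bound $|X_i\cap X_j|\ge(2\beta-1)|X|$ to place $X_i\cap X_j$ in $\mathscr{F}_{2\beta-1}(X)$, checks $\bigl([x_{g_A(x)}]^{O}_R\cup[x_{g_B(x)}]^{O}_R\bigr)(y)\le(A\cup B)(y)$ for $y\in X_i\cap X_j$, and then concludes from Definition~\ref{d:model}. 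You avoid witness sets altogether by composing two monotonicities: set monotonicity $g_A^{\beta}\subseteq g_{A\cup B}^{\beta}$ (Lemma~\ref{l:4.1}(2)) and precision monotonicity $g_C^{\beta}\subseteq g_C^{2\beta-1}$, which follows from the family inclusion $\mathscr{F}_{\beta}(X)\subseteq\mathscr{F}_{2\beta-1}(X)$. Your route is shorter, makes transparent that $\beta>0.5$ serves only to keep $2\beta-1$ an admissible precision, and in fact yields the sharper chain $\underline{R}_{O}^{\beta}(A)\cup\underline{R}_{O}^{\beta}(B)\subseteq\underline{R}_{O}^{\beta}(A\cup B)\subseteq\underline{R}_{O}^{2\beta-1}(A\cup B)$, showing the stated bound is not tight for unions. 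What the paper's heavier template buys is reusability where monotonicity is powerless: in the companion estimate $\underline{R}_{O}^{\beta}(A)\cap\underline{R}_{O}^{\beta}(B)\subseteq\underline{R}_{O}^{2\beta-1}(A\cap B)$ of Proposition~\ref{p:4.9}(4), the inclusion $A\cap B\subseteq A$ points the wrong way for an argument like yours, and there the witness-set intersection (together with extra hypotheses on $R$ and on $O$, $G$) is genuinely needed; the paper simply runs the same pattern in both places. Finally, your interchange $O(a,b\vee c)=O(a,b)\vee O(a,c)$ is justified exactly as you say: $[0,1]$ is a chain and $O$ is non-decreasing, so no distributivity assumption on $O$ is required.
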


\begin{proof}
	\begin{enumerate}[(1)]
		\item 
		According to Proposition~\ref{p:3.1}, \ref{p:3.2} and Lemma~\ref{l:4.1}(2), it can be directly proved.	
		\item 
		According to Proposition~\ref{p:3.1}, it always holds that for all $A,B\in\mathscr{F}{(X)}$,
		\begin{align*}
		\underline{R}_{O}^{\beta}(A)\cup\underline{R}_{O}^{\beta}(B)=\bigcup\{[x_{g_A(x)}]^{O}_R\cup [x_{g_B(x)}]^{O}_R:x\in X\}.
		\end{align*}
		 there exist $X_i\, ,\,X_j\in\mathscr{F}_{\beta}(X)$ such that $g_A(x)=g_A^{(i)}(x)$ and $g_B(x)=g_B^{(i)}(x)$. Hence, we have $|X_i\cap X_j|\ge(2\beta-1)|X|,\ i.e., X_i\cap X_j\in\mathscr{F}_{(2\beta-1)}(X)$. Then for any $y\in X_i\cap X_j$,
		\begin{align*}
		([x_{g_A(x)}]^{O}_R\cup [x_{g_B(x)}]^{O}_R)(y)=(O(R(x,y),g_A^{(i)}(x)))\vee (O(R(x,y),g_B^{(i)}(x)))\le A(y)\vee B(y).
		\end{align*}
		So we obtain $\underline{R}_{O}^{\beta}(A)\cup\underline{R}_{O}^{\beta}(B)\subseteq \underline{R}_{O}^{(2\beta-1)}(A\cup B)$. In a similar way, $\overline{R}_{G}^{2\beta -1}(A\cap B)\subseteq\overline{R}_{G}^{\beta}(A)\cap\overline{R}_{G}^{\beta}(B)$ holds.		
	\end{enumerate}
\end{proof}

\begin{proposition}\label{p:4.2}
	Let $R$ be a fuzzy relation on $X$, 1 and 0 be the identity element of overlap function $O$ and grouping function $G$, respectively. Then the following statements hold.
	\begin{enumerate}[(1)]
		\item  
		$O(\alpha ,\bigvee_{x\in X}R(x,z))\le\underline{R}_{O}^{\beta}(\alpha_{X})(z)
		\ and \ \overline{R}_{G}^{\beta}(\alpha_{X})(z)\le G(\alpha ,\bigwedge_{x\in X}R^N(x,z))$ for all $\alpha\in [0,1]$ and $ z\in X.$
		\item 
		$I\!f \ a\ crisp\ set\ Y\subseteq X\ and\ \beta=\frac{|Y|}{|X|},then\ for\ all\ y\in X,$
		\begin{align*}
		\underline{R}_{O}^{\beta}(Y)(z) \ge  \underset{x\in X}{\bigvee}R(x,z)\ and\ \overline{R}_{G}^{\beta}(Y^c)(z)\le \underset{x\in X}{\bigwedge}R^N(x,z).	
		\end{align*}
		\item 
		$I\!f\ A=I_O(y_\gamma,\alpha_{X})\ and\ \gamma = 1, then$
		\begin{align*}
		\underline{R}_{O}^{\beta}(A)(z) =\left\{
		\begin{aligned}
		&\bigvee_{x\in X}R(x,z),& \quad 0\le \beta\le \frac{|X|-1}{|X|}, \\
		&\bigvee_{x\in X}O(R(x,z),I_O(R(x,y),\alpha)), &\quad \frac{|X|-1}{|X|}<\beta\le 1.
		\end{aligned}
		\right.
		\end{align*}
		
		\item 
		$If\ A=y_\alpha,then$
		\begin{align*}
		\overline{R}_{G}^{\beta}(A)(z) =\left\{
		\begin{aligned}
		&\bigwedge_{x\in X}R^N(x,z),& \quad 0\le \beta\le \frac{|X|-1}{|X|}, \\
		&\bigwedge_{x\in X}(G(R^N(x,z),I^G(R^N(x,y),\alpha))), &\quad \frac{|X|-1}{|X|}<\beta\le 1.
		\end{aligned}
		\right.
		\end{align*}
	\end{enumerate}
\end{proposition}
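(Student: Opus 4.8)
The plan is to treat all four items as consequences of the pointwise supremum/infimum representations furnished by Proposition~\ref{p:3.1} and Proposition~\ref{p:3.2}, namely $\underline{R}_{O}^{\beta}(A)(z)=\bigvee_{x\in X}O(R(x,z),g_A(x))$ and $\overline{R}_{G}^{\beta}(A)(z)=\bigwedge_{x\in X}G(R^N(x,z),h_A(x))$, combined with the explicit evaluations of $g_A$ and $h_A$ already supplied by Lemma~\ref{l:4.2}. The only analytic ingredient needed beyond those results is that, because $X$ is finite and $O$, $G$ are non-decreasing, the connective commutes with the finite aggregation: $O(\alpha,\bigvee_{x\in X}R(x,z))=\bigvee_{x\in X}O(\alpha,R(x,z))$ and $G(\alpha,\bigwedge_{x\in X}R^N(x,z))=\bigwedge_{x\in X}G(\alpha,R^N(x,z))$, the maximising (resp. minimising) index being the same on both sides. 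I would record this interchange once and reuse it throughout.

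For item (1) I start from the representation with $A=\alpha_X$. Lemma~\ref{l:4.2}(2) gives $\alpha\le g_{\alpha_X}(x)$ and $h_{\alpha_X}(x)\le\alpha$ for every $x$, so monotonicity of $O$ and $G$ yields $O(R(x,z),g_{\alpha_X}(x))\ge O(R(x,z),\alpha)$ and $G(R^N(x,z),h_{\alpha_X}(x))\le G(R^N(x,z),\alpha)$. Applying $\bigvee_{x}$ and $\bigwedge_{x}$ respectively, and then pushing $\alpha$ through the finite aggregation using commutativity (O1)/(G1) together with the interchange above, delivers the two stated bounds.

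Item (2) rests on a single observation: when $\beta=\frac{|Y|}{|X|}$ the set $Y$ itself lies in $\mathscr{F}_{\beta}(X)$, since $|Y|\ge\beta|X|$. Choosing $X_i=Y$ in the definition of $g_Y$ and using $I_O(R(x,y),1)=1$ (Lemma~\ref{l:neutal}(2), as $Y(y)=1$ for $y\in Y$) forces $g_Y(x)=1$; dually, taking $X_i=Y$ in $h_{Y^c}$ and using $I^G(R^N(x,y),0)=0$ (since $Y^c(y)=0$ for $y\in Y$) forces $h_{Y^c}(x)=0$. Substituting into the representations and invoking that $1$ and $0$ are the identities of $O$ and $G$ collapses $O(R(x,z),1)=R(x,z)$ and $G(R^N(x,z),0)=R^N(x,z)$, which yields the claimed (in fact sharp) inequalities. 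Items (3) and (4) are then immediate: I substitute the two-branch values of $g_A$ from Lemma~\ref{l:4.2}(3) (resp. $h_A$ from Lemma~\ref{l:4.2}(4)) into the representation and split on the two ranges of $\beta$. In the range $0\le\beta\le\frac{|X|-1}{|X|}$ the factor is $1$ (resp. $0$), and the identity-element simplification again produces $\bigvee_{x\in X}R(x,z)$ (resp. $\bigwedge_{x\in X}R^N(x,z)$); in the range $\frac{|X|-1}{|X|}<\beta\le1$ the factor $I_O(R(x,y),\alpha)$ (resp. $I^G(R^N(x,y),\alpha)$) is carried unchanged into the aggregation.

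No step presents a serious obstacle, since Lemma~\ref{l:4.2} has already done the combinatorial work of evaluating $g_A$ and $h_A$. The two points requiring a line of care are the commutation of $O$ and $G$ with the finite $\bigvee$/$\bigwedge$ used in items (1)--(4), which is exactly where finiteness of $X$ and monotonicity (O4)/(G4) enter, and the membership $Y\in\mathscr{F}_{\beta}(X)$ in item (2), which is the crux of that item and must be verified precisely at the boundary value $\beta=\frac{|Y|}{|X|}$.
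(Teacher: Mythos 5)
Your proposal is correct and follows essentially the same route as the paper: both rest on the pointwise representations $\underline{R}_{O}^{\beta}(A)(z)=\bigvee_{x\in X}O(R(x,z),g_A(x))$ and $\overline{R}_{G}^{\beta}(A)(z)=\bigwedge_{x\in X}G(R^{N}(x,z),h_A(x))$ from Propositions~\ref{p:3.1} and~\ref{p:3.2}, feed in the evaluations of $g_A$ and $h_A$ from Lemma~\ref{l:4.2} (with $Y\in\mathscr{F}_{\beta}(X)$ and $g_Y\equiv 1$, $h_{Y^c}\equiv 0$ for item (2)), and simplify via the identity elements. The only difference is one of presentation: the paper disposes of items (1), (3), (4) with a one-line citation of Lemma~\ref{l:4.2}, whereas you correctly spell out the details it leaves implicit, including the interchange of $O$ (resp.\ $G$) with the finite supremum (resp.\ infimum), which is justified exactly as you say by finiteness of $X$ and monotonicity.
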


\begin{proof}
	\begin{enumerate}[(1)]	
		\item 
		According to Lemma~\ref{l:4.2}(2), it can be directly proved.	
		\item
		If $\beta=\frac{|Y|}{|X|}$, then $Y\in\mathscr{F}_{\beta}(X) $ and for all $x\in X$,
		\begin{align*}
		g_Y(x)\ge\underset{y\in Y}{\bigwedge}I_O(R(x,y),Y(y))=1.
		\end{align*}
		Furthermore, for any $z\in X$, 	
		\begin{align*}
		\underline{R}_{O}^{\beta}(Y)(z)
		&=\underset{x\in X}{\bigvee}O(R(x,z),g_Y(x))\\
		&=\underset{x\in X}{\bigvee}O(R(x,z),1)\\
		&= \underset{x\in X}{\bigvee}R(x,z).
		\end{align*}
		\item
		According to Lemma~\ref{l:4.2}(3), it can be directly proved.	
		\item
		According to Lemma~\ref{l:4.2}(4), it can be directly proved. 		
	\end{enumerate}
\end{proof}



\begin{remark}\label{r:4.1}
	Consider $X=\{x_1,x_2,x_3\}$ and the fuzzy relation $R$ on $X$ as
	\begin{align*}
	\begin{gathered}
	R=\begin{bmatrix} 0 & 0.4 & 0.4 \\ 0.2 & 0 & 0.2 \\ 0.2 & 0.2 & 0 \end{bmatrix}
	\end{gathered}
	\end{align*}
	Here, we use overlap function $O_{DB}$ and fuzzy implication $I_O$
	defined as, respectively,
	
	\begin{align*}
	O_{DB}(x,y) =\left\{
	\begin{aligned}
	\frac{2xy}{x+y}\,&,\quad if\ x+y\ne 0 \\
	0\,&,\quad if\ x+y= 0
	\end{aligned}
	\right. 	
	\qquad I_O(x,y) =\left\{
	\begin{aligned}
	\frac{xy}{2x-y}\,&,\quad if\ y< \frac{2x}{x+1}\\
	1\,&,\quad if\ y\ge \frac{2x}{x+1}
	\end{aligned}
	\right.
	\end{align*}
	for all $x,y\in [0,1].$ Let  $A=\frac{0.2}{x_1}+\frac{0.4}{x_2}+\frac{0}{x_3}, \alpha = 1$ and $\beta = 0.5.$ Then from Proposition~\ref{p:3.1} that
	\begin{align*}
	\underline{R}_{O}^{\beta}(O(\alpha_{X},A)) = \frac{\frac{1}{3}}{x_1}+\frac{\frac{4}{7}}{x_2}+\frac{\frac{4}{7}}{x_3},
	\end{align*}
	and
	\begin{align*}
	O(\alpha_{X},\underline{R}_{O}^{\beta}(A)) = \frac{\frac{1}{2}}{x_1}+\frac{\frac{4}{7}}{x_2}+\frac{\frac{4}{7}}{x_3}.
	\end{align*}
	
	By comparison, we get 	$O(\alpha_{X},\underline{R}_{O}^{\beta}(A))\supseteq \underline{R}_{O}^{\beta}(O(\alpha_{X},A)).$ In this example,  the overlap function $O_{DB}$ does not satisfy the associative law. Furthermore, according to the above conditions, we get $\underline{R}_{O}^{\beta}(A)=\frac{\frac{1}{3}}{x_1}+\frac{\frac{2}{5}}{x_2}+\frac{\frac{2}{5}}{x_3}.$ In particular, we take $\alpha = 1$. It follows from Proposition~\ref{p:3.1} that
	\begin{align*}
	I_O(\alpha_{X},\underline{R}_{O}^{\beta}(A))=\frac{\frac{1}{5}}{x_1}+\frac{\frac{1}{4}}{x_2}+\frac{\frac{1}{4}}{x_3},
	\end{align*}
	and
	\begin{align*}
	\underline{R}_{O}^{\beta}(I_O(\alpha_{X},A))= \frac{\frac{1}{4}}{x_1}+\frac{\frac{1}{4}}{x_2}+\frac{\frac{1}{4}}{x_3}.
	\end{align*}
	
	Hence, $\underline{R}_{O}^{\beta}(I_O(\alpha_{X},A))\supseteq I_O(\alpha_{X},\underline{R}_{O}^{\beta}(A)).$
	
	In particular, the following conclusions can be given when overlap and grouping functions satisfy the associative law.	
\end{remark}

\begin{proposition}\label{p:4.3}
	Let $R$ be a fuzzy relation on $X$, overlap function $O$ and grouping function $G$ satisfy \textnormal{(O6)} and \textnormal{(G6)}, respectively. For all $\alpha\in [0,1]$ and $ A\in\mathscr{F}(X)$, the following statements hold.
	
	\begin{enumerate}[(1)]
		\item 
		$\underline{R}_{O}^{\beta}(I_O(\alpha_{X},\ A))\subseteq I_O(\alpha_{X},\underline{R}_{O}^{\beta}(A)) \ and \ I^G(\alpha_{X},\overline{R}_{G}^{\beta}(A))\subseteq \overline{R}_{G}^{\beta}(I^G(\alpha_{X},\ A))$. Especially,	
		\begin{align*}
		&\underline{R}_{O}^{\beta}(I_O(\alpha_{X},\emptyset))=I_O(\alpha_{X},\emptyset)\ implies \ \underline{R}_{O}^{\beta}(\emptyset) =\emptyset;\\
		&\overline{R}_{G}^{\beta}(I^G(\alpha_{X},X))=I^G(\alpha_{X},X)\ implies \ \overline{R}_{G}^{\beta}(X) =X.
		\end{align*}
		
		\item
		$O(\alpha_{X},\underline{R}_{O}^{\beta}(A)\subseteq \underline{R}_{O}^{\beta}(O(\alpha_{X},A))$ and $
		\overline{R}_{G}^{\beta}(G(\alpha_{X},A))\subseteq G(\alpha_{X},\overline{R}_{G}^{\beta}(A))$.
	\end{enumerate}
\end{proposition}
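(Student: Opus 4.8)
The plan is to reduce every statement to the pointwise calculus of the auxiliary fuzzy sets $g_A$ and $h_A$. By Proposition~\ref{p:3.1} and Proposition~\ref{p:3.2} I may write $\underline{R}_{O}^{\beta}(A)(z)=\bigvee_{x\in X}O(R(x,z),g_A(x))$ and $\overline{R}_{G}^{\beta}(A)(z)=\bigwedge_{x\in X}G(R^N(x,z),h_A(x))$, so that the operators become honest joins and meets over the finite index set $X$. I will carry out only the $O$/$I_O$ halves; the $G$/$I^G$ halves are obtained by the dual computation, invoking the second clauses of Lemma~\ref{l:nocondition}, Lemma~\ref{l:associative law}, and Lemma~\ref{l:4.3} in place of the first.

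For the first inclusion in (1), I first rewrite the granule data through Lemma~\ref{l:4.3}(1), namely $g_{I_O(\alpha_X,A)}=I_O(\alpha_X,g_A)$, giving $\underline{R}_{O}^{\beta}(I_O(\alpha_X,A))(z)=\bigvee_{x\in X}O(R(x,z),I_O(\alpha,g_A(x)))$. On the other side, since $I_O(\alpha,\cdot)$ commutes with joins (Lemma~\ref{l:nocondition}(3)), I get $I_O(\alpha_X,\underline{R}_{O}^{\beta}(A))(z)=\bigvee_{x\in X}I_O(\alpha,O(R(x,z),g_A(x)))$. Comparing the two joins term by term, it suffices to prove the single pointwise inequality $O(R(x,z),I_O(\alpha,g_A(x)))\le I_O(\alpha,O(R(x,z),g_A(x)))$, which is exactly Lemma~\ref{l:associative law}(1) under the substitution $(x,y,z)\mapsto(R(x,z),\alpha,g_A(x))$; the hypothesis (O6) is precisely what makes that lemma available. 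For the special case I then put $A=\emptyset$: since $\emptyset\subseteq I_O(\alpha_X,\emptyset)$, monotonicity (Proposition~\ref{p:4.1}(1)) together with the assumed equality $\underline{R}_{O}^{\beta}(I_O(\alpha_X,\emptyset))=I_O(\alpha_X,\emptyset)$ yields $\underline{R}_{O}^{\beta}(\emptyset)\subseteq I_O(\alpha_X,\emptyset)$, and evaluating $I_O(\alpha,0)=0$ (for $\alpha>0$, by (O2)) collapses the right-hand side to $\emptyset$; the reverse containment is automatic, so $\underline{R}_{O}^{\beta}(\emptyset)=\emptyset$. The claim about $\overline{R}_{G}^{\beta}(X)$ is the exact dual, using $I^G(\alpha,1)=1$ via (G3).

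For (2) I use the other half of Lemma~\ref{l:4.3}, i.e. $O(\alpha_X,g_A)\subseteq g_{O(\alpha_X,A)}$. Monotonicity of $O$ in its second argument gives $\underline{R}_{O}^{\beta}(O(\alpha_X,A))(z)=\bigvee_{x\in X}O(R(x,z),g_{O(\alpha_X,A)}(x))\ge\bigvee_{x\in X}O(R(x,z),O(\alpha,g_A(x)))$. Meanwhile, since $O$ is monotone and continuous it preserves the finite join, and by (O6) it is associative and commutative, so $O(\alpha_X,\underline{R}_{O}^{\beta}(A))(z)=\bigvee_{x\in X}O(\alpha,O(R(x,z),g_A(x)))=\bigvee_{x\in X}O(R(x,z),O(\alpha,g_A(x)))$. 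The two expressions agree on the right, which is the desired inclusion, and the $G$-version follows dually from $G(\alpha_X,h_A)\supseteq h_{G(\alpha_X,A)}$ and (G6).

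The main obstacle is bookkeeping rather than depth: I must make sure each distributivity step is licensed, pulling $I_O(\alpha,\cdot)$ and $O(\alpha,\cdot)$ through the finite join by Lemma~\ref{l:nocondition}(3) and continuity respectively, and I must match the three-variable substitution so that the crux of (1) reduces cleanly to Lemma~\ref{l:associative law}(1). The one genuinely delicate point is that (2) needs the full associativity of $O$ supplied by (O6) at the step that rearranges $O(\alpha,O(R,g_A))$ into $O(R,O(\alpha,g_A))$; Remark~\ref{r:4.1} shows that without associativity only a one-sided containment survives, so I must invoke (O6) exactly there and claim no equality that associativity does not grant.
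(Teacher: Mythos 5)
Your proof is correct and follows essentially the same route as the paper: both parts reduce to the pointwise join/meet representation from Propositions~\ref{p:3.1} and~\ref{p:3.2}, rewrite the granule data via Lemma~\ref{l:4.3}, and close the argument with Lemma~\ref{l:nocondition}(3) together with Lemma~\ref{l:associative law}(1) for part (1) and with the (O6)/(G6) rearrangement for part (2). The only difference is that you also spell out the ``Especially'' implications, which the paper's proof leaves implicit; your restriction to $\alpha>0$ (dually $\alpha<1$) there is genuinely necessary, since $I_O(0,0)=1$ makes the pointwise $\alpha=0$ instance fail, so your version is in fact the more careful one.
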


\begin{proof}
	\begin{enumerate}[(1)]
		\item 
		By Lemma~\ref{l:4.3}(1), \ref{l:nocondition}(3) and \ref{l:associative law}(1), it holds that for all $z\in X$,
		\begin{align*}
		\underline{R}_{O}^{\beta}(I_O(\alpha_{X},\ A))(z)
		&=\underset{x\in X}{\bigvee}O(R(x,z),g_{(I_O(\alpha_{X},\,A))}(x))\\
		&=\underset{x\in X}{\bigvee}O(R(x,z),I_O(\alpha,g_A(x)) \\
		&\le\underset{x\in X}{\bigvee}I_O(\alpha,O(R(x,z),g_A(x)))\\
		&=I_O(\alpha,\underset{x\in X}{\bigvee}O(R(x,z),g_A(x)))\\
		&=I_O(\alpha,\underline{R}_{O}^{\beta}(A)(z)).	
		\end{align*}
		Hence, we get $\underline{R}_{O}^{\beta}(I_O(\alpha_{X},A))\subseteq I_O(\alpha_{X},\underline{R}_{O}^{\beta}(A))$.
		In a similar way,  $I^G(\alpha_{X},\overline{R}_{G}^{\beta}(A))\subseteq \overline{R}_{G}^{\beta}(I^G(\alpha_{X},A))$ holds.	
		\item
		It follows from Lemma~\ref{l:4.3}(2) and the associativity of the overlap function that
		\begin{align*}
		\underline{R}_{O}^{\beta}(O(\alpha_{X},A))(z)
		& = \bigvee_{x\in X}O(R(x,z),g_{(O(\alpha_{X}, g_A(x)))})\\
		& \ge \bigvee_{x\in X}O(R(x,z),O(\alpha,g_A(x))\\
		& = \bigvee_{x\in X}O(\alpha,O(R(x,z),g_A(x)))\\
		& = O(\alpha, \bigvee_{x\in X}O(R(x,z),g_A(x)))\\
		& = O(\alpha_{X},\underline{R}_{O}^{\beta}(A))(z).
		\end{align*} 	
		Hence, we get $\underline{R}_{O}^{\beta}(O(\alpha_{X},A))\supseteq O(\alpha_{X},\underline{R}_{O}^{\beta}(A)).$
		In a similar way,  $\overline{R}_{G}^{\beta}(G(\alpha_{X},A))\subseteq G(\alpha_{X},\overline{R}_{G}^{\beta}(A))$ holds for all $\alpha\in [0,1]$ and $A\in \mathscr{F}(X).$	 		
	\end{enumerate}
\end{proof}
\subsection{Some new conclusions based on special fuzzy relations}
\begin{proposition}\label{p:serial&identity}
	Let $R$ be a fuzzy relation on $X$, 1 and 0 be the identity element of overlap function $O$ and grouping function $G$, respectively. For all $\alpha\in [0,1]$, the following statements are equivalent.
	\begin{enumerate}[(1)]
		\item 
		$R^{-1}\ is\ serial.$
		
		\item 
		$X=\underline{R}_{O}^{\beta}(X)$.
		
		\item 
		$\emptyset=\overline{R}_{G}^{\beta}(\emptyset)$.
		
		\item 
		$\alpha_{X}\subseteq \underline{R}_{O}^{\beta}(\alpha_{X}).$
		
		\item 
		$\overline{R}_{G}^{\beta}(\alpha_{X})\subseteq \alpha_{X}.$
		
		\item 
		$A\ crisp\ set\ Y\subseteq X\ and\ \beta=\frac{|Y|}{|X|}\ imply\ \underline{R}_{O}^{\beta}(Y)=X.$
		
		\item 
		$A\ crisp\ set\ Y\subseteq X\ and\ \beta=\frac{|Y|}{|X|}\ imply\ \overline{R}_{G}^{\beta}(Y^c)=\emptyset.$
		
		\item 
	$\underline{R}_{O}^{\beta}(I_O(y_\gamma,\alpha_{X}))=X\ f\!or\ all\ y\in X$, if $\beta\le\frac{|X|-1}{|X|}\ and\ \gamma=1$.
		
		\item 
		$\overline{R}_{G}^{\beta}(y_\alpha)=
		\emptyset\ f\!or\ all\ y\in X$, if $\beta\le\frac{|X|-1}{|X|}$.
	\end{enumerate}
\end{proposition}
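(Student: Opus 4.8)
The plan is to reduce every one of the nine statements to the single numerical condition
$$(\star)\qquad \bigvee_{x\in X}R(x,z)=1\quad\text{for all }z\in X.$$
This is exactly the seriality of $R^{-1}$: since $R^{-1}(z,x)=R(x,z)$, we have $\bigvee_{x}R^{-1}(z,x)=\bigvee_{x}R(x,z)$, so (1) is just $(\star)$ rewritten. I will then show each of (2)--(9) is equivalent to $(\star)$. Throughout I would use that $N$ is an involutive negation, so that $\bigwedge_{x}R^{N}(x,z)=N\!\left(\bigvee_{x}R(x,z)\right)$ and $N(t)=0\Leftrightarrow t=1$; this is what ties the ``upper'' statements (3),(5),(7),(9) to $(\star)$ in the same breath as the ``lower'' ones.

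First I would dispatch (2) and (3). By Lemma~\ref{l:4.2}(1) we have $g_X=X$ and $h_\emptyset=\emptyset$, so combining Proposition~\ref{p:3.1} and Proposition~\ref{p:3.2} with the identity elements ($O(\cdot,1)=\mathrm{id}$ and $G(\cdot,0)=\mathrm{id}$) gives
$$\underline{R}_{O}^{\beta}(X)(z)=\bigvee_{x\in X}O(R(x,z),1)=\bigvee_{x\in X}R(x,z),\qquad \overline{R}_{G}^{\beta}(\emptyset)(z)=\bigwedge_{x\in X}G(R^{N}(x,z),0)=\bigwedge_{x\in X}R^{N}(x,z).$$
Both expressions are independent of $\beta$, so $(2)\Leftrightarrow(\star)$ is immediate and $(3)\Leftrightarrow(\star)$ follows via the negation identity above.

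Next I would handle (4),(5) using the two-sided bounds of Proposition~\ref{p:4.2}(1). Under $(\star)$, $O(\alpha,\bigvee_{x}R(x,z))=O(\alpha,1)=\alpha$ forces $\alpha_X\subseteq\underline{R}_{O}^{\beta}(\alpha_X)$, and dually $\overline{R}_{G}^{\beta}(\alpha_X)(z)\le G(\alpha,0)=\alpha$; thus $(\star)$ yields (4) and (5) for every $\alpha$. For the converses, reading (4) and (5) as holding for all $\alpha$, I would specialize $\alpha=1$ in (4) and $\alpha=0$ in (5): since $\underline{R}_{O}^{\beta}(X)\subseteq X$ and $\overline{R}_{G}^{\beta}(\emptyset)\supseteq\emptyset$ hold trivially, these specializations collapse to (2) and (3), hence to $(\star)$. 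Statements (6),(7) are treated identically using the bounds of Proposition~\ref{p:4.2}(2): $(\star)$ pushes the lower bound to $1$ (resp. the upper bound to $0$), giving $\underline{R}_{O}^{\beta}(Y)=X$ (resp. $\overline{R}_{G}^{\beta}(Y^c)=\emptyset$), while instantiating $Y=X$ (so $\beta=1$) recovers (2) (resp. (3)). Finally (8),(9) are the most direct: for $\beta\le\frac{|X|-1}{|X|}$, Proposition~\ref{p:4.2}(3),(4) give the exact values $\underline{R}_{O}^{\beta}(I_O(y_\gamma,\alpha_X))(z)=\bigvee_{x}R(x,z)$ and $\overline{R}_{G}^{\beta}(y_\alpha)(z)=\bigwedge_{x}R^{N}(x,z)$, so each equals $X$ (resp. $\emptyset$) for all $y$ precisely when $(\star)$ holds.

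The computations are routine once the bounds from Proposition~\ref{p:4.2} are in hand; the point that needs real care is the bookkeeping of the quantifier on $\alpha$ (and on $Y$, $\beta$). The equivalence is genuinely false if (4),(5),(8),(9) are read for a single fixed $\alpha$ --- for instance (4) with $\alpha=0$ is vacuously true and could never imply (1). So the main obstacle is to fix the convention that these statements are universally quantified over $\alpha$ (resp. over $y$ and the admissible $\beta$), and then to select the correct extreme value ($\alpha=1$, $\alpha=0$, or $Y=X$) in each converse so that the general statement degenerates back to (2) or (3). Once that convention is settled, all nine statements are seen to be different faces of the single condition $(\star)$.
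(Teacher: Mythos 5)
Your proposal is correct and takes essentially the same route as the paper's proof: both arguments rest on the bounds and exact values of Proposition~\ref{p:4.2} together with Lemma~\ref{l:4.2}(1), collapse statements (4) and (5) to (2) and (3) by specializing $\alpha=1$ and $\alpha=0$, and read the equivalence of (1) with (6)--(9) directly off Proposition~\ref{p:4.2}(2)--(4). The only cosmetic difference is organizational: you establish $(2)\Leftrightarrow(1)$ via the $\beta$-independent computation $\underline{R}_{O}^{\beta}(X)(z)=\bigvee_{x\in X}O(R(x,z),1)=\bigvee_{x\in X}R(x,z)$, whereas the paper runs the identical computation inside a proof by contradiction, and your hub-and-spoke reduction to $(\star)$ is just the paper's implication cycles $(1)\Rightarrow(4)\Rightarrow(2)\Rightarrow(1)$ and $(1)\Rightarrow(5)\Rightarrow(3)\Rightarrow(1)$ rephrased.
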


\begin{proof}
	By Proposition~\ref{p:4.2}(2), (3) and (4), it holds that
	
	(1)$\iff$(6)$\iff$(7)$\iff$(8)$\iff$(9).
	
	Furthermore, we can obtain (1)$\Rightarrow$(4)$\Rightarrow$(2) by Proposition~\ref{p:4.2}(3). The next will prove (2)$\Rightarrow$(1).
	Suppose $R^{-1}$ is not serial, then the existence of $z_0\in X$ leads to $\bigvee_{x\in X}R(x,z_0)<1.$ Further, from Lemma~\ref{l:4.2}(1), one concludes that
	\begin{align*}
	\underline{R}_{O}^{\beta}(X)(z_0)&=\underset{x\in X}{\bigvee}O(R(x,z_0),g_X(x))\\
	&=\underset{x\in X}{\bigvee}O(R(x,z_0),1)\\
	&=O(\underset{x\in X}{\bigvee}R(x,z_0),1)\\
	&<O(1,1)=1,
	\end{align*}
	which contradicts with $X=\underline{R}_{O}^{\beta}(X)$. Therefore $R^{-1}$ is serial. In a similar way, we obtain (1)$\Rightarrow$(5)$\Rightarrow$(3)$\Rightarrow$(1).		
\end{proof}
Furthermore, the equivalent conditions related to the seriality property of $R$ can be easily obtained by exchanging the positions of $R$ and $R^{-1}$.

\begin{proposition}\label{p:4.5}
	Let $R$ satisfy reflexivity, then the following statements hold.
	\begin{enumerate}[(1)]
		\item
		$g_A\subseteq  \underline{R}_{O}^{\beta}(A)\ and\ \overline{R}_{G}^{\beta}(A)\subseteq h_A\ for\ all\ A\in\mathscr{F}(X).$
		
		\item
		$\underline{R}_{O}^{\beta}(X)=X\ and\ \overline{R}_{G}^{\beta}(\emptyset)=\emptyset.$
		
		\item
		$\alpha_{X}\subseteq\underline{R}_{O}^{\beta}(\alpha_{X})\ and\ \overline{R}_{G}^{\beta}(\alpha_{X})\subseteq\alpha_{X}\ for\ all\ \alpha\in [0,1].$
		
		\item
		
		$\underline{R}_{O}^{\beta}(Y)=X\ and\  \overline{R}_{G}^{\beta}(Y^c)=\emptyset,$ where $Y$ is a crisp set on $X$ and $\beta=\frac{|Y|}{|X|}$.
		\item
		$\underline{R}_{O}^{\beta}(I_O(y_\gamma,\alpha_{X}))=X\ and\ \overline{R}_{G}^{\beta}(y_\alpha)=\emptyset $, if $\beta\le\frac{|X|-1}{|X|}, \gamma=1 \ and\ \alpha\in [0,1].$	
	\end{enumerate} 	
\end{proposition}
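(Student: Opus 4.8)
The plan is to establish part (1) first, since parts (2)--(5) all follow from it combined with the computations already recorded in Lemma~\ref{l:4.2} and Proposition~\ref{p:4.2}. Throughout I use the standing neutral-element hypotheses of this subsection, that $1$ is the identity of $O$ and $0$ is the identity of $G$, which is what lets reflexivity do its work.

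For part (1), I would start from the equivalent expressions of Propositions~\ref{p:3.1} and~\ref{p:3.2}, namely $\underline{R}_{O}^{\beta}(A)(z)=\bigvee_{x\in X}O(R(x,z),g_A(x))$ and $\overline{R}_{G}^{\beta}(A)(z)=\bigwedge_{x\in X}G(R^N(x,z),h_A(x))$. Fixing $z\in X$ and keeping only the single index $x=z$ in the join (resp. meet), reflexivity gives $R(z,z)=1$, hence $R^N(z,z)=N(1)=0$. Using the identity elements I then get
\begin{align*}
\underline{R}_{O}^{\beta}(A)(z)&\ge O(R(z,z),g_A(z))=O(1,g_A(z))=g_A(z),\\
\overline{R}_{G}^{\beta}(A)(z)&\le G(R^N(z,z),h_A(z))=G(0,h_A(z))=h_A(z),
\end{align*}
which yields $g_A\subseteq\underline{R}_{O}^{\beta}(A)$ and $\overline{R}_{G}^{\beta}(A)\subseteq h_A$ for all $A\in\mathscr{F}(X)$.

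Granting (1), the next two items are immediate. For (2), Lemma~\ref{l:4.2}(1) gives $g_X=X$ and $h_\emptyset=\emptyset$; combining with (1) and the trivial inclusions $\underline{R}_{O}^{\beta}(X)\subseteq X$ and $\emptyset\subseteq\overline{R}_{G}^{\beta}(\emptyset)$ forces both equalities. For (3), Lemma~\ref{l:4.2}(2) supplies $\alpha_X\subseteq g_{\alpha_X}$ and $h_{\alpha_X}\subseteq\alpha_X$, so chaining with (1) gives $\alpha_X\subseteq g_{\alpha_X}\subseteq\underline{R}_{O}^{\beta}(\alpha_X)$ and $\overline{R}_{G}^{\beta}(\alpha_X)\subseteq h_{\alpha_X}\subseteq\alpha_X$.

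For (4) and (5) I would instead read off the closed forms already proved in Proposition~\ref{p:4.2} and push them through reflexivity. In (4), Proposition~\ref{p:4.2}(2) gives $\underline{R}_{O}^{\beta}(Y)(z)\ge\bigvee_{x\in X}R(x,z)$ and $\overline{R}_{G}^{\beta}(Y^c)(z)\le\bigwedge_{x\in X}R^N(x,z)$; since reflexivity yields $\bigvee_{x\in X}R(x,z)\ge R(z,z)=1$ and $\bigwedge_{x\in X}R^N(x,z)\le R^N(z,z)=0$, the two values are pinned to $1$ and $0$, i.e. $\underline{R}_{O}^{\beta}(Y)=X$ and $\overline{R}_{G}^{\beta}(Y^c)=\emptyset$. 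Item (5) is identical in spirit: under $\beta\le\frac{|X|-1}{|X|}$ and $\gamma=1$, Proposition~\ref{p:4.2}(3)--(4) collapse the operators to $\bigvee_{x\in X}R(x,z)$ and $\bigwedge_{x\in X}R^N(x,z)$, and reflexivity again forces these to $1$ and $0$. The only genuine obstacle is part (1)---specifically, ensuring the neutral-element hypotheses are in force so that $O(1,\cdot)$ and $G(0,\cdot)$ act as the identity; once that single evaluation at $x=z$ is justified, everything else is bookkeeping through the already-established lemmas.
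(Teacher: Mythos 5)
Your proof is correct, and it rests on the same underlying computations as the paper's, but the decomposition is not identical. The paper disposes of parts (2)--(5) in one stroke by citing Proposition~\ref{p:serial&identity}: reflexivity of $R$ makes $R^{-1}$ serial, and the items of that proposition are precisely items (2)--(5) here; part (1) is attributed to Propositions~\ref{p:3.1} and~\ref{p:3.2}, essentially via the diagonal evaluation you perform. You instead bypass the seriality proposition entirely: you prove (1) by evaluating the join/meet at $x=z$ and using $R(z,z)=1$, obtain (2)--(3) by chaining (1) with Lemma~\ref{l:4.2}(1)--(2), and obtain (4)--(5) by pushing the closed forms of Proposition~\ref{p:4.2}(2)--(4) through reflexivity. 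Since Proposition~\ref{p:serial&identity} is itself proved from Proposition~\ref{p:4.2}, the two routes are mathematically equivalent; the paper's is shorter (one citation covers four items), while yours is more self-contained and makes explicit where each hypothesis enters. In particular, your remark that the neutral-element assumptions ($1$ for $O$, $0$ for $G$) are indispensable --- e.g.\ $O(1,x)=x$ fails for $O_{p}(x,y)=x^{p}y^{p}$ --- identifies a hypothesis that the statement of Proposition~\ref{p:4.5} omits even though every lemma it rests on (Lemma~\ref{l:4.2}, Proposition~\ref{p:4.2}, Proposition~\ref{p:serial&identity}) requires it; the paper leaves this implicit.
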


\begin{proof}
	According to Proposition~\ref{p:3.1}, \ref{p:3.2} and \ref{p:serial&identity}, it can be directly proven.
\end{proof}

\begin{proposition}\label{p:4.6}
	Let $R$ satify symmetry. For all $A\in\mathscr{F}(X)$, the following statements hold.
	\begin{align*}
	\underline{R}_{O}^{\beta}(A)=\underline{(R^{-1})}_{O}^{\beta}(A)\ and\ \overline{R}_{G}^{\beta}(A)=\overline{(R^{-1})}_{G}^{\beta}(A).
	\end{align*}
\end{proposition}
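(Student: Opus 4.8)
The plan is to reduce the statement to the elementary observation that a symmetric fuzzy relation coincides with its own inverse. First I would recall that, by definition, $R^{-1}(x,y)=R(y,x)$ for all $x,y\in X$. Applying symmetry, $R(y,x)=R(x,y)$, so $R^{-1}(x,y)=R(x,y)$ for every pair $(x,y)$; that is, $R^{-1}=R$ as elements of $\mathscr{F}(X\times X)$. This single identity is the whole content of the proposition, so the remaining work is just to check that both approximation operators depend on the relation only through its pointwise values.

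Next I would unwind the definitions. The $O$-granular lower operator depends on $R$ solely through the granules $[x_\lambda]_R^O(y)=O(R(x,y),\lambda)$, hence through the values $R(x,y)$; likewise the $G$-granular upper operator depends on $R$ only through $[x_\lambda]_R^G(y)=G(R^N(x,y),\lambda)$, i.e.\ through $R^N(x,y)$. Since the involutive negation $N$ acts pointwise, $R^{-1}=R$ immediately yields $(R^{-1})^N=R^N$ as well. Consequently, replacing $R$ by $R^{-1}$ leaves every granule $[x_\lambda]_R^O$ and $[x_\lambda]_R^G$ unchanged, hence leaves the membership conditions $\{y:[x_\lambda]_R^O(y)\le A(y)\}$ and $\{y:A(y)\le[x_\lambda]_R^G(y)\}$ unchanged, and therefore leaves the unions and intersections defining $\underline{R}_O^\beta(A)$ and $\overline{R}_G^\beta(A)$ unchanged.

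Equivalently, and perhaps more transparently, I would invoke the efficient expressions of Propositions~\ref{p:3.1} and~\ref{p:3.2}: the fuzzy sets $g_A$ and $h_A$ are assembled from the quantities $I_O(R(x,y),A(y))$ and $I^G(R^N(x,y),A(y))$. Because $R^{-1}(x,y)=R(x,y)$ for all $x,y$, the functions $g_A$ and $h_A$ computed with $R^{-1}$ agree termwise with those computed with $R$, and since $\underline{R}_O^\beta(A)=\bigcup\{[x_{g_A(x)}]_R^O:x\in X\}$ and $\overline{R}_G^\beta(A)=\bigcap\{[x_{h_A(x)}]_R^G:x\in X\}$ are determined by $g_A,h_A$ together with the same relation values, the two claimed equalities follow for all $A\in\mathscr{F}(X)$.

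I expect no genuine obstacle in this argument; the only point that warrants a moment's care is the upper operator, where one must confirm that $(R^{-1})^N=R^N$ rather than appealing to $R^{-1}=R$ alone. This is immediate from the pointwise action of $N$, so both halves of the statement are settled at once.
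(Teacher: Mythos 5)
Your proposal is correct and takes essentially the same route as the paper, whose entire proof is the one-line remark that the claim "can be easily derived from the symmetry of $R$." You have simply made that derivation explicit: symmetry gives $R^{-1}=R$ (hence $(R^{-1})^N=R^N$ pointwise), and since both approximation operators depend on the relation only through the granules $[x_\lambda]_R^O$ and $[x_\lambda]_R^G$, all defining unions and intersections coincide.
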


\begin{proof}
	It can be easily derived from the symmetry of $R$.
\end{proof}

\begin{remark}\label{r:4.2}
	Consider $X=\{x_1,x_2,x_3\}$ and crisp relation $R$ on $X$ as
	\begin{align*}
	\begin{gathered}
	R=\begin{bmatrix} 1 & 0 & 1 \\ 0 & 1 & 0 \\ 1 & 0 & 1 \end{bmatrix}
	\end{gathered}
	\end{align*}
	
	It is easy to conclude that crisp relation $R$ is a fuzzy $O$-similarity relation for any overlap function $O$.
	
	Here, we apply overlap function $ O_{DB}$ and its residual implication $I_O$  defined as Remark~\ref{r:4.1}. Let $A=\frac{0.2}{x_1}+\frac{0}{x_2}+\frac{0.5}{x_3},\ \alpha = \bigwedge_{x\in X}R(x,x)$ and $\beta  = 0.5$. By Proposition~\ref{p:3.1}, we conclude that
	\begin{align*}
	g_A=\frac{\frac{1}{3}}{x_1}+\frac{1}{x_2}+\frac{\frac{1}{3}}{x_3}.
	\end{align*}
	Hence, the $O$-granular variable precision lower approximation operator is
	\begin{align*}
	\underline{R}_{O}^{\beta}(A)=\frac{\frac{1}{2}}{x_1}+\frac{1}{x_2}+\frac{\frac{1}{2}}{x_3}.
	\end{align*}
	Furthermore, we obtain the following conclusions,
	\begin{align*}
	O(\alpha_{X},\underline{R}_{O}^{\beta}(A))=\frac{\frac{2}{3}}{x_1}+\frac{1}{x_2}+\frac{\frac{2}{3}}{x_3},
	\end{align*}
	and
	\begin{align*}
	\underline{R}_{O}^{\beta}(\underline{R}_{O}^{\beta}(A))=\frac{\frac{1}{2}}{x_1}+\frac{1}{x_2}+\frac{\frac{1}{2}}{x_3}.
	\end{align*}
	It indicates that $g_A\subseteq\underline{R}_{O}^{\beta}(A)$ and $O(\alpha_{X},\underline{R}_{O}^{\beta}(A))\supseteq\underline{R}_{O}^{\beta}(\underline{R}_{O}^{\beta}(A)).$
	However, the following conclusions can be obtained when $O$ satisfies (O6).
\end{remark}

\begin{proposition}\label{p:4.7}
	Let $R$ satisfy $O$-transitivity, $\alpha=\bigwedge_{x\in X}R(x,x)$, overlap function $O$ and grouping function $G$ satisfy \textnormal{(O6)} and \textnormal{(G6)}, respectively. For all $A\in\mathscr{F}(X)$, the following statements hold. 	
	\begin{enumerate}[(1)]
		\item
		$\underline{R}_{O}^{\beta}(A)\subseteq g_A\ and\ O(\alpha_{X},\underline{R}_{O}^{\beta}(A))\subseteq\underline{R}_{O}^{\beta}(\underline{R}_{O}^{\beta}(A)).$
		\item
		$h_A\subseteq\overline{R}_{G}^{\beta}(A)\ and\ \overline{R}_{G}^{\beta}(\overline{R}_{G}^{\beta}(A))\subseteq G((\alpha_{X})^N,\overline{R}_{G}^{\beta}(A)), \ i\!f\ O\ and\ G\ are\ dual\ w.r.t.\ N.$
	\end{enumerate}
\end{proposition}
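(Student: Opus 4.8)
The plan is to work entirely with the closed form $\underline{R}_{O}^{\beta}(A)(z)=\bigvee_{x\in X}O(R(x,z),g_{A}(x))$ from Proposition~\ref{p:3.1}, and to exploit that, since $O$ satisfies \textnormal{(O6)}, it is a continuous associative overlap function with identity element $1$, so that the residuation property $O(x,u)\le y\Leftrightarrow I_{O}(x,y)\ge u$ and Lemma~\ref{l:nocondition}(1) are available. The heart of statement (1) is the first inclusion $\underline{R}_{O}^{\beta}(A)\subseteq g_{A}$; the idempotency-type bound and all of (2) will be assembled from it.

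For the first inclusion I would establish the pointwise estimate $O(R(x,z),g_{A}(x))\le g_{A}(z)$ for all $x,z\in X$ and then take the join over $x$. Since $\mathscr{F}_{\beta}(X)$ is finite, fix $X_{i}$ with $g_{A}(x)=g_{A}^{(i)}(x)$. For each $y\in X_{i}$, using commutativity and associativity of $O$, the $O$-transitivity of $R$, the bound $g_{A}^{(i)}(x)\le I_{O}(R(x,y),A(y))$, and Lemma~\ref{l:nocondition}(1),
\begin{align*}
O(R(z,y),O(R(x,z),g_{A}^{(i)}(x)))&=O(O(R(x,z),R(z,y)),g_{A}^{(i)}(x))\\
&\le O(R(x,y),g_{A}^{(i)}(x))\\
&\le O(R(x,y),I_{O}(R(x,y),A(y)))\le A(y).
\end{align*}
By residuation this reads $O(R(x,z),g_{A}^{(i)}(x))\le I_{O}(R(z,y),A(y))$ for every $y\in X_{i}$, hence $O(R(x,z),g_{A}(x))\le g_{A}^{(i)}(z)\le g_{A}(z)$, giving $\underline{R}_{O}^{\beta}(A)\subseteq g_{A}$.

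For the second inclusion of (1) write $B=\underline{R}_{O}^{\beta}(A)$. First I observe $g_{B}\supseteq g_{A}$: since $B(y)\ge O(R(x,y),g_{A}(x))$, residuation yields $g_{A}(x)\le I_{O}(R(x,y),B(y))$ for all $y$, so $g_{A}(x)\le g_{B}^{(i)}(x)\le g_{B}(x)$. Combined with the first inclusion $g_{A}\supseteq B$, this gives $g_{B}\supseteq B$. Now keeping only the term $x=z$ in $\underline{R}_{O}^{\beta}(B)(z)=\bigvee_{x\in X}O(R(x,z),g_{B}(x))$, and using $R(z,z)\ge\alpha$, $g_{B}\supseteq B$ and the monotonicity of $O$,
\begin{align*}
\underline{R}_{O}^{\beta}(\underline{R}_{O}^{\beta}(A))(z)\ge O(R(z,z),g_{B}(z))\ge O(\alpha,B(z))=O(\alpha_{X},\underline{R}_{O}^{\beta}(A))(z),
\end{align*}
which is exactly $O(\alpha_{X},\underline{R}_{O}^{\beta}(A))\subseteq\underline{R}_{O}^{\beta}(\underline{R}_{O}^{\beta}(A))$.

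Finally, statement (2) is the $N$-dual of (1) and I would deduce it without repeating any computation, since $O$ and $G$ are dual w.r.t. $N$ and Proposition~\ref{p:3.3} applies. Applying (1) to $A^{N}$ and taking complements, $\underline{R}_{O}^{\beta}(A^{N})\subseteq g_{A^{N}}$ becomes $h_{A}\subseteq\overline{R}_{G}^{\beta}(A)$ via $(g_{A^{N}})^{N}=h_{A}$ and $(\underline{R}_{O}^{\beta}(A^{N}))^{N}=\overline{R}_{G}^{\beta}(A)$; likewise the second inclusion applied to $A^{N}$, after using $N(O(a,b))=G(N(a),N(b))$, turns into $\overline{R}_{G}^{\beta}(\overline{R}_{G}^{\beta}(A))\subseteq G((\alpha_{X})^{N},\overline{R}_{G}^{\beta}(A))$. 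The main obstacle is the chained estimate in the first inclusion of (1): it is the only place where $O$-transitivity, associativity and residuation must be threaded together in the correct order, and getting the residuation direction right (moving $R(z,y)$ across the adjunction) is the delicate point; once that is secured, the idempotency bound and the dualization are routine.
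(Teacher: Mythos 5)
Your proof is correct. The first inclusion of (1) and the treatment of (2) coincide with the paper's own argument: the same chained estimate combining commutativity/associativity from \textnormal{(O6)}, the $O$-transitivity $O(R(x,z),R(z,y))\le R(x,y)$, the bound $g_A^{(i)}(x)\le I_O(R(x,y),A(y))$ and Lemma~\ref{l:nocondition}(1), followed by residuation; and the same reduction of (2) to (1) via the duality of Proposition~\ref{p:3.3} (the paper merely states this reduction, you spell it out). Where you genuinely diverge is the second inclusion of (1). Writing $B=\underline{R}_{O}^{\beta}(A)$, the paper proves it through a long chain showing $g_B^{(i)}(x)\ge O(\alpha,g_A^{(i)}(x))$ for every $X_i\in\mathscr{F}_{\beta}(X)$, which threads Lemma~\ref{l:associative law}(1)--(2), the exchange principle and $O$-transitivity through repeated $I_O$-manipulations before bounding $\underline{R}_{O}^{\beta}(B)$. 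You instead observe that $g_B\supseteq g_A$ follows in one line by applying residuation to $B(y)\ge O(R(x,y),g_A(x))$, combine this with the already-proved $B\subseteq g_A$ to get $g_B\supseteq B$, and then retain only the diagonal term $x=z$ together with $R(z,z)\ge\alpha$ and monotonicity of $O$. This route is shorter, avoids Lemma~\ref{l:associative law} entirely in that step, and makes transparent why the constant $\alpha=\bigwedge_{x\in X}R(x,x)$ enters; what the paper's heavier computation buys is the stronger uniform intermediate estimate $g_{B}^{(i)}\ge O(\alpha,g_{A}^{(i)})$ on each $X_i$, but that extra strength is not needed for the stated inclusion.
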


\begin{proof}
	\begin{enumerate}[(1)]
		\item
		For any $z\in X$, there exist $x\in X$ and $X_i\in\mathscr{F}_{\beta}(X)$ such that
		\begin{align*}
		\underline{R}_{O}^{\beta}(A)(z)=O(R(x,z),g_A^{(i)}(x)).
		\end{align*}
		Furthermore, according to Lemma~\ref{l:nocondition}(1) and (O6), one concludes that for all $y\in X_i$,
		\begin{align*}	       	
		O(R(z,y),	\underline{R}_{O}^{\beta}(A)(z))
		& =O(R(z,y),O(R(x,z),g_A^{(i)}(x)))\\
		&  =O(O(R(z,y),R(x,z)),g_A^{(i)}(x))\\
		& \le O(R(x,y),g_A^{(i)}(x))\\
		& \le O(R(x,y),I_O(R(x,y),A(y)))\\
		& \le A(y).
		\end{align*}  	
		Hence, we know that $\underline{R}_{O}^{\beta}(A)(z)\le g_A^{(i)}(z)\le g_A(z)$, that is to say, $\underline{R}_{O}^{\beta}(A)\subseteq g_A$ holds.
		
		Let $B=\underline{R}_{O}^{\beta}(A).$ By Proposition~\ref{p:3.1}, then it follows that for all $y\in X$ and $X_{i}\in\mathscr{F}_{\beta}{(X)},$
		\begin{align*}
		B(y)&=\underline{R}_{O}^{\beta}(A)(y)\\
		&=\underset{x\in X}{\bigvee}O(R(x,y),g_A(x))\\	
		&\ge O(R(y,y),g_A(y))\\
		&=O(R(y,y),\underset{X_i\in\mathscr{F}(X)}{\bigvee}g_A^{(i)}(y))\\
		&\ge O(R(y,y),g^{(i)}_A(y)).
		\end{align*}
		According to Lemma~\ref{l:associative law}(1) and (2), the following holds for all $x\in X$ and $X_i\in\mathscr{F}_{\beta}(X),$
		\begin{align*}
		g^{(i)}_B(x)&=\underset{y\in X_i}{\bigwedge}I_O(R(x,y),B(y))\\
		&\ge\underset{y\in X_i}{\bigwedge}I_O(R(x,y),O(R(y,y),g_A^{(i)}(y)))\\
		&\ge \underset{y\in X_i}{\bigwedge}O(R(y,y),I_O(R(x,y),g^{(i)}_A(y)))\\
		&=\underset{y\in X_i}{\bigwedge}O(R(y,y),\underset{z\in X_i}{\bigwedge}I_O(R(x,y),I_O(R(y,z),A(z)))\\
		&= \underset{y\in X_i}{\bigwedge}O(R(y,y),\underset{z\in X_i}{\bigwedge}I_O(O(R(x,y),R(y,z)),A(z)))\\
		&\ge \underset{y\in X_i}{\bigwedge}O(R(y,y),\underset{z\in X_i}{\bigwedge}I_O(O(R(x,z),A(z))))\\
		&= \underset{y\in X_i}{\bigwedge}O(R(y,y),g^{(i)}_A(x))\\
		&= O(\underset{y\in X_i}{\bigwedge}R(y,y),g^{(i)}_A(x))\\
		&\ge O(\alpha, g^{(i)}_A(x)).
		\end{align*}
		Further, $g_B(x)\ge O(\alpha, g_A^{(i)}(x))$ can be derived, then
		\begin{align*}
		O(R(x,y),g_{\underline{R}_{O}^{\beta}(A)(x)})\ge  O(R(x,y),O(\alpha,g^{(i)}_A(x)))=O(\alpha, O(R(x,y),g^{(i)}_A(x))).
		\end{align*}
		So we obtain
		\begin{align*}
		\underline{R}_{O}^{\beta}(\underline{R}_{O}^{\beta}(A))\supseteq O(\alpha_{X},\underline{R}_{O}^{\beta}(A)).
		\end{align*}
		\item
		According to item (1) and Proposition~\ref{p:3.3}, it can be directly proved.	
	\end{enumerate}
\end{proof}

When $R$ takes the fuzzy $O$-preorder relation, we get the following conclusions.
\begin{proposition}\label{p:4.8}
	Let $R$ satisfy fuzzy $O$-preorder relation, overlap funtion $O$ and grouping function $G$ satisfy \textnormal{(O6)} and \textnormal{(G6)}, respectively. For any $A\in \mathscr{F}(X)$, the following statements hold.
	\begin{enumerate}[(1)]
		\item
		$\underline{R}_{O}^{\beta}(A)=g_A\ and\ \underline{R}_{O}^{\beta}(A)\subseteq\underline{R}_{O}^{\beta}(\underline{R}_{O}^{\beta}(A)).$
		
		\item
		$\overline{R}_{G}^{\beta}(A)=h_A\ and\ \overline{R}_{G}^{\beta}(\overline{R}_{G}^{\beta}(A))\subseteq\overline{R}_{G}^{\beta}(A),\ i\!f\ O\ and\ G\ are\ dual\ w.r.t.\ N.$
	\end{enumerate}
\end{proposition}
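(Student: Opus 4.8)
The plan is to derive each equality by sandwiching $\underline{R}_{O}^{\beta}(A)$ (resp. $\overline{R}_{G}^{\beta}(A)$) between the one-sided inclusions already established for reflexive relations in Proposition~\ref{p:4.5} and for $O$-transitive relations in Proposition~\ref{p:4.7}, and then to sharpen the idempotency-type inclusions of Proposition~\ref{p:4.7} by exploiting that reflexivity forces the constant $\alpha=\bigwedge_{x\in X}R(x,x)$ occurring there to equal $1$. The essential observation is that a fuzzy $O$-preorder is, by definition, both reflexive and $O$-transitive, so both batches of earlier results are simultaneously available.

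First, for part (1), I would note that Proposition~\ref{p:4.5}(1) supplies $g_A\subseteq\underline{R}_{O}^{\beta}(A)$ from reflexivity, while Proposition~\ref{p:4.7}(1) supplies the reverse inclusion $\underline{R}_{O}^{\beta}(A)\subseteq g_A$ from $O$-transitivity together with \textnormal{(O6)}; combining them gives $\underline{R}_{O}^{\beta}(A)=g_A$. The dual pair, Proposition~\ref{p:4.5}(1) and Proposition~\ref{p:4.7}(2), yields $\overline{R}_{G}^{\beta}(A)=h_A$ in part (2) in exactly the same manner, using \textnormal{(G6)} and the duality of $O$ and $G$ w.r.t.\ $N$. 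For the idempotency statements, the key step is that reflexivity means $R(x,x)=1$ for every $x\in X$, whence $\alpha=\bigwedge_{x\in X}R(x,x)=1$. Since $O$ satisfies \textnormal{(O6)}, the value $1$ is the identity element of $O$, so the constant fuzzy set $\alpha_{X}$ (with $\alpha=1$) is a neutral argument and $O(\alpha_{X},\underline{R}_{O}^{\beta}(A))=\underline{R}_{O}^{\beta}(A)$. Substituting this into $O(\alpha_{X},\underline{R}_{O}^{\beta}(A))\subseteq\underline{R}_{O}^{\beta}(\underline{R}_{O}^{\beta}(A))$ from Proposition~\ref{p:4.7}(1) delivers $\underline{R}_{O}^{\beta}(A)\subseteq\underline{R}_{O}^{\beta}(\underline{R}_{O}^{\beta}(A))$. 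Dually, $(\alpha_{X})^{N}=N(1)=0$ pointwise, and since $G$ satisfies \textnormal{(G6)}, $0$ is the identity of $G$, so $G((\alpha_{X})^{N},\overline{R}_{G}^{\beta}(A))=\overline{R}_{G}^{\beta}(A)$; feeding this into $\overline{R}_{G}^{\beta}(\overline{R}_{G}^{\beta}(A))\subseteq G((\alpha_{X})^{N},\overline{R}_{G}^{\beta}(A))$ from Proposition~\ref{p:4.7}(2) gives $\overline{R}_{G}^{\beta}(\overline{R}_{G}^{\beta}(A))\subseteq\overline{R}_{G}^{\beta}(A)$.

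I do not expect a serious obstacle: the whole argument is a bookkeeping assembly of the reflexive and transitive halves already proved. The one step deserving care — and the point I would treat as the crux — is verifying that the neutral-element reduction is legitimate, namely that reflexivity genuinely collapses $\alpha_{X}$ to the identity of $O$ (and $(\alpha_{X})^{N}$ to the identity of $G$). This is exactly what converts the weaker inclusions of Proposition~\ref{p:4.7} into the clean idempotency inclusions asserted here, and it is precisely where the hypotheses \textnormal{(O6)}/\textnormal{(G6)}, together with the duality of $O$ and $G$ w.r.t.\ $N$ in part (2), are consumed.
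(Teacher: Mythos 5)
Your proof is correct and takes essentially the same route as the paper, whose entire proof is the one-line citation ``According to Proposition~\ref{p:3.3}, \ref{p:4.5} and \ref{p:4.7}, it can be directly proven'': you sandwich the operators between the reflexive-case inclusions of Proposition~\ref{p:4.5} and the $O$-transitive-case inclusions of Proposition~\ref{p:4.7}, exactly as intended. Your explicit verification of the neutral-element collapse --- reflexivity forces $\alpha=\bigwedge_{x\in X}R(x,x)=1$, which is the identity of $O$ under \textnormal{(O6)}, and dually $(\alpha_{X})^{N}=0$ is the identity of $G$ under \textnormal{(G6)} --- is precisely the bookkeeping the paper leaves implicit, with the only cosmetic difference that you obtain part (2) directly from Proposition~\ref{p:4.7}(2) rather than routing through the duality of Proposition~\ref{p:3.3}.
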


\begin{proof}
	According to Proposition~\ref{p:3.3}, \ref{p:4.5} and \ref{p:4.7}, it can be directly proven.
\end{proof}

\begin{proposition}\label{p:4.9}	
	Let $R$ satisfy fuzzy $O$-preorder relation, overlap funtion $O$ and grouping function $G$
	satisfy \textnormal{(O6)} and \textnormal{(G6)}, respectively. If $O$ and $G$ are dual w.r.t. $N$, then the following statements hold.
	\begin{enumerate}[(1)]
		\item
		$\underline{R}_{O}^{\beta}(I_O(\alpha_{X},A))=I_O(\alpha_{X},\underline{R}_{O}^{\beta}(A))\ and\ \overline{R}_{G}^{\beta}(I^G(\alpha_{X},A))=I^G(\alpha_{X},\overline{R}_{G}^{\beta}(A))$ for all $\alpha\in [0,1]$ and $A\in\mathscr{F}(X).$
		
		\item
		$\underline{R}_{O}^{\beta}(\emptyset)=\emptyset\ if\ and\ only\ if\ \underline{R}_{O}^{\beta}(I_O(\alpha_{X},\emptyset))=I_O(\alpha_{X},\emptyset)\ for\ all\ \alpha\in [0,1]$.
		
		\item
		$\overline{R}_{G}^{\beta}(X)=X\ if\ and\ only\ if\ \overline{R}_{G}^{\beta}(I^G(\alpha_{X},X))=I^G(\alpha_{X},X)\ for\ all\ \alpha\in [0,1]$.
		
		\item
		$I\!f\ \beta>0.5,\ then\ for\ all\ A,B\ \in\mathscr{F}(X)$,
		\begin{align*}
		\underline{R}_{O}^{\beta}(A)\cap\underline{R}_{O}^{\beta}(B)\subseteq\underline{R}_{O}^{(2\beta-1)}(A\cap B),\quad\overline{R}_{G}^{(2\beta-1)}(A\cap B)\subseteq\overline{R}_{G}^{\beta}(A)\cap\overline{R}_{G}^{\beta}(B);\\
		\underline{R}_{O}^{\beta}(A)\cup\underline{R}_{O}^{\beta}(B)\subseteq\underline{R}_{O}^{(2\beta-1)}(A\cup B),\quad\overline{R}_{G}^{(2\beta-1)}(A\cup B)\subseteq\overline{R}_{G}^{\beta}(A)\cup\overline{R}_{G}^{\beta}(B).
		\end{align*}
	\end{enumerate}
\end{proposition}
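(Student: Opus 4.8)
The plan is to base everything on the collapse of the two operators to the auxiliary maps $g$ and $h$ that is available here. Since $R$ is reflexive and $O$-transitive while $O,G$ satisfy \textnormal{(O6)} and \textnormal{(G6)}, Proposition~\ref{p:4.8} gives $\underline{R}_{O}^{\beta}(A)=g_A$ and $\overline{R}_{G}^{\beta}(A)=h_A$ for every $A\in\mathscr{F}(X)$. Item (1) then drops out immediately: feeding this into Lemma~\ref{l:4.3}(1), which states $g_{(I_O(\alpha_{X},A))}=I_O(\alpha_{X},g_A)$ and $h_{(I^G(\alpha_{X},A))}=I^G(\alpha_{X},h_A)$, produces the two asserted identities of (1) simultaneously.

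For items (2) and (3) I would split into the two implications. The forward directions are direct specializations of (1): taking $A=\emptyset$ yields $\underline{R}_{O}^{\beta}(I_O(\alpha_{X},\emptyset))=I_O(\alpha_{X},\underline{R}_{O}^{\beta}(\emptyset))=I_O(\alpha_{X},\emptyset)$, and taking $A=X$ gives the grouping-side statement of (3). For the converse of (2) I would evaluate the hypothesis at $\alpha=1$; because \textnormal{(O6)} makes $1$ the identity of $O$, Lemma~\ref{l:neutal}(1) gives $I_O(1,0)=0$, so $I_O(1_X,\emptyset)=\emptyset$ and the assumed equality collapses to $\underline{R}_{O}^{\beta}(\emptyset)=\emptyset$. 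The converse of (3) is symmetric: evaluate at $\alpha=0$, use that $0$ is the identity of $G$ to get $I^G(0,1)=1$, hence $I^G(0_X,X)=X$ and therefore $\overline{R}_{G}^{\beta}(X)=X$.

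Item (4) carries the real content. Two of its four inclusions, namely $\underline{R}_{O}^{\beta}(A)\cup\underline{R}_{O}^{\beta}(B)\subseteq\underline{R}_{O}^{(2\beta-1)}(A\cup B)$ and $\overline{R}_{G}^{(2\beta-1)}(A\cap B)\subseteq\overline{R}_{G}^{\beta}(A)\cap\overline{R}_{G}^{\beta}(B)$, are already established in Proposition~\ref{p:4.1}(2) and need no further work. For the meet-type inclusion $\underline{R}_{O}^{\beta}(A)\cap\underline{R}_{O}^{\beta}(B)\subseteq\underline{R}_{O}^{(2\beta-1)}(A\cap B)$ I would again invoke $\underline{R}_{O}^{\beta}=g$. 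Fixing $x\in X$, I choose $X_i,X_j\in\mathscr{F}_{\beta}(X)$ realising $g_A(x)=g_A^{(i)}(x)$ and $g_B(x)=g_B^{(j)}(x)$ (possible as $\mathscr{F}_{\beta}(X)$ is finite), note the cardinality bound $|X_i\cap X_j|\ge(2\beta-1)|X|$ so that $X_i\cap X_j\in\mathscr{F}_{(2\beta-1)}(X)$, restrict both meets to $X_i\cap X_j$, and then fuse them via Lemma~\ref{l:nocondition}(2), turning $I_O(R(x,y),A(y))\wedge I_O(R(x,y),B(y))$ into $I_O(R(x,y),(A\cap B)(y))$; this bounds $g_A(x)\wedge g_B(x)$ above by $g_{A\cap B}^{(X_i\cap X_j)}(x)\le g_{A\cap B}(x)$ computed at precision $2\beta-1$. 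The last inclusion $\overline{R}_{G}^{(2\beta-1)}(A\cup B)\subseteq\overline{R}_{G}^{\beta}(A)\cup\overline{R}_{G}^{\beta}(B)$ I would not reprove but deduce from the meet-type one: apply the involutive negation $N$ to it for the pair $A^{N},B^{N}$ and use the duality $(\underline{R}_{O}^{\gamma}(C))^{N}=\overline{R}_{G}^{\gamma}(C^{N})$ from Proposition~\ref{p:3.3} together with De Morgan's identity $(A^{N}\cap B^{N})^{N}=A\cup B$.

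The main obstacle is precisely this meet-type inclusion of (4): one must keep honest track of the precision shift, verifying $|X_i\cap X_j|\ge(2\beta-1)|X|$ and then exploiting that $I_O(R(x,y),\cdot)$ preserves arbitrary meets in order to merge the two separate infima into a single infimum of $A\cap B$ over the common index set $X_i\cap X_j$. Once this is in place, the remaining claims are either literal citations of Propositions~\ref{p:4.8},~\ref{p:4.1},~\ref{p:3.3} or one-line specializations of item (1).
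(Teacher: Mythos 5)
Your proposal is correct, and in item (1) it takes a genuinely shorter route than the paper. The paper proves the inclusion $I_O(\alpha_{X},\underline{R}_{O}^{\beta}(A))\subseteq\underline{R}_{O}^{\beta}(I_O(\alpha_{X},A))$ from scratch: it sets $\lambda=I_O(\alpha,\underline{R}_{O}^{\beta}(A)(x))$, unwinds residuation and (O6) to show that the granule $[x_{\lambda}]_{R}^{O}$ is a witness in Definition~\ref{d:model}, evaluates at $x$ using reflexivity, and only then invokes Proposition~\ref{p:4.3}(1) for the reverse inclusion. You instead collapse both sides at once,
$\underline{R}_{O}^{\beta}(I_O(\alpha_{X},A))=g_{I_O(\alpha_{X},A)}=I_O(\alpha_{X},g_A)=I_O(\alpha_{X},\underline{R}_{O}^{\beta}(A))$,
by Proposition~\ref{p:4.8} and Lemma~\ref{l:4.3}(1); this pushes all the work into lemmas already proved and handles both equalities of (1) symmetrically (note the $h$-side collapse in Proposition~\ref{p:4.8}(2) needs the duality hypothesis, which is standing here, so you are fine). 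Items (2)--(3) match the paper's intent: your converse directions at $\alpha=1$, resp.\ $\alpha=0$, using the identity elements forced by (O6)/(G6), are exactly what the ``Especially'' clause of Proposition~\ref{p:4.3}(1) encodes. In item (4), your meet-type inclusion is the same combinatorial idea as the paper's --- intersect the two witnessing sets and use $|X_i\cap X_j|\ge(2\beta-1)|X|$ --- only phrased through the $g$-functions, the meet-preservation of $I_O$ from Lemma~\ref{l:nocondition}(2), and Proposition~\ref{p:4.8} applied at precision $2\beta-1$, where the paper works with granules and evaluates $[x_{(g_A(x)\land g_B(x))}]_R^O$ at $x$ via reflexivity and the identity element. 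For the last inclusion $\overline{R}_{G}^{(2\beta-1)}(A\cup B)\subseteq\overline{R}_{G}^{\beta}(A)\cup\overline{R}_{G}^{\beta}(B)$ the paper merely says ``in a similar way,'' whereas you derive it from the meet-type inclusion by applying $N$ and using Proposition~\ref{p:3.3} together with De Morgan's law for an involutive negation; this is legitimate precisely because duality of $O$ and $G$ w.r.t.\ $N$ is assumed in this proposition, and it buys you one fewer symmetric re-derivation. The remaining two inclusions are cited from Proposition~\ref{p:4.1}(2) in both your proof and the paper's.
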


\begin{proof}\label{pr:4.8}
	\begin{enumerate}[(1)]
		\item
		Let $x\in X$ and $\lambda=I_O(\alpha,\underline{R}_{O}^{\beta}(A)(x))$, then we get $O(\alpha,\lambda)\le \underline{R}_{O}^{\beta}(A)(x).$ According to Proposition~\ref{p:3.1} and \ref{p:4.8}(1), there exists an $X_i\in\mathscr{F}_{\beta}(X)$ such that
		\begin{align*}
		O(\alpha,\lambda)\le\underline{R}_{O}^{\beta}(A)(x)=g_A(x)=g_A^{(i)}(x)=\underset{y\in X_i}{\bigwedge}I_O(R(x,y),A(y)).
		\end{align*}
		Then for all $y\in X_i$,
		\begin{align*}
		O(\alpha,\lambda)\le I_O(R(x,y),A(y))
		&\iff 	O(O(\alpha,\lambda),R(x,y))\le A(y)\\
		&\iff O(\alpha,O(\lambda,R(x,y)))\le A(y)\\
		&\iff [x_\lambda]^{O}_R(y) \le I_O(\alpha,A(y)).
		\end{align*}
	 According to Definition~\ref{d:model}, it holds that $[x_\lambda]^{O}_R\subseteq\underline{R}_{O}^{\beta}(I_O(\alpha_{X},A))$.
	 On the other hand, $\lambda = [x_\lambda]^{O}_R(x)\le\underline{R}_{O}^{\beta}(I_O(\alpha_{X},A))(x)$, since $R$ is the fuzzy $O$-preorder relation.
	 Furthermore, according to Proposition~\ref{p:4.3}(1), we obtain that $\underline{R}_{O}^{\beta}(I_O(\alpha_{X},A))=I_O(\alpha_{X},\underline{R}_{O}^{\beta}(A))$ for any $\alpha\in [0,1]$  and $A\in\mathscr{F}(X)$. In a similar way, $\overline{R}_{G}^{\beta}(I^G{(\alpha_{X},A)})=I^G(\alpha_{X},\overline{R}_{G}^{\beta}(A))$ holds.		
		\item
		According to item (1) and Proposition~\ref{p:4.3}(1), it can be directly proved.
		
		\item
		According to item (1) and Proposition~\ref{p:4.3}(1), it can be directly proved.
		
		\item 
		Let $x\in X$, by Proposition~\ref{p:4.8}(1), we get $\underline{R}_{O}^{\beta}(A)(x)=g_A(x)$ and $\underline{R}_{O}^{\beta}(B)(x)=g_B(x)$, then
		$$X_i=\{y:[x_{g_A(x)}]_R^O(y)\le A(y)\}\ and\  	X_j=\{y:[x_{g_B(x)}]_R^O(y)\le B(y)\}.$$
		
	    Hence, $X_i\, ,X_j\in\mathscr{F}_{\beta}(X)$ by Proposition~\ref{p:3.1}, we have $X_i\cap
		X_j\in\mathscr{F}_{(2\beta -1)}(X)$. It holds that for all $y\in X_i\cap X_j$,
		$$[x_{(g_A(x)\land g_B(x))}]_R^O(y)=O(R(x,y),g_A(x))\land O(R(x,y),g_B(x))\le A(y)\land B(y).$$
		So $[x_{(g_A(x)\land g_B(x))}]_R^O\subseteq\underline{R}_{O}^{(2\beta-1)}(A\cap B)$. Since $O$ has 1 as identity element and $R$ is reflexive, we conclude that, 	
		\begin{align*}
		\underline{R}_{O}^{\beta}(A)(x)\land \underline{R}_{O}^{\beta}(B)(x)
		=g_A(x)\land g_B(x)=  [x_{(g_A(x)\land g_B(x))}]_R^O(x)\le\underline{R}_{O}^{(2\beta-1)}(A\cap B) (x),
		\end{align*}
		then $\underline{R}_{O}^{\beta}(A)\cap\underline{R}_{O}^{\beta}(B)\subseteq\underline{R}_{O}^{(2\beta-1)}(A\cap B)$. In a similar way, we get that $\overline{R}_{G}^{(2\beta-1)}(A\cup B)\subseteq\overline{R}_{G}^{\beta}(A)\cup\overline{R}_{G}^{\beta}(B)$. The rest can be proved from Proposition~\ref{p:4.1}(2).

	\end{enumerate}
\end{proof}

Considering special fuzzy relations, we will further explore the characteristics of $\underline{R}_{O}^{\beta}(\underline{R}_{O}^{\beta}(A))\ $and $\ \overline{R}_{G}^{\beta}(\overline{R}_{G}^{\beta}(A))$.
\begin{proposition}\label{p:4.10}
	Let $\alpha=\bigwedge_{x\in X}R(x,x)$, overlap funtion $O$ and grouping function $G$ satisfy \textnormal{(O6)} and \textnormal{(G6)}, respectively. For any $A\in\mathscr{F}(X)$, the following statements hold.
	\begin{enumerate}[(1)]
		\item
		$I\!f\ R(x,y)\le I_O(A(x),A(y))\ for\ all\ x,y\in X,\ then$
		\begin{align*}
		\underline{R}_{O}^{\beta}(O(\alpha_{X},A))\subseteq\underline{R}_{O}^{\beta}(\underline{R}_{O}^{\beta}(A)).
		\end{align*}
		\item
		$I\!f\ O\ and\ G\ are\ dual\ w.r.t.\ N,\ and\ R^N(x,y)\ge I^G(A(x),A(y))\ for\ all\ x,y\in X,\ then$
		\begin{align*}
		\overline{R}_{G}^{\beta}(\overline{R}_{G}^{\beta}(A))\subseteq\overline{R}_{G}^{\beta}(G((\alpha_{X})^{N},A)).
		\end{align*}
	\end{enumerate}
\end{proposition}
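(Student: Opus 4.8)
The plan is to prove part (1) directly and then obtain part (2) from it by the duality of Proposition~\ref{p:3.3}, so that the grouping side requires no separate analytic computation. The conceptual heart of (1) is the single inclusion $O(\alpha_{X},A)\subseteq\underline{R}_{O}^{\beta}(A)$, after which monotonicity finishes the argument.

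For part (1), the first step is to read the hypothesis $R(x,y)\le I_O(A(x),A(y))$ through residuation. Since $O$ satisfies \textnormal{(O6)} it is a continuous positive $t$-norm, so $O$ and $I_O$ form an adjoint pair; applying the residuation property together with commutativity of $O$ rewrites the hypothesis in the equivalent form $A(x)\le I_O(R(x,y),A(y))$ for all $x,y\in X$. Taking the meet over an arbitrary $X_i\in\mathscr{F}_{\beta}(X)$ (which is nonempty, as $X\in\mathscr{F}_{\beta}(X)$) then gives $g_A^{(i)}(x)=\bigwedge_{y\in X_i}I_O(R(x,y),A(y))\ge A(x)$, and passing to the join over $\mathscr{F}_{\beta}(X)$ yields $A\subseteq g_A$.

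The second step deduces $O(\alpha_{X},A)\subseteq\underline{R}_{O}^{\beta}(A)$. By Proposition~\ref{p:3.1} we have $\underline{R}_{O}^{\beta}(A)(z)=\bigvee_{x\in X}O(R(x,z),g_A(x))$; retaining only the term $x=z$ and combining $R(z,z)\ge\alpha$, the inequality $g_A(z)\ge A(z)$ from Step~1, and monotonicity \textnormal{(O4)} gives $\underline{R}_{O}^{\beta}(A)(z)\ge O(R(z,z),g_A(z))\ge O(\alpha,A(z))$ for every $z\in X$. Applying the monotone operator $\underline{R}_{O}^{\beta}$ (Proposition~\ref{p:4.1}(1)) to the resulting inclusion $O(\alpha_{X},A)\subseteq\underline{R}_{O}^{\beta}(A)$ produces $\underline{R}_{O}^{\beta}(O(\alpha_{X},A))\subseteq\underline{R}_{O}^{\beta}(\underline{R}_{O}^{\beta}(A))$, which is (1). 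For part (2) the idea is to apply (1) to $A^{N}$ and dualize: using $I_O(Na,Nb)=N(I^{G}(a,b))$ (valid since $O,G$ are dual w.r.t.\ $N$) one checks that the hypothesis $R^{N}(x,y)\ge I^{G}(A(x),A(y))$ of (2) is exactly the hypothesis $R(x,y)\le I_O(A^{N}(x),A^{N}(y))$ of (1) for $A^{N}$, so (1) gives $\underline{R}_{O}^{\beta}(O(\alpha_{X},A^{N}))\subseteq\underline{R}_{O}^{\beta}(\underline{R}_{O}^{\beta}(A^{N}))$. Complementing both sides reverses the inclusion, and two applications of Proposition~\ref{p:3.3} in the form $(\underline{R}_{O}^{\beta}(C))^{N}=\overline{R}_{G}^{\beta}(C^{N})$ turn the left-hand side into $\overline{R}_{G}^{\beta}(\overline{R}_{G}^{\beta}(A))$, while the duality $N(O(\alpha,NA(x)))=G(N\alpha,A(x))$ identifies $(O(\alpha_{X},A^{N}))^{N}$ with $G((\alpha_{X})^{N},A)$, delivering $\overline{R}_{G}^{\beta}(\overline{R}_{G}^{\beta}(A))\subseteq\overline{R}_{G}^{\beta}(G((\alpha_{X})^{N},A))$.

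I expect the main obstacle to be bookkeeping rather than conceptual. The delicate point in Step~1 is performing the residuation rewriting so that the inequality lands on the side giving $A\subseteq g_A$ (rather than the reverse), and the delicate point in (2) is keeping the involutive negation, the complemented relation $R^{N}$, and the constant value $N(\alpha)$ inside $(\alpha_{X})^{N}$ mutually consistent across the two invocations of Proposition~\ref{p:3.3}. Once $A\subseteq g_A$ is secured the remainder of (1) is immediate, and (2) carries no extra content beyond the duality dictionary.
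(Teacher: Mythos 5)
Your proposal is correct, and for part (1) it takes a genuinely different route from the paper. The paper's proof works entirely at the level of the granule functions: setting $B=O(\alpha_{X},A)$ and $C=\underline{R}_{O}^{\beta}(A)$, it shows $I_O\bigl(g_B^{(i)}(x),g_C^{(i)}(x)\bigr)=1$ through a long chain of inequalities that uses the exchange principle of $I_O$, Lemma~\ref{l:associative law}(2), and Lemma~\ref{l:neutal}(2), and only then concludes $g_B\le g_C$ and hence the inclusion of the lower approximations; the hypothesis $R(x,y)\le I_O(A(x),A(y))$ enters only at the last step of that chain. You instead isolate the hypothesis's real content up front: by residuation and commutativity it is equivalent to $A(x)\le I_O(R(x,y),A(y))$, which gives $A\subseteq g_A$; combined with $R(z,z)\ge\alpha$ and (O4) this yields the pointwise inclusion $O(\alpha_{X},A)\subseteq\underline{R}_{O}^{\beta}(A)$, and monotonicity (Proposition~\ref{p:4.1}(1)) finishes. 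This is more modular and more elementary --- it avoids every exchange-principle manipulation, and in fact the structure of your argument needs only residuation (which holds for any overlap function by continuity) and monotonicity, so (O6) appears in your write-up only as a convenient justification for the adjunction rather than as an essential ingredient, whereas the paper's computation uses (O6) irreducibly in three places. What the paper's computation buys in exchange is a slightly stronger intermediate fact (the comparison $g_B^{(i)}\le g_C^{(i)}$ for each individual $X_i$), but that extra precision is not needed for the stated result. Your part (2) is the same dualization via Proposition~\ref{p:3.3} that the paper invokes, carried out with the correct dictionary $I_O(N(a),N(b))=N(I^{G}(a,b))$ and $(O(\alpha_{X},A^{N}))^{N}=G((\alpha_{X})^{N},A)$.
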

\begin{proof}
	\begin{enumerate}[(1)]
		\item
		Let $X_i\in\mathscr{F}_{\beta}(X),\ B=O(\alpha_{X},A)$ and $C=\underline{R}_{O}^{\beta}(A)$. From Lemma~\ref{l:nocondition}(2) and \ref{l:associative law}(2) that for any $x\in X$,
		\begin{align*}
		I_O(g_B^{(i)}(x),g_C^{(i)}(x))
		&=I_O(g_B^{(i)}(x),\underset{y\in X_i}{\bigwedge}I_O(R(x,y),C(y)))\\
		&=\underset{y\in X_i}{\bigwedge} I_O(g_B^{(i)}(x),I_O(R(x,y),C(y)))\\
		&\ge\underset{y\in X_i}{\bigwedge} I_O(I_O(R(x,y),O(\alpha,A(y))), I_O(R(x,y),C(y)))\\
		&\ge\underset{y\in X_i}{\bigwedge} I_O(O(\alpha,A(y)), C(y))) \\
		&\ge\underset{y\in X_i}{\bigwedge} I_O(O(\alpha,A(y)),O(R(y,y),g_A^{(i)}(y))) \\
		&\ge\underset{y\in X_i}{\bigwedge} I_O(O(R(y,y),A(y)),O(R(y,y),g_A^{(i)}(y))) \\
		&\ge\underset{y\in X_i}{\bigwedge}  I_O(A(y),g_A^{(i)}(y))\\
		&=\underset{y\in X_i}{\bigwedge}\ \underset{z\in X_i}{\bigwedge}I_O(A(y),I_O(R(y,z),A(z)))\\
		&=\underset{y\in X_i}{\bigwedge}\ \underset{z\in X_i}{\bigwedge}I_O(R(y,z),I_O(A(y),A(z)))\\
		&= 1.
		\end{align*}
		It follows Lemma~\ref{l:neutal}(2) that $g_B^{(i)}(x)\subseteq g_C^{(i)}(x)$. Thus we have $	\underline{R}_{O}^{\beta}(O(\alpha_{X},A))\subseteq 	\underline{R}_{O}^{\beta}(	\underline{R}_{O}^{\beta}(A))$.
		\item
		According to item (1) and Proposition~\ref{p:3.3}, it can be directly proved.	
	\end{enumerate}
\end{proof}

At the end of this section, sufficient and necessary conditions for $(O,G)$-GVPFRSs to be equal under two different fuzzy relations are given.
\begin{lemma}\label{l:different relations}
		Let $S,R$ be fuzzy $O$-preorder relations, $S\subseteq R$, overlap funtion $O$ and grouping function $G$ satisfy \textnormal{(O6)} and \textnormal{(G6)}, respectively. If $O$ and $G$ are dual w.r.t. $N$, then the following statements hold.
	\begin{align*}
	\underline{R}_{O}^{\beta}(A)\subseteq\underline{S}_{O}^{\beta}(A)\ and\ \overline{S}_{G}^{\beta}(A)\subseteq\overline{R}_{G}^{\beta}(A).
	\end{align*}
\end{lemma}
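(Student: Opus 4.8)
The plan is to collapse both approximation operators onto the two auxiliary fuzzy sets $g_A$ and $h_A$ and then compare those pointwise. Since $R$ and $S$ are both fuzzy $O$-preorder relations and $O$ satisfies \textnormal{(O6)}, Proposition~\ref{p:4.8}(1) gives $\underline{R}_{O}^{\beta}(A)=g_A$ and $\underline{S}_{O}^{\beta}(A)=g_A$, where the two right-hand sides are built from $R$ and from $S$ respectively. Writing $g_A^{R}$ and $g_A^{S}$ to mark which relation is used, the first inclusion will follow once I show $g_A^{R}\subseteq g_A^{S}$.

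First I would record the monotonicity of $I_O$ in its first argument. Applying Lemma~\ref{l:nocondition}(4) to a two-element chain $x_1\le x_2$ gives $I_O(x_2,y)=I_O(x_1,y)\wedge I_O(x_2,y)$, so $I_O(\cdot,y)$ is non-increasing. Because $S\subseteq R$ means $S(x,y)\le R(x,y)$ for all $x,y\in X$, this yields $I_O(R(x,y),A(y))\le I_O(S(x,y),A(y))$ for every $y$. Fixing $X_i\in\mathscr{F}_{\beta}(X)$ and taking the infimum over $y\in X_i$ gives $g_A^{R,(i)}\le g_A^{S,(i)}$, and taking the supremum over $X_i\in\mathscr{F}_{\beta}(X)$ gives $g_A^{R}\le g_A^{S}$, that is $g_A^{R}\subseteq g_A^{S}$. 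By Proposition~\ref{p:4.8}(1) this is exactly $\underline{R}_{O}^{\beta}(A)\subseteq\underline{S}_{O}^{\beta}(A)$.

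For the upper operators the cleanest route is to avoid a fresh computation and instead invoke the duality already established. Since $O$ and $G$ are dual w.r.t.\ $N$, Proposition~\ref{p:3.3} gives $(\underline{T}_{O}^{\beta}(A^{N}))^{N}=\overline{T}_{G}^{\beta}(A)$ for each $T\in\{R,S\}$, using that $N$ is involutive. Applying the lower-operator inclusion just proved to the fuzzy set $A^{N}$ yields $\underline{R}_{O}^{\beta}(A^{N})\subseteq\underline{S}_{O}^{\beta}(A^{N})$; taking $N$-complements reverses the inclusion, and rewriting both sides through Proposition~\ref{p:3.3} turns this into $\overline{S}_{G}^{\beta}(A)\subseteq\overline{R}_{G}^{\beta}(A)$, as required.

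The only genuinely delicate point is bookkeeping the direction of the inequalities. In the grouping-function granules the relevant object is $R^{N}$ rather than $R$, and since $N$ is order-reversing the hypothesis $S\subseteq R$ becomes $R^{N}\subseteq S^{N}$; moreover $I^{G}(\cdot,y)$ is itself order-reversing in its first slot, so a \emph{direct} proof of the upper inclusion would have to compose these two reversals correctly and is easy to get backwards. Routing the upper case through the duality of Proposition~\ref{p:3.3} sidesteps this entirely, so I would present the lower inclusion by the explicit monotonicity argument and derive the upper inclusion purely formally from it.
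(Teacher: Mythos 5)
Your proof is correct and takes essentially the same route as the paper: the paper's own (one-line) proof cites exactly Lemma~\ref{l:nocondition}(4) and Proposition~\ref{p:4.8}, i.e., the identification $\underline{R}_{O}^{\beta}(A)=g_A$ (and $\overline{R}_{G}^{\beta}(A)=h_A$) together with the antitonicity of the residual (co)implication in its first argument, which is precisely your argument for the lower inclusion. Your only deviation---obtaining the upper inclusion from the lower one via the duality $(\underline{T}_{O}^{\beta}(A^{N}))^{N}=\overline{T}_{G}^{\beta}(A)$ of Proposition~\ref{p:3.3} instead of comparing $h_{A}$ built from $S$ and from $R$ directly---is immaterial, since the paper's Proposition~\ref{p:4.8}(2) is itself established through that same duality.
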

\begin{proof}
	According to Lemma~\ref{l:nocondition}(4) and Proposition~\ref{p:4.8}, it can be directly proved.	
\end{proof}

\begin{proposition}\label{p:4.11}
	Let $R$ satisfy fuzzy $O$-transitivity, overlap function $O$ and grouping function $G$ satisfy \textnormal{(O6)} and \textnormal{(G6)}, respectively. If $O$ and $G$ are dual w.r.t. $N$, then the following statements hold.
	\begin{enumerate}[(1)]
		\item
		$I\!f\ \underline{S}_{O}^{\beta}(A)(x)=\underline{R}_{O}^{\beta}(A)(x),\ then\ \{y:[x_{\underline{S}_{O}^{\beta}(A)(x)}]^O_R(y)\le A(y)\}\in\mathscr{F}_{\beta}(X).$
		
		\item
		$I\!f\ \overline{S}_{G}^{\beta}(A)(x)=\overline{R}_{G}^{\beta}(A)(x),\ then\ \{y:A(y)\le [x_{\overline{S}_{G}^{\beta}(A)(x)}]^G_R(y) \}\in\mathscr{F}_{\beta}(X).$
	\end{enumerate}
\end{proposition}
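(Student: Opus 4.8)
The plan is to reduce both claims to the membership statements already built into Propositions~\ref{p:3.1} and~\ref{p:3.2}, exploiting the threshold monotonicity from Proposition~\ref{p:4.7} together with the obvious upward closure of $\mathscr{F}_{\beta}(X)$: if $U\subseteq V$ and $U\in\mathscr{F}_{\beta}(X)$, then $|V|\ge|U|\ge\beta|X|$, so $V\in\mathscr{F}_{\beta}(X)$.

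For item (1), put $\mu=\underline{S}_{O}^{\beta}(A)(x)=\underline{R}_{O}^{\beta}(A)(x)$, where $g_A$ is understood to be formed with respect to $R$. Since $R$ is $O$-transitive and $O$ satisfies (O6), Proposition~\ref{p:4.7}(1) yields $\underline{R}_{O}^{\beta}(A)\subseteq g_A$, so the hypothesis gives $\mu\le g_A(x)$. As $O$ is non-decreasing, $O(R(x,y),\mu)\le O(R(x,y),g_A(x))$ for every $y\in X$, whence
\[
\{y:[x_{g_A(x)}]_R^O(y)\le A(y)\}\subseteq\{y:[x_{\mu}]_R^O(y)\le A(y)\}.
\]
Proposition~\ref{p:3.1} places the left-hand set in $\mathscr{F}_{\beta}(X)$, and upward closure transfers this to the right-hand set; recalling $\mu=\underline{S}_{O}^{\beta}(A)(x)$ gives exactly $\{y:[x_{\underline{S}_{O}^{\beta}(A)(x)}]_R^O(y)\le A(y)\}\in\mathscr{F}_{\beta}(X)$.

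Item (2) runs by the order-dual argument. Setting $\nu=\overline{S}_{G}^{\beta}(A)(x)=\overline{R}_{G}^{\beta}(A)(x)$, Proposition~\ref{p:4.7}(2) (whose use of the duality of $O$ and $G$ w.r.t.\ $N$ is covered by the hypotheses) gives $h_A\subseteq\overline{R}_{G}^{\beta}(A)$, hence $h_A(x)\le\nu$. Monotonicity of $G$ then produces $G(R^{N}(x,y),h_A(x))\le G(R^{N}(x,y),\nu)$ for all $y$, so $\{y:A(y)\le[x_{h_A(x)}]_R^G(y)\}\subseteq\{y:A(y)\le[x_{\nu}]_R^G(y)\}$; Proposition~\ref{p:3.2} places the smaller set in $\mathscr{F}_{\beta}(X)$ and upward closure finishes the argument.

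There is no real computational difficulty here; the one delicate point is orienting the set inclusions correctly. Because the relevant threshold lies \emph{below} the reference value $g_A(x)$ (resp.\ \emph{above} $h_A(x)$), the corresponding granule shrinks (resp.\ grows pointwise) and the defining set \emph{enlarges}, and it is precisely this enlargement, combined with the upward closure of $\mathscr{F}_{\beta}(X)$, that preserves the required cardinality bound. The main conceptual step is therefore recognising $g_A$ and $h_A$ as the correct objects to compare against, rather than the granules built from $S$.
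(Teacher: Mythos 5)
Your proof is correct and follows essentially the same route as the paper's: both reduce the claim to Proposition~\ref{p:4.7} (giving $\underline{R}_{O}^{\beta}(A)\subseteq g_A$, resp.\ $h_A\subseteq\overline{R}_{G}^{\beta}(A)$), then use monotonicity of $O$ (resp.\ $G$) to obtain the set inclusion against $\{y:[x_{g_A(x)}]^O_R(y)\le A(y)\}$ (resp.\ $\{y:A(y)\le[x_{h_A(x)}]^G_R(y)\}$), and conclude via Proposition~\ref{p:3.1} (resp.~\ref{p:3.2}) together with the upward closure of $\mathscr{F}_{\beta}(X)$. Your write-up merely makes explicit the monotonicity and upward-closure steps that the paper leaves implicit, and spells out item (2), which the paper dismisses as ``similar.''
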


\begin{proof}\label{pr:4.11}
	\begin{enumerate}[(1)]
		\item
		Combining Proposition~\ref{p:4.7}(1) and $\underline{S}_{O}^{\beta}(A)(x)=\underline{R}_{O}^{\beta}(A)(x)$, we conclude that

		\begin{align*}
		\{y:[x_{\underline{S}_{O}^{\beta}(A)(x)}]^O_R(y)\le
		A(y)\}=\{y:[x_{\underline{R}_{O}^{\beta}(A)(x)}]^O_R(y)\le A(y)\}\supseteq\{y:[x_{g_A(x)}]^O_R(y)\le A(y)\}.
		\end{align*}
		
		Hence, it follows Proposition~\ref{p:3.1} that $\{y:[x_{\underline{S}_{O}^{\beta}(A)(x)}]^O_R(y)\le A(y)\}\in\mathscr{F}_{\beta}(X)$.
		
		\item
		The proof is similar as item (1).
	\end{enumerate}
\end{proof}

\begin{proposition}{\label{p:4.12}}
	Let $S,R$ be fuzzy $O$-preorder relations, $S\subseteq R$, overlap function $O$ and grouping function $G$ satisfy \textnormal{(O6)} and \textnormal{(G6)}, respectively. If $O$ and $G$ are dual w.r.t. $N$, then the following statements hold.    	
	\begin{enumerate}[(1)]
		\item
		$I\!f\ \{y:[x_{\underline{S}_{O}^{\beta}(A)(x)}]^O_R(y)\le A(y)\}\in\mathscr{F}_{\beta}(X)\ for\ all\ x\in X,\ then\ \underline{S}_{O}^{\beta}(A)=\underline{R}_{O}^{\beta}(A).$
		\item
		$I\!f\ \{y:A(y)\le [x_{\overline{S}_{G}^{\beta}(A)(x)}]^G_R(y) \}\in\mathscr{F}_{\beta}(X)\ for\ all\ x\in X,\ then\ \overline{S}_{G}^{\beta}(A)=\overline{R}_{G}^{\beta}(A).$
	\end{enumerate}  	
\end{proposition}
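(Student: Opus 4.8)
The plan is to prove each equality as a pair of inclusions, where one inclusion is supplied for free by the hypotheses already in force. Since $S\subseteq R$ are both fuzzy $O$-preorder relations and $O$, $G$ satisfy \textnormal{(O6)}, \textnormal{(G6)} and are dual w.r.t. $N$, Lemma~\ref{l:different relations} immediately gives $\underline{R}_{O}^{\beta}(A)\subseteq\underline{S}_{O}^{\beta}(A)$ in part (1) and $\overline{S}_{G}^{\beta}(A)\subseteq\overline{R}_{G}^{\beta}(A)$ in part (2). Consequently the entire burden of the proof is to establish the \emph{reverse} inclusions $\underline{S}_{O}^{\beta}(A)\subseteq\underline{R}_{O}^{\beta}(A)$ and $\overline{R}_{G}^{\beta}(A)\subseteq\overline{S}_{G}^{\beta}(A)$, and the displayed hypotheses are precisely what is needed to feed the defining families of the $R$-operators. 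In this sense this proposition is the exact converse of Proposition~\ref{p:4.11}.

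For part (1), I would fix $x\in X$ and abbreviate $\lambda_x=\underline{S}_{O}^{\beta}(A)(x)$. The hypothesis asserts exactly that $\{y:[x_{\lambda_x}]^O_R(y)\le A(y)\}\in\mathscr{F}_{\beta}(X)$, which is the admissibility clause in Definition~\ref{d:model} for the fuzzy granule $[x_{\lambda_x}]^O_R$ built from the relation $R$. Hence $[x_{\lambda_x}]^O_R$ occurs in the union defining $\underline{R}_{O}^{\beta}(A)$, so $[x_{\lambda_x}]^O_R\subseteq\underline{R}_{O}^{\beta}(A)$. I would then test this inclusion at the point $x$ itself.

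The one computation that matters is that this pointwise value collapses to $\lambda_x$. Because $R$ is reflexive we have $R(x,x)=1$, and because $O$ satisfies \textnormal{(O6)} its identity element is $1$ (as recalled in Section~\ref{section2}); therefore $[x_{\lambda_x}]^O_R(x)=O(R(x,x),\lambda_x)=O(1,\lambda_x)=\lambda_x$. Combining the two facts yields $\underline{S}_{O}^{\beta}(A)(x)=\lambda_x=[x_{\lambda_x}]^O_R(x)\le\underline{R}_{O}^{\beta}(A)(x)$, and since $x$ is arbitrary this is the sought reverse inclusion; together with Lemma~\ref{l:different relations} it gives $\underline{S}_{O}^{\beta}(A)=\underline{R}_{O}^{\beta}(A)$.

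Part (2) is the $N$-dual mirror image, which I would run with $G$, $I^G$ and the upper operators in place of $O$, $I_O$ and the lower ones. Setting $\mu_x=\overline{S}_{G}^{\beta}(A)(x)$, the hypothesis makes $[x_{\mu_x}]^G_R$ an admissible granule in the \emph{intersection} defining $\overline{R}_{G}^{\beta}(A)$, so $\overline{R}_{G}^{\beta}(A)\subseteq[x_{\mu_x}]^G_R$; evaluating at $x$ and using $R^N(x,x)=N(R(x,x))=N(1)=0$ together with $0$ being the identity of $G$ gives $[x_{\mu_x}]^G_R(x)=G(0,\mu_x)=\mu_x$, whence $\overline{R}_{G}^{\beta}(A)(x)\le\mu_x=\overline{S}_{G}^{\beta}(A)(x)$ for every $x$. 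I do not expect a genuine obstacle in either part; the only point demanding attention is bookkeeping, namely to keep the subscript value $\underline{S}_{O}^{\beta}(A)(x)$ (resp. $\overline{S}_{G}^{\beta}(A)(x)$) produced by $S$ while forming the granule with the relation $R$, so that the hypothesis legitimately places that granule in the $R$-family rather than the $S$-family.
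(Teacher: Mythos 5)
Your proof is correct, but the mechanism you use for the crucial inclusion $\underline{S}_{O}^{\beta}(A)\subseteq\underline{R}_{O}^{\beta}(A)$ differs from the paper's. The paper fixes $x$, lets $X_i=\{y:[x_{\underline{S}_{O}^{\beta}(A)(x)}]^O_R(y)\le A(y)\}\in\mathscr{F}_{\beta}(X)$, and uses residuation ($O(R(x,y),\lambda)\le A(y)\iff\lambda\le I_O(R(x,y),A(y))$) to get
\begin{align*}
\underline{S}_{O}^{\beta}(A)(x)\le\underset{y\in X_i}{\bigwedge}I_O(R(x,y),A(y))=g^{(i)}_A(x)\le g_A(x)=\underline{R}_{O}^{\beta}(A)(x),
\end{align*}
where the final equality is Proposition~\ref{p:4.8}(1), i.e.\ the $g_A$-characterization of the lower operator, which is where the preorder hypothesis on $R$ enters. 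You instead bypass $g_A$ entirely: the hypothesis certifies $[x_{\lambda_x}]^O_R$ (with $\lambda_x=\underline{S}_{O}^{\beta}(A)(x)$) as an admissible granule in the union of Definition~\ref{d:model}, so $[x_{\lambda_x}]^O_R\subseteq\underline{R}_{O}^{\beta}(A)$, and evaluating at the diagonal point gives $[x_{\lambda_x}]^O_R(x)=O(R(x,x),\lambda_x)=O(1,\lambda_x)=\lambda_x$ by reflexivity and the fact that (O6) forces $1$ to be the identity of $O$. This is precisely the trick the paper itself uses inside the proof of Proposition~\ref{p:4.9}(1) (``$\lambda=[x_\lambda]^O_R(x)\le\cdots$ since $R$ is the fuzzy $O$-preorder relation''), so your route is very much in the spirit of the paper, just applied here instead. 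What each buys: the paper's argument is shorter given the machinery already in place (one chain of inequalities ending in Proposition~\ref{p:4.8}); yours is more self-contained for this direction, needing only Definition~\ref{d:model}, reflexivity of $R$, and the identity element — notably not the $O$-transitivity of $R$ nor the $g_A$-characterization, which are then only consumed by Lemma~\ref{l:different relations} for the opposite inclusion (both proofs use that lemma identically). Your dual treatment of part (2), via $G(0,\mu_x)=\mu_x$ and the intersection in Definition~\ref{d:model}, is likewise sound; the paper simply says part (2) is similar to part (1).
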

\begin{proof}
	\begin{enumerate}[(1)]
		\item
		Let $X_i=\{y:[x_{\underline{S}_{O}^{\beta}(A)(x)}]^O_R(y)\le A(y)\}$, then $X_{i}\in \mathscr{F}_{\beta}(X)$ and for all $x\in X,$
		\begin{align*}
		\underline{S}_{O}^{\beta}(A)(x)\le\underset{y\in X_i}{\bigwedge}I_O(R(x,y),A(y))=g^{(i)}_A(x)\le g_A(x)=\underline{R}_{O}^{\beta}(A)(x).
		\end{align*}
		Hence, we obtain $\underline{S}_{O}^{\beta}(A)=\underline{R}_{O}^{\beta}(A)$ by Lemma~\ref{l:different relations}.
		\item
		The proof is similar as item (1).
	\end{enumerate}
\end{proof}
Combining the two propositions above, the following conclusion holds.
\begin{proposition}\label{p:4.13}
	Let $S,R$ be fuzzy $O$-preorder relations, $S\subseteq R$, overlap function $O$ and grouping function $G$ satisfy \textnormal{(O6)} and \textnormal{(G6)}, respectively. If $O$ and $G$ are dual w.r.t. $N$, then the following statements hold.
	\begin{align*}
	\underline{S}_{O}^{\beta}(A)=\underline{R}_{O}^{\beta}(A)\quad\iff\quad\{y:[x_{\underline{S}_{O}^{\beta}(A)(x)}]^O_R(y)\le A(y)\}\in\mathscr{F}_{\beta}(X),\\
	\overline{S}_{G}^{\beta}(A)=\overline{R}_{G}^{\beta}(A)\quad\iff\quad\{y:A(y)\le [x_{\overline{S}_{G}^{\beta}(A)(x)}]^G_R(y) \}\in\mathscr{F}_{\beta}(X).
	\end{align*}
	Furthermore, when fuzzy sets are taken as crisp sets, the following conclusions hold.
\end{proposition}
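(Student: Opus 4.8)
The plan is to obtain both biconditionals directly by pairing the two preceding propositions, since Proposition~\ref{p:4.11} supplies each forward implication and Proposition~\ref{p:4.12} supplies each converse. Concretely, for the first equivalence I would argue as follows. For the ($\Rightarrow$) direction, suppose $\underline{S}_{O}^{\beta}(A)=\underline{R}_{O}^{\beta}(A)$. Equality of fuzzy sets is pointwise equality, so $\underline{S}_{O}^{\beta}(A)(x)=\underline{R}_{O}^{\beta}(A)(x)$ holds for every $x\in X$; applying Proposition~\ref{p:4.11}(1) at each such $x$ then yields $\{y:[x_{\underline{S}_{O}^{\beta}(A)(x)}]^O_R(y)\le A(y)\}\in\mathscr{F}_{\beta}(X)$ for all $x\in X$. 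For the ($\Leftarrow$) direction, assume this membership condition holds for every $x\in X$; this is exactly the hypothesis of Proposition~\ref{p:4.12}(1), whose conclusion is $\underline{S}_{O}^{\beta}(A)=\underline{R}_{O}^{\beta}(A)$. Thus the two implications together close the first biconditional.

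The second equivalence, involving $\overline{S}_{G}^{\beta}$ and $\overline{R}_{G}^{\beta}$, is handled identically by substituting Proposition~\ref{p:4.11}(2) and Proposition~\ref{p:4.12}(2) for their item-(1) counterparts. Alternatively, since $O$ and $G$ are assumed dual w.r.t. $N$, one could deduce the upper-approximation equivalence from the lower-approximation one by conjugating with $N$ and invoking Proposition~\ref{p:3.3}, which converts $\underline{R}_{O}^{\beta}$ statements about $A$ into $\overline{R}_{G}^{\beta}$ statements about $A^N$; a short computation using the duality $N(O(a,b))=G(N(a),N(b))$ and the involutivity of $N$ checks that the defining set $\{y:[x_{\underline{S}_{O}^{\beta}(A^N)(x)}]^O_R(y)\le A^N(y)\}$ coincides with $\{y:A(y)\le [x_{\overline{S}_{G}^{\beta}(A)(x)}]^G_R(y)\}$, so the membership conditions transport correctly.

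There is no genuine obstacle here: all the substantive content---in particular the use of the fuzzy $O$-preorder structure, the inclusion $S\subseteq R$, and the associativity hypotheses \textnormal{(O6)}/\textnormal{(G6)}---has already been discharged inside Propositions~\ref{p:4.11} and~\ref{p:4.12}. The only points that require care are bookkeeping ones: matching the single-point hypothesis of Proposition~\ref{p:4.11} (stated for a fixed $x$) against the universally quantified hypothesis of Proposition~\ref{p:4.12} (stated for all $x\in X$), and recording that set-equality of the approximations is the same as the family of pointwise equalities. Once these quantifiers are aligned, both biconditionals follow at once.
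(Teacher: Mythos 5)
Your proposal is correct and is exactly the paper's argument: the paper offers no separate proof of Proposition~\ref{p:4.13} beyond the sentence ``Combining the two propositions above, the following conclusion holds,'' i.e.\ Proposition~\ref{p:4.11} gives each forward implication and Proposition~\ref{p:4.12} each converse, just as you argue. Your explicit attention to the quantifier bookkeeping (pointwise equality versus the for-all-$x$ hypothesis) only makes the same argument more careful than the paper's.
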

\begin{proposition}\label{p:4.14}
	Let $S,R$ be fuzzy $O$-preorder relation, $S\subseteq R$, overlap function $O$ and grouping function $G$ satisfy \textnormal{(O6)} and \textnormal{(G6)}. If $O$ and $G$ are dual w.r.t. $N$, for all crisp set $A$, the following statements hold .
	\begin{align*}
	\underline{S}_{O}^{\beta}(A)=\underline{R}_{O}^{\beta}(A)\quad &\iff\quad|\{y:y\notin A,[x_{\underline{S}_{O}^{\beta}(A)(x)}]^O_R(y)=0\}|\ge\beta|X|-|A|,\\
	\overline{S}_{G}^{\beta}(A)=\overline{R}_{G}^{\beta}(A)\quad &\iff\quad |\{y:y\in A, [x_{\overline{S}_{G}^{\beta}(A)(x)}]^G_R(y)=1 \}|\ge|A|+(\beta-1)|X|.
	\end{align*}
\end{proposition}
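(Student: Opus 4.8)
The plan is to obtain both equivalences by specializing Proposition~\ref{p:4.13} to crisp sets and then rewriting the condition ``$\in\mathscr{F}_{\beta}(X)$'' as a cardinality inequality. Since $S\subseteq R$ are fuzzy $O$-preorder relations and $O,G$ satisfy \textnormal{(O6)}, \textnormal{(G6)} and are dual w.r.t.\ $N$, every hypothesis of Proposition~\ref{p:4.13} is met, so both of its biconditionals are at my disposal. The whole argument is therefore a reformulation of the already-established set-membership criterion in the special situation where $A$ takes only the values $0$ and $1$.

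For the lower operator, abbreviate $\lambda=\underline{S}_{O}^{\beta}(A)(x)$. The key observation is that a crisp $A$ has $A(y)\in\{0,1\}$, which lets me split $\{y:[x_{\lambda}]^O_R(y)\le A(y)\}$ according to membership in $A$: the inequality holds automatically on $A$ (where $A(y)=1$) and is equivalent to $[x_{\lambda}]^O_R(y)=0$ off $A$ (where $A(y)=0$). This yields the disjoint decomposition
\begin{align*}
\{y:[x_{\lambda}]^O_R(y)\le A(y)\}=A\cup\{y:y\notin A,\ [x_{\lambda}]^O_R(y)=0\}.
\end{align*}
I would then read off the cardinality, note that membership in $\mathscr{F}_{\beta}(X)$ means this size is at least $\beta|X|$, and rearrange $|A|+|\{y:y\notin A,\ [x_{\lambda}]^O_R(y)=0\}|\ge\beta|X|$ into the advertised bound $|\{y:y\notin A,\ [x_{\underline{S}_{O}^{\beta}(A)(x)}]^O_R(y)=0\}|\ge\beta|X|-|A|$. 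Proposition~\ref{p:4.13} then closes the first equivalence.

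The upper operator is handled dually. With $\mu=\overline{S}_{G}^{\beta}(A)(x)$, the inequality $A(y)\le[x_{\mu}]^G_R(y)$ is automatic off $A$ and reduces to $[x_{\mu}]^G_R(y)=1$ on $A$, so that
\begin{align*}
\{y:A(y)\le[x_{\mu}]^G_R(y)\}=A^c\cup\{y:y\in A,\ [x_{\mu}]^G_R(y)=1\}.
\end{align*}
The condition $\in\mathscr{F}_{\beta}(X)$ then reads $(|X|-|A|)+|\{y:y\in A,\ [x_{\mu}]^G_R(y)=1\}|\ge\beta|X|$, i.e.\ $|\{y:y\in A,\ [x_{\mu}]^G_R(y)=1\}|\ge|A|+(\beta-1)|X|$ after rearranging, and Proposition~\ref{p:4.13} finishes it. I do not anticipate a genuine obstacle here: the only points requiring care are the disjointness of the two pieces in each decomposition and the correct direction of the rearranged inequalities, both of which become immediate once the two-valuedness of a crisp $A$ is exploited.
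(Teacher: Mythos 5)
Your proof is correct and takes essentially the same route as the paper: both specialize Proposition~\ref{p:4.13} to crisp sets via the disjoint decomposition $\{y:[x_{\lambda}]^O_R(y)\le A(y)\}=A\cup\{y:y\notin A,\ [x_{\lambda}]^O_R(y)=0\}$ and then rewrite membership in $\mathscr{F}_{\beta}(X)$ as the stated cardinality bound. The only difference is cosmetic: you carry out the dual decomposition $A^c\cup\{y:y\in A,\ [x_{\mu}]^G_R(y)=1\}$ for the upper operator explicitly, whereas the paper dismisses that case as ``similar.''
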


\begin{proof}\label{pr:4.14}
	For any crisp set $A$, we can conclude that,
	\begin{align*}
	\{y:[x_{\underline{S}_{O}^{\beta}(A)(x)}]^O_R(y)\le A(y)\}=A\bigcup\{y:y\notin A,[x_{\underline{S}_{O}^{\beta}(A)(x)}]^O_R(y)=0\}.
	\end{align*}
	Hence, according to Proposition~\ref{p:4.13} that
	\begin{align*}
	\underline{S}_{O}^{\beta}(A)=\underline{R}_{O}^{\beta}(A)\quad &\iff\quad|\{y:y\notin A,[x_{\underline{S}_{O}^{\beta}(A)(x)}]^O_R(y)=0\}\ge\beta|X|-|A|.
	\end{align*}
	The equivalent expression about $G$ can be proven in a similar way. 	
\end{proof}

\section{Conclusions}\label{section6}
In this paper, a new type of fuzzy rough set model on arbitrary fuzzy relations was defined by using overlap and grouping functions, which called $(O,G)$-GVPFRSs.
Meanwhile, we gave two equivalent expressions of the upper and lower approximation operators applying fuzzy implications and co-implications, which facilitate more efficient calculations.
 In particular, some special conclusions were further discussed, when fuzzy relations and sets degenerated to crisp relations and sets.
 In addition, we characterized the $(O,G)$-GVPFRSs based on diverse fuzzy relations.
Finally, the richer conclusions about $(O,G)$-GVPFRSs were gave under some addtional conditions.
In general, this paper further explored the GVPFRSs from a theoretical perspective based on overlap and grouping functions.

\section*{Acknowledgements}
This research was supported by the National Natural Science Foundation of China (Grant nos. 11901465, 12101500), the Science and Technology Program of Gansu Province (20JR10RA101), the Scientific Research Fund for Young Teachers of Northwest Normal University (NWNU-LKQN-18-28), the Doctoral Research Fund of Northwest Normal University (6014/0002020202) and the Chinese Universities Scientific Fund (Grant no. 2452018054).
\section*{Conflict of interests}
The authors declare that there are no conflict of interests.



\end{document}